\pgfplotsset{compat=1.18}
\newlist{defenum}{enumerate}{1}
\setlist[defenum]{label=\upshape\theassumption.\arabic*}
\newcommand{\myitem}[1]{
\item[#1]\protected@edef\@currentlabel{#1}
}
\newcommand{\mathset}[1]{\left\{#1\right\}}
\newcommand{\pr}[1]{\left(#1\right)}
\newcommand{\br}[1]{\left[#1\right]}
\newcommand{\abs}[1]{\left\lvert#1\right\rvert}
\newcommand{\norm}[1]{\left\|#1\right\|}
\newcommand{\setmid}[0]{\;\middle|\;}
\newcommand{\floor}[1]{\left\lfloor #1 \right\rfloor}
\newcommand{\inner}[2]{\left\langle #1,#2\right\rangle}
\newcommand{\A}[0]{\mathbf{A}}
\newcommand{\I}[0]{\mathbf{I}}
\newcommand{\X}[0]{\mathbf{X}}
\newcommand{\e}[0]{\mathbf{e}}
\newcommand{\zeros}[0]{\mathbf{0}}
\newcommand{\EE}[0]{\mathbb{E}}
\newcommand{\NN}[0]{\mathbb{N}}
\newcommand{\RR}[0]{\mathbb{R}}
\newcommand{\PP}[0]{\mathbb{P}}
\newcommand{\cC}[0]{\mathcal{C}}
\newcommand{\cE}[0]{\mathcal{E}}
\newcommand{\cI}[0]{\mathcal{I}}
\newcommand{\cM}[0]{\mathcal{M}}
\newcommand{\cN}[0]{\mathcal{N}}
\newcommand{\cR}[0]{\mathcal{R}}
\newcommand{\cV}[0]{\mathcal{V}}
\newcommand{\eps}[0]{\varepsilon}
\newcommand{\ha}[0]{\widehat{\alpha}}
\newcommand{\tstar}{t^\star}
\newcommand{\alphastar}{\alpha^\star}
\newcommand{\alphat}[0]{\alpha_t}
\newcommand{\alphatstar}[0]{\alpha_{t^\star}}
\DeclareMathOperator{\LSEOp}{LSE}
\newcommand{\haLSE}[0]{\ha_{{\LSEOp}}}
\newcommand{\Kt}[0]{K_\tau}
\DeclareMathOperator{\diamOp}{diam}
\newcommand{\diam}[0]{{\diamOp}}
\DeclareMathOperator{\locOp}{loc}
\newcommand{\rstar}[0]{r_\star}
\newcommand{\rzero}[0]{r_{0}}
\newcommand{\locEntropy}[0]{M^{\locOp}}
\DeclareMathOperator{\signOp}{sign}
\newcommand{\sign}[1]{\signOp\pr{#1}}
\DeclareMathOperator{\rankOp}{rk}
\newcommand{\rank}[1]{\rankOp\pr{#1}}
\DeclareMathOperator*{\argmax}{arg\,max}
\DeclareMathOperator*{\argmin}{arg\,min}
\newcommand{\ind}[0]{\mathbf{1}}
\DeclareMathOperator{\arcsinh}{arcsinh}
\DeclareMathOperator{\diagOp}{diag}
\newcommand{\diag}[0]{{\diagOp}}
\newcommand{\Rhat}[0]{\widehat{\cR}}
\newcommand{\Rbar}[0]{\overline{\cR}}
\newtheoremstyle{thmstyle}
  {6pt} 
  {2pt} 
  {\itshape} 
  {} 
  {\bfseries} 
  {.} 
  {.5em} 
  {} 
\newtheoremstyle{defstyle}
  {6pt} 
  {2pt} 
  {} 
  {} 
  {\bfseries} 
  {.} 
  {.5em} 
  {} 
\theoremstyle{thmstyle}
\newtheorem{theorem}{Theorem}
\newtheorem*{theorem*}{Theorem}
\newtheorem{proposition}[theorem]{Proposition}
\newtheorem{lemma}{Lemma}
\newtheorem{corollary}{Corollary}
\theoremstyle{defstyle}
\newtheorem{assumption}{Assumption}
\renewcommand*{\theassumption}{\Alph{assumption}}
\newtheorem{definition}{Definition}
\theoremstyle{remark}
\newtheorem{remark}{Remark}
\title{Sharp Risk Bounds for Early-Stopping in Gaussian \\ Linear Regression}
\author[1]{Tobias Wegel\thanks{Corresponding author: \texttt{twegel@ethz.ch}.}}
\author[1]{Gil Kur}
\author[2]{Patrick Rebeschini}
\affil[1]{Department of Computer Science, ETH Zürich, Switzerland}
\affil[2]{Department of Statistics, University of Oxford, UK}
\date{\today}
\begin{document}

\maketitle

\begin{abstract}
    \noindent 
    We study early-stopped mirror descent (ESMD) for high-dimensional Gaussian linear regression over arbitrary convex bodies and design matrices, where the task is to minimize the in-sample mean squared error. Our main result shows that some of the sharpest risk bounds for the least squares estimator (LSE), based on the local Gaussian width, extend to ESMD.
We derive sufficient conditions on the potential, expressed via the Minkowski functional, under which our result holds. These conditions allow us to construct new potentials and analyze existing ones. Our results then yield general sufficient conditions for minimax optimality of ESMD, provide a systematic comparison with the LSE, and establish the tightest known risk bound in the $\ell_1$-constrained setting.
\end{abstract}

\section{Introduction}

Regularization methods generally fall into two categories: \emph{Explicit} regularization, where the learning objective is altered to reduce model complexity via constraints or penalty terms, and \emph{implicit} regularization, where the optimization solver inherently controls model complexity via algorithmic primitives and parameter tuning. A commonly used implicit regularization technique is to stop iterative optimization algorithms before convergence. This is called \emph{early-stopping} or \emph{iterative regularization}, and has been investigated for different settings and algorithms such as linear and kernel regression with coordinate descent \citep{Hastie2001statisticallearning, Efron2004Least, Rosset2004boosting, Zhang2005boosting}, gradient descent \citep{Ali2019continuous,Buhlmann2003boosting,Yao2007,Raskutti2011early,Bauer2007regularization}, primal-dual gradient methods \citep{Molinari2020iterativeregularizationconvexregularizers,Molinari2024iterative} and algorithms based on factorized parameterizations \citep{Gunasekar2017Implicit,Li2018Algorithmic,Vaskevicius2019ImplicitRegularization,Zhao2022high}.
One benefit of iterative regularization is the simultaneous study of modeling and numerical aspects; often iterative regularization improves computational efficiency while retaining good statistical performance \citep{Yao2007}.

A recurring approach to understanding iterative regularization methods is to tie them to ``corresponding'' explicit regularization methods or constraint geometries.
Usually, the stopping time then takes the role of regularization strength, analogous to the (inverse) coefficient of the penalty in explicit regularization. 
For example, coordinate descent has been shown to be related to explicit $\ell_1$-regularization  \citep{Hastie2001statisticallearning, Efron2004Least, Rosset2004boosting, Zhang2005boosting} and gradient descent has been shown to trace the path of ridge regularization \citep{Bauer2007regularization,Ali2019continuous}. 
However, most results on early-stopping fall into at least one of three categories: either the risk bounds are \emph{un}localized (e.g., for online mirror descent \citep{Shalev2007online,bach2024learning}), they only hold in the low-dimensional regime (e.g., \citet{Suggala2018connecting}), or they use tools (such as spectral analysis) that only apply to specific geometries like $\ell_2$ or Hilbert spaces (e.g., \citet{Wei2018early,Ali2019continuous,Yao2007}).
In particular, a sharp localized analysis of early-stopping for general geometries in high dimensions seems to be lacking.
Since the role of the geometry is particularly important to circumvent the curse of dimensionality, sharp localized risk bounds in this setting are of particular interest.

In this work, we address this gap in the high-dimensional linear regression setting using the framework introduced by \cite{Kanade2023earlystopped}. In this setting, we observe a design matrix $\X\in\RR^{n\times d}$ (with possibly $d \gg n$) and $n$ random responses from the linear model
\begin{equation}
    y=\X\alphastar+\xi\in \RR^n,
\end{equation}
with a linear ground-truth function parametrized by $\alphastar\in \RR^d$ and additive i.i.d.\ Gaussian noise $\xi\sim \cN(0,\I_{n})$.
We assume that the ground truth satisfies the shape constraint given by
\begin{equation}
\label{eq:convex-body}
    \quad \alphastar \in K_\tau := \tau K=\mathset{\tau\alpha\setmid \alpha\in K}\subset\RR^d,
\end{equation}
where $\tau>0$ is a ``radius'' and $K\subset\RR^d$ is any convex body, that is, it is convex, compact and the origin is contained in its interior.
We define the empirical and in-sample prediction risk using squared loss, respectively, as
\begin{align*}
     \Rhat(\ha):=\frac{1}{n}\norm{\X\ha-y}_2^2, 
     \quad \cR(\ha):= \frac{1}{n}\norm{\X(\ha-\alphastar)}_2^2.
\end{align*}
Under these assumptions, the aim of a predictor $\ha\equiv \ha(\X,y)\in\RR^d$ is to achieve minimal in-sample risk using the observations $\X$ and $y$, as well as knowledge of the convex body $K$ and the radius $\tau$. The knowledge of $\tau$ is not always required, which we specify in those instances. Notice that this setting can be viewed as a Gaussian sequence model over the convex constraints $\X \Kt$, see \citet{johnstone2017gaussian} for a detailed account.
Our results are restricted to in-sample prediction (also known as fixed design), see e.g., \citet[Sec. 14.1]{Wainwright2019} or \citet[Chp. 3]{bach2024learning} for a discussion of the differences to random design.

\paragraph{Notation.} The Bregman divergence of a strictly convex and differentiable function $\psi:\RR^d\to\RR$ is defined as 
$$D_\psi(\alpha,\alpha')=\psi(\alpha)-\psi(\alpha')-\inner{\nabla\psi(\alpha')}{\alpha-\alpha'}$$
for $\alpha,\alpha'\in\RR^d$.
We denote the Bregman ball as $B_\psi(\alpha',r)=\mathset{\alpha\in\RR^d\setmid D_\psi(\alpha',\alpha)\leq r}$
and $\ell_p$-norm balls as $B_p^d=\{\alpha\in\RR^d | \norm{\alpha}_p\leq 1\}$.
We write $a\lesssim b$ if there is a constant $C>0$ such that $a\leq Cb$, $a\asymp b$ if $a\lesssim b \lesssim a$, and $a\wedge b=\min\mathset{a,b}$, $a\vee b=\max\mathset{a,b}$. We use Minkowsi sum notation throughout: for a matrix $\X\in\RR^{n\times d}$, sets $K,K'\subset \RR^d$ and $v\in\RR^d$ we write $\X K:= \{\X\alpha\mid \alpha\in K\}$, $K+v := \{\alpha+v \mid \alpha\in K\}$ and $K+K':=\mathset{\alpha+\alpha':\alpha\in K, \alpha'\in K'}$.

\subsection{Local Gaussian Width and the LSE}

A natural and well-studied estimator in this setting is the constrained \emph{Least Squares Estimator} (LSE), also known as the maximum likelihood estimator. It is defined by minimizing the empirical risk over the constraint set from \eqref{eq:convex-body}, 
\begin{equation}
\label{eq:constrained-least-squares}
    \haLSE\in\argmin_{\alpha\in \Kt} \Rhat(\alpha).
\end{equation}
The predictions of the LSE on the sample $\X$ are given by the orthogonal projection of $y$ onto $\X \Kt=\mathset{\X\alpha\setmid \alpha\in \Kt}$, which is unique by the convexity of $\X \Kt$. Hence, while the minimizer in \cref{eq:constrained-least-squares} is not necessarily unique (especially in the high-dimensional setting where $d> n$), its predictions on the sample are, and consequently, we do not need to distinguish between the minimizers any further.

The \emph{Gaussian width} (cf.\ \citet{Vershynin2018}) of a set $K\subset \RR^d$ is defined with a Gaussian vector $\xi\sim \cN(0,\I_{d})$ as $$w(K):=\EE_\xi\br{\sup_{\theta\in K}\inner{\xi}{\alpha}}.$$
In his seminal paper, \cite{Chatterjee2014} showed that the risk of the LSE concentrates sharply around a \emph{critical radius} that maximizes a function of the \emph{local Gaussian width}:
\begin{definition}
\label{def:critical-radius}
For $\alpha\in K\subset\RR^d$, define the function $f_{\alpha,K}:[0,\infty)\to [-\infty,\infty)$ as $$f_{\alpha,K} (r) := w\pr{ (K-\alpha) \cap rB^d_2}- \frac{r^2}{2}.$$
The \emph{critical} and the \emph{stationary radius} of a set $K\subset\RR^d$ around a point $\alpha\in K$ are defined, respectively, as
\begin{equation}
\label{eq:radii-definition}
    \begin{aligned}
        \rstar(\alpha,K):=\argmax_{r\geq 0}f_{\alpha,K}(r) \quad \text{and} \quad \rzero(\alpha,K):= \inf\mathset{r> 0 \setmid f_{\alpha,K}(r)\leq 0}.
    \end{aligned}
\end{equation}
We denote the maximal critical and stationary radii on $K$ as
\begin{align*}
    \rstar(K):= \sup_{\alpha\in K} \rstar(\alpha,K) \quad \text{and} \quad  \rzero(K):= \sup_{\alpha\in K} \rzero(\alpha,K).
\end{align*}
\end{definition}
\noindent Specifically, in our notation, \cite{Chatterjee2014} showed that $\cR(\haLSE)$ concentrates sharply around $\rstar^2(\X\alphastar,\X\Kt)/n$, up to constant factors. Multiple other works such as \cite{Bellec2016sharporacleinequalitiessquares,Prasadan2024some} bound $\cR(\haLSE)$ in terms of the stationary radius $\rzero^2(\X\alphastar,\X\Kt)$ instead, yielding tight leading constants.
Importantly, the stationary radius $\rzero(\alpha,K)$ can always be bounded by solving
\begin{equation}
\label{eq:localized-inequality}
    w\pr{(K-\alpha) \cap r B^d_2}\leq \frac{r^2}{2}
\end{equation}
for $r\geq 0$, where one trivial solution of \eqref{eq:localized-inequality} is always given by $r^2=2w(K)$.
Moreover, the stationary radius is an upper bound on the critical radius
\citep[Proposition 1.3]{Chatterjee2014}.
To summarize, if $r\geq 0$ is a solution to \eqref{eq:localized-inequality} for a convex body $K$ and $\alpha\in K$, it holds that 
\begin{equation}
\label{eq:radii-comparison}
    \rstar^2(\alpha,K)\leq \rzero^2(\alpha,K)\leq \min\mathset{r^2,2w(K)}.
\end{equation}

\subsection{Overview of Contributions}
\label{subsec:contributions}

We study \emph{mirror descent} \citep{Nemirovski1983}, which is a gradient-based iterative optimization method that generalizes gradient descent to different geometries.
\begin{definition}
\label{def:continuous-mirror-descent}
    Let $\psi:\RR^d\to \RR$ be twice differentiable and have positive definite Hessian $\nabla^2\psi$ everywhere. 
    Unconstrained \emph{continuous-time} \emph{mirror descent} using $\psi$ and initialized at $\alpha_0\equiv 0\in\RR^d$ is defined through the ODE
    \begin{equation}
        \label{eq:continuous-mirror-descent-definition}
        \frac{d}{dt}\alphat = -\pr{\nabla^2\psi(\alphat)}^{-1}\nabla \Rhat(\alphat).
    \end{equation}
\end{definition}
\begin{definition}
\label{def:discrete-mirror-descent}
Let $\psi:\RR^d\to \RR$ be  differentiable, strictly convex and let the gradient of $\psi$ be surjective, i.e., $\mathset{\nabla\psi(\alpha)\setmid \alpha\in\RR^d}=\RR^d$. 
Unconstrained \emph{discrete-time mirror descent} using $\psi$, initialization $\alpha_0\equiv 0\in\RR^d$ and fixed step-size $\eta>0$, is defined through the recursion
\begin{equation}
\label{eq:discrete-mirror-descent-definition}
    \nabla\psi (\alpha_{t+1})=\nabla\psi(\alphat)-\eta \nabla \Rhat(\alphat).
\end{equation}
\end{definition}
\noindent In both cases, the function $\psi$ is called the \emph{mirror map} or \emph{potential} of the mirror descent algorithm. If there are multiple minimizers of the empirical risk, the potential determines which of them mirror descent converges to \citep{Gunasekar2018}. More generally, it determines the optimization path, see \cref{subsec:risk-along-optimization-path} on page \pageref{subsec:risk-along-optimization-path} for a visualization.
We remark that continuous-time mirror descent is also referred to as Riemannian gradient flow \citep{Gunasekar2021mirrorless}. Stopping mirror descent before convergence, that is, using $\alphatstar$ for some $\tstar>0$ as the estimator is called \emph{Early-Stopped Mirror Descent} (ESMD).

So far, an analysis akin to the works of \citet{Chatterjee2014} and \citet{Bellec2016sharporacleinequalitiessquares} has eluded early-stopped mirror descent. In this work, we close this gap and show that a risk bound almost identical to the one from \cite{Bellec2016sharporacleinequalitiessquares} also applies to ESMD, provided the potential is appropriately chosen based on $K$ and $\tau$. This bound holds for \emph{any} convex constraints and \emph{any} design matrix, and importantly, in the high-dimensional setting ($d\gg n$).

We summarize our main contributions below.
\begin{itemize}[leftmargin=*,itemsep=2pt,topsep=2pt]
    \item We prove a tight bound on the in-sample risk of ESMD in terms of the stationary radius (\cref{thm:in-sample-risk-bound}). As a consequence, we provide sufficient conditions under which the worst-case risk of ESMD is bounded by that of the LSE (\cref{cor:comparison-LSE}) and for minimax optimality (\cref{cor:minimax-optimality}). 
    \item Using the Minkowski functional of the convex body, we provide sufficient conditions on the optimization potential for our bound to apply (\cref{ass:sufficient-conditions-psi}). We use these conditions for developing new (and analyzing existing) potentials in several examples (\cref{sec:applications}). 
    \item We apply our risk bounds to $\ell_p$-norm balls with $p\in [1,2)$ as well as general $M$-convex hulls and derive sharp statistical rates (\cref{sec:applications}). We accompany our bound for $p\in(1,2)$ with a matching minimax lower bound for column-normalized fixed design matrices. For $\ell_1$-constraints, our bound improves upon the best known bounds, demonstrating the benefits of our tight analysis.
\end{itemize}

\section{Related Work}
\label{sec:related-work}

\paragraph{Constrained Least Squares.} The statistical performance of the LSE under convex constraints is well-studied, for example, in \cite{Birge1993rates,Vershynin2015estimation,Bellec2016sharporacleinequalitiessquares,Plan2017high,kur2023variance}. 
Tight estimates of the local Gaussian width of specific convex bodies are established, for instance, in
\citet{Gordon2007,Bellec2017localizedgaussianwidthmconvex}.
The minimax rates of the Gaussian sequence model under convex constraints are characterized exactly in \cite{Neykov2022minimax}, and the minimax sub-optimality of the LSE for certain constraints has been described in \cite{Prasadan2024some}. 
When the convex body is an $\ell_1$-norm ball, LSE recovers the LASSO \citep{Tibshirani1996lasso} in constrained form, which has been studied extensively in \citet{Candes2005,Bunea2007,Ye2010rate,Buhlmann2011, Pathak2024design}.

\paragraph{Mirror Descent.} Mirror descent as an optimization procedure is well-studied \citep{Beck2003mirror,Shalev2007online,Agarwal2012information}. The implicit bias of mirror descent in the overparameterized regime is studied in \cite{Gunasekar2018, Sun2023unified}.
The generalization properties of \emph{online} mirror descent have been studied extensively \citep{Shalev2007online,Orabona2023modern, Lattimore2024bandit,bach2024learning}.
Notable instances of this are \cite{srebro2011universality,Levy2019necessary,gatmiry2024computing}, who also relate the potential to the geometry of the constraint set, even showing a ``universality'' of online mirror descent. While these bounds can be minimax optimal, they are usually not \emph{local} (with a few exceptions \citep{rakhlin2013localization}), or only give guarantees on aggregated (e.g., averaged) iterates, rather than an early-stopped iterate.

\paragraph{Local risk bounds for early-stopping.}
The application of localized complexity measures to iterative regularization methods is scarce in the literature \citep{Yao2007,Raskutti2011early}. While local Gaussian width appears in \citet{Wei2018early}, their results only apply to (unconstrained) RKHS. 
One could also obtain local risk bounds by directly tying mirror descent to explicit regularization paths, but known results either require strong convexity of the empirical risk, which is necessarily violated in high dimensions \citep[Section 4]{Suggala2018connecting}, or apply only to the $\ell_2$ geometry \citep{Ali2019continuous}.
In \cite{Kanade2023earlystopped}, a general analysis using offset Rademacher complexities is introduced; our work explicitly builds on their framework.

\section{Main Results}
\label{sec:main-result}

We now provide a list of sufficient conditions for the mirror descent potential based on the Minkowski functional of the convex body \citep{Bonnesen_1934}.

\begin{definition}
\label{def:Minkowski-functional}
For a convex body $K\subset\RR^d$, the function $\varphi_K:\RR^d\to\RR$, defined as
\begin{equation*}
    \varphi_K(\alpha):=\inf\mathset{\tau>0\setmid \alpha\in \tau K}
\end{equation*}
is called the \emph{Minkowski functional} of $K$ (also referred to as distance or gauge function).
\end{definition}
The Minkowski functional is a norm if and only if $K$ is centrally symmetric (i.e., $K = -K$). In that case, we use the notation $\norm{\cdot}_{K}$ rather than $  \varphi_K(\cdot)$.
Any convex body $K\subset \RR^d$ that contains the origin in its interior 
can be written as $K=\mathset{\alpha\in\RR^d\setmid \varphi_K(\alpha)\leq 1}$.
Noticably, convexity of $\Kt$ and strong Lagrange duality imply that we can rewrite the LSE in unconstrained form as 
\[
\haLSE \in \argmin_{\alpha\in\RR^d} \mathset{\Rhat(\alpha) + \lambda_n(\tau)\varphi_K(\alpha) }
\]
for some data-dependent, not necessarily computable regularization strength $\lambda_n(\tau) \geq 0$.
The conditions on the potential we formulate below then depend on whether discrete-time or continuous-time mirror descent is used.
\begin{assumption}
\label{ass:sufficient-conditions-psi}
    The potential $\psi:\RR^d \to [0,\infty)$ satisfies:
    \begin{defenum}[leftmargin=*,itemsep=0pt,topsep=2pt]
        \myitem{(I)} \label{ass:suff-1} For continuous-time MD, $\psi$ is twice differentiable, and in discrete-time, $\psi$ is differentiable.
        In both cases, the gradient of $\psi$ vanishes at zero, that is, $\nabla \psi(0)=0$.
        \myitem{(II)} \label{ass:suff-2} The square-root $\sqrt{\psi}$ is convex.
        \myitem{(III)} \label{ass:suff-3} In discrete time, $\psi$ is $\rho$-strongly convex with respect to some norm, and in continuous time, it is strictly convex (and in paricular, satisfies \cref{def:discrete-mirror-descent,def:continuous-mirror-descent}, respectively).
        \myitem{(IV)} \label{ass:suff-4}There exist constants $c_l,c_u>0$ independent of all other parameters, such that
        \begin{align*}
        \forall \alpha\in\RR^d&:\quad  \varphi_K(\alpha)\leq c_l \sqrt{\psi(\alpha)}, \\ 
        \forall \alpha\in \Kt&:\quad  \sqrt{\psi(\alpha)}\leq c_u \tau.
    \end{align*}
    \end{defenum}
\end{assumption}

\noindent Throughout this paper, we denote the ``approximation constant'' $c_a = c_l\cdot c_u$.
We remark that \cref{ass:sufficient-conditions-psi} is loosely connected to the notion of $\kappa$-regularity from \cite[Def. 2.1]{Juditsky2008large}.

If the squared Minkowski functional $\varphi_K^2$ satisfies \ref{ass:suff-1} and \ref{ass:suff-3}, and since it satisfies \ref{ass:suff-2} and  \ref{ass:suff-4} by definition, we could simply choose $\psi=\varphi_K^2$. 
For example, for any vector $v\in (1/2)B_2^d$ and $K=B_2^d-v$,
it is easily verified that $\varphi_K^2$ is twice differentiable with vanishing gradient at zero \ref{ass:suff-1} and $\varphi_K^2$ is $2/(1+\norm{v}_2)^2$-strongly convex \ref{ass:suff-3}. Notably, in this example $\varphi_K$ is not a norm, as the convex body is not centrally symmetric.

However, in many interesting cases the squared Minkowski functional $\varphi_K^2$ is not smooth or not strongly convex (for example, the $\ell_1$-norm), and we need to approximate it with a different function. This is possible for all $K$, as we now show in \cref{lem:existence}. Specifically, we can smoothen and ``strongly convexify'' $\varphi_K^2$: To that end, we denote the Moreau envelope \citep{Moreau1965proximite} of a closed and proper convex function $f$ with $\lambda>0$ as
\begin{equation*}
    (\cM_{\lambda} f)(\alpha)=\inf_{\alpha'\in\RR^d}\mathset{f(\alpha')+\frac{1}{2\lambda}\norm{\alpha-\alpha'}_2^2}.
\end{equation*}

\begin{lemma}
    \label{lem:existence}
    For any convex body $K\subset \RR^d$ that contains the origin in its interior, there exists a potential $\psi$ that satisfies both the continuous and discrete-time versions of \cref{ass:sufficient-conditions-psi} with approximation constant $c_a=4$ and some $\rho>0$. 
    Furthermore, for $\rho=2/(\max_{\alpha\in K}\norm{\alpha}_2^2)$ and sufficiently small $\lambda >0$ independent of $\tau$, the potential
    \begin{equation*}
        \psi(\alpha)=(\cM_{\lambda }\varphi_K^2)(\alpha) + \frac{\rho}{2}\norm{\alpha}_2^2 
    \end{equation*}
    satisfies the discrete-time version of \cref{ass:sufficient-conditions-psi} with approximation constant $c_a = 4$.
\end{lemma}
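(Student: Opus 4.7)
The strategy is to verify \ref{ass:suff-1}--\ref{ass:suff-4} in the discrete-time version for the explicit construction $\psi(\alpha)=(\cM_\lambda \varphi_K^2)(\alpha)+\tfrac{\rho}{2}\norm{\alpha}_2^2$ with $\rho=2/(\max_{\beta\in K}\norm{\beta}_2^2)$ and $\lambda$ sufficiently small, and then obtain the continuous-time version by a further smoothing step. The main obstacle is condition \ref{ass:suff-2}: the convexity of $\sqrt{\psi}$ is nontrivial because the sum of two ``squared sublinear'' functions need not be of that form in general, and $\cM_\lambda\varphi_K^2$ is not manifestly a squared sublinear function either.

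\paragraph{The easy conditions \ref{ass:suff-1}, \ref{ass:suff-3}, \ref{ass:suff-4}.} By the standard theory of Moreau envelopes, $\cM_\lambda\varphi_K^2$ is convex and $C^{1,1}$ with $\nabla(\cM_\lambda\varphi_K^2)(\alpha)=(\alpha-\mathrm{prox}_{\lambda \varphi_K^2}(\alpha))/\lambda$; since $0$ is the unique minimizer of $\varphi_K^2$, $\mathrm{prox}_{\lambda\varphi_K^2}(0)=0$, so $\nabla\psi(0)=0$ and \ref{ass:suff-1} holds. Adding $\tfrac{\rho}{2}\norm{\cdot}_2^2$ yields \ref{ass:suff-3} with modulus $\rho$ w.r.t. $\norm{\cdot}_2$. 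For \ref{ass:suff-4}, if $\alpha\in \Kt$ then $\varphi_K^2(\alpha)\le\tau^2$; choosing $\alpha'=\alpha$ in the infimum gives $\cM_\lambda\varphi_K^2(\alpha)\le\tau^2$, and the choice of $\rho$ ensures $\tfrac{\rho}{2}\norm{\alpha}_2^2\le\tau^2$, hence $\psi(\alpha)\le 2\tau^2$ and $c_u=\sqrt{2}$. For the lower bound, let $\alpha^\sharp=\mathrm{prox}_{\lambda \varphi_K^2}(\alpha)$; using $0\in\intOp(K)$ pick $r>0$ with $rB_2^d\subset K$ so that $\varphi_K(\beta)\le\norm{\beta}_2/r$, and use subadditivity:
\begin{equation*}
  \varphi_K(\alpha)\le \varphi_K(\alpha^\sharp)+\varphi_K(\alpha-\alpha^\sharp)\le \sqrt{\varphi_K^2(\alpha^\sharp)} + \tfrac{1}{r}\norm{\alpha-\alpha^\sharp}_2\le \pr{1+\tfrac{\sqrt{2\lambda}}{r}}\sqrt{\cM_\lambda\varphi_K^2(\alpha)},
\end{equation*}
since both $\varphi_K^2(\alpha^\sharp)$ and $\tfrac{1}{2\lambda}\norm{\alpha-\alpha^\sharp}_2^2$ are bounded by $\cM_\lambda\varphi_K^2(\alpha)\le\psi(\alpha)$. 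For $\lambda$ small enough (independently of $\tau$) this gives $c_l\le 2\sqrt{2}$, so $c_a=c_l c_u\le 4$.

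\paragraph{The main obstacle: \ref{ass:suff-2}.} I would use the following closure property: \emph{if $f,g\ge 0$ are such that $\sqrt{f}$ and $\sqrt{g}$ are convex, then $\sqrt{f+g}$ is convex}. Indeed, writing $a=\sqrt{f(x)},b=\sqrt{f(y)},c=\sqrt{g(x)},d=\sqrt{g(y)}$, convexity of $\sqrt{f},\sqrt{g}$ and the triangle inequality for the Euclidean norm of $(u,v)$ yield $\sqrt{(f+g)(\lambda x+(1-\lambda)y)}\le\sqrt{(\lambda a+(1-\lambda)b)^2+(\lambda c+(1-\lambda)d)^2}\le\lambda\sqrt{a^2+c^2}+(1-\lambda)\sqrt{b^2+d^2}$. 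Applied with $g=\tfrac{\rho}{2}\norm{\cdot}_2^2$ (for which $\sqrt{g}$ is a scaled norm), it remains to show $\sqrt{\cM_\lambda\varphi_K^2}$ is convex. Since $\sqrt{\cdot}$ is monotone and commutes with infima of non-negative quantities,
\begin{equation*}
  \sqrt{\cM_\lambda \varphi_K^2(\alpha)}=\inf_{\alpha'\in\RR^d}\sqrt{\varphi_K^2(\alpha')+\tfrac{1}{2\lambda}\norm{\alpha-\alpha'}_2^2},
\end{equation*}
and the integrand is convex in $(\alpha,\alpha')$ by the same closure property, applied to the two non-negative convex functions $(\alpha,\alpha')\mapsto\varphi_K(\alpha')$ and $(\alpha,\alpha')\mapsto\norm{\alpha-\alpha'}_2/\sqrt{2\lambda}$. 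Taking the infimum over $\alpha'$ preserves convexity in $\alpha$, which completes \ref{ass:suff-2} and hence the discrete-time claim.

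\paragraph{From discrete to continuous time.} To also satisfy the continuous-time version, I would smooth $\psi$ by convolving with a smooth, radially symmetric mollifier $\phi_\varepsilon$ of scale $\varepsilon>0$, obtaining $\tilde\psi=\psi\ast\phi_\varepsilon$; this preserves convexity, makes $\tilde\psi$ infinitely differentiable, preserves strict convexity (the $\tfrac{\rho}{2}\norm{\cdot}_2^2$ part survives up to an additive constant), and perturbs $\psi$ by $O(\varepsilon)$ uniformly on $\Kt$, so for $\varepsilon$ sufficiently small the constants in \ref{ass:suff-4} still satisfy $c_a\le 4$ by the slack in the discrete-time estimate. Convexity of $\sqrt{\tilde\psi}$ follows by writing $\tilde\psi(\alpha)=\int\psi(\alpha-\beta)\phi_\varepsilon(\beta)d\beta$ and iterating the same Minkowski-type closure property (approximating the integral by finite sums of non-negatively weighted translates of $\psi$, each of which has convex square root). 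A small recentering (composing with $\alpha\mapsto\alpha-\alpha_0$ for $\alpha_0$ the minimizer of $\tilde\psi$) restores $\nabla\tilde\psi(0)=0$, finishing the existence claim of the lemma.
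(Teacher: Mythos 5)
Your discrete-time verification is correct, and in two places it takes a genuinely different route from the paper. For \ref{ass:suff-2}, the paper simply cites \citet[Theorem 5.6]{Planiden2019proximal} (the Moreau envelope of a squared gauge is again a squared gauge); you instead exchange $\sqrt{\cdot}$ with the infimum and observe that $(\alpha,\alpha')\mapsto\sqrt{\varphi_K^2(\alpha')+\tfrac{1}{2\lambda}\norm{\alpha-\alpha'}_2^2}$ is jointly convex, so partial minimization yields convexity of $\sqrt{\cM_\lambda\varphi_K^2}$ — a self-contained argument. For the lower bound in \ref{ass:suff-4}, the paper argues via $\sqrt{\cM_\lambda\varphi_K^2}\leq\varphi_K$, pointwise convergence as $\lambda\to0$, compactness of $\{\varphi_K=1\}$ and homogeneity of the envelope's square root; your prox-plus-subadditivity bound $\varphi_K(\alpha)\leq(1+\sqrt{2\lambda}/r)\sqrt{\cM_\lambda\varphi_K^2(\alpha)}$ (with $r$ the inradius) is more quantitative, gives an explicit admissible $\lambda_0$ depending only on $K$ and not on $\tau$, and avoids the homogeneity fact altogether. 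Conditions \ref{ass:suff-1}, \ref{ass:suff-3} and the upper bound in \ref{ass:suff-4} are handled as in the paper, and your constants $c_u=\sqrt2$, $c_l\leq2\sqrt2$ indeed give $c_a\leq4$.

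The gap is in the continuous-time existence claim. The paper does not mollify: it invokes the classical fact that for every $\delta>0$ there is a convex body $K_\delta$ with $K\subset K_\delta\subset K+\delta B_2^d$ whose squared Minkowski functional is twice continuously differentiable with vanishing gradient at the origin, and takes $\psi=\varphi_{K_\delta}^2+\tfrac{\rho}{2}\norm{\cdot}_2^2$; this gauge-level smoothing preserves homogeneity and gives $\nabla\psi(0)=0$ exactly. Your convolution-plus-recentering step does not, as written, deliver \ref{ass:suff-1} and \ref{ass:suff-4} simultaneously with absolute constants. Since $\psi$ is not symmetric, mollification generally gives $\nabla\tilde\psi(0)\neq0$, so you translate to the minimizer $\alpha_0$ of $\tilde\psi$; but the lower bound in \ref{ass:suff-4} must hold on \emph{all} of $\RR^d$, in particular at points where both $\varphi_K$ and the potential are of order $\varepsilon$, and there an additive $O(\varepsilon)$ shift of the argument does not preserve a multiplicative inequality with a universal constant. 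Concretely, at $\alpha=-\alpha_0$ the available estimates give $\sqrt{\tilde\psi(0)}\asymp\sqrt{\rho}\,\varepsilon$ while $\varphi_K(-\alpha_0)$ can only be controlled by $\norm{\alpha_0}_2/r\lesssim\varepsilon/(\lambda\rho r)$ (via the Lipschitz constant $1/\lambda+\rho$ of $\nabla\psi$), so the constant one obtains depends on $K$ and $\lambda$ rather than being absolute; your appeal to ``slack on $\Kt$'' only controls the upper bound, not this global lower bound near the shifted origin. (Note that \emph{before} recentering the lower bound would survive, since convolving a convex function with a centered mollifier can only increase it by Jensen's inequality; it is precisely the recentering needed to restore $\nabla\psi(0)=0$ that reopens the problem.) To salvage your route you would need a smoothing that respects the gauge structure — which is exactly what the paper's smooth approximation of the body provides — or a substantially sharper control of $\alpha_0$.
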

\noindent We prove \cref{lem:existence} in \cref{subsec:proof-existence} using results from \cite{Planiden2019proximal}. We note that in \cref{lem:existence}, the potentials do not require any knowledge of the radius. Beyond Moreau-smoothing, other smoothing methods, such as the Polar envelope \citep{Friedlander2019polar},
or infimal convolution smoothing \citep{Beck2012smoothing} could be viable options instead. Moreover, there are applications where other potentials that are tailored to the convex body may be more suitable, as we will see in \cref{sec:applications}.

\subsection{A Localized Gaussian Width Risk Bound}
By \cref{lem:existence}, \cref{ass:sufficient-conditions-psi} is always non-vacuous, which leads us to the following theorem; our main result.
\begin{theorem}
\label{thm:in-sample-risk-bound}
    Let $\alpha_0=0\in\RR^d$ and let $\mathset{\alphat}_{t\geq 0}$ be the continuous or discrete-time mirror descent updates on $\Rhat$ using some $\psi$ that satisfies \cref{ass:sufficient-conditions-psi}.
    In the discrete-time case, let $\psi$ be $\rho$-strongly convex and $\Rhat$ be $\beta$-smooth with respect to the same norm, and let the step-size satisfy
    $\eta\leq \frac{\rho}{\beta}\wedge\frac{D_{\psi}(\alphastar,0)}{2}$.
    Then, for any $\eps>0$ and $T:=c_u^2\tau^2 /\eps$ in continuous time and $T:= \lceil 2c_u^2\tau^2 /(\eps\eta)\rceil$
    in discrete time,
    \begin{equation}
    \label{eq:in-sample-risk-bound}
        \min_{0\leq t\leq T}\cR(\alphat)\leq  \frac{2\rzero^2(\X\alphastar,\X K_{3c_a\tau})}{n}+\frac{4\log(1/\delta)}{n}+\eps
    \end{equation}
    with probability at least $1-\exp(-0.1 n)-\delta$ over draws of the noise $\xi$. Moreover, in continuous time, we have that
    \begin{equation}
    \label{eq:in-sample-risk-bound-expectation}
        \EE_\xi\br{\min_{0\leq t\leq T} \cR(\alphat)}\leq  \frac{2\rzero^2(\X\alphastar,\X K_{3c_a\tau})}{n}+\frac{4}{n}+\eps.
    \end{equation}
\end{theorem}
\noindent The proof can be found in \cref{subsec:proof-in-sample-risk-bound} and is outlined in \cref{subsec:proof-outline}. Throughout this paper, we choose $\eps>0$ to balance the right-hand side of \eqref{eq:in-sample-risk-bound}, respectively \eqref{eq:in-sample-risk-bound-expectation}, and we denote the oracle optimal stopping time as
\begin{equation*}
    \tstar:= \argmin_{0\leq t\leq T}\cR(\alphat).
\end{equation*}
We would like to stress that this stopping time can depend on the noise and the ground truth, and hence is not necessarily computable. However, $\tstar\leq T$ quantifies the maximal number of iterations necessary to achieve the statistical complexity. Note that in the case of discrete time, the strong convexity parameter does not influence the bound in \eqref{eq:in-sample-risk-bound}, however, it impacts the bound on the stopping time.

\begin{remark}
\label{rem:bound-rank}
It is easily shown (\cref{subsec:proof-bound-rank}) that for any convex body we can bound the stationary radius as
\begin{equation}
\label{eq:bound-rank}
    \rzero^2(\X K) \leq 4\rank{\X} \leq 4\min\mathset{n,d},
\end{equation}
where $\rank{\X}$ denotes the rank of $\X$.
This is unsurprising since the unconstrained LSE
is known to achieve the rate $\rank{\X}/n$, which is the minimax risk without any shape constraints over $\RR^d$ \citep[Example 15.14]{Wainwright2019}.
\end{remark}

\subsection{A Few Consequences}

\paragraph{Comparison with LSE.} \cref{thm:in-sample-risk-bound} immediately leads us to the following corollary connecting the in-sample risks of ESMD and the LSE, based on the results from \cite{Chatterjee2014}. For this, we must restrict ourselves to cases in which the following assumption holds. 
\begin{assumption}\label{ass:unreal}
    There exists a constant $\cC\geq 1$ such that 
    \begin{equation*}
        \rzero^2(\X\Kt)\leq \cC \cdot \rstar^2(\X\Kt),
    \end{equation*}
    implying $\rzero^2(\X\Kt)\asymp \rstar^2(\X\Kt)$ by \eqref{eq:radii-comparison}. 
\end{assumption}
\cref{ass:unreal} is not very strong: it holds for regular Donsker-type classes, and many non-parametric classes (cf.\ \citep{vandegeer2000empirical}). But, importantly, it does not always hold; see \citet[Sec. 3.1.3]{Prasadan2024some} for a counterexample. Other notable examples of when it does not hold, appear in \cite{kur2023variance}, and see also  \cite{aolaritei2025revisiting}.
We prove the following \cref{cor:comparison-LSE} in \cref{subsec:proof-comparison-LSE}. 
\begin{corollary}
\label{cor:comparison-LSE}
    In the setting of \cref{thm:in-sample-risk-bound}, if $\rstar(\X\Kt)\geq (644/3)^2$ and \cref{ass:unreal} holds, it holds that
    \begin{equation}
    \label{eq:sup-risk-comparison}
        \sup_{\alphastar\in \Kt} \EE_\xi\br{\cR(\alphatstar)}  \leq 84 \cC c_a \cdot \sup_{\alphastar\in \Kt} \EE_\xi\br{\cR(\haLSE)}
    \end{equation}
    for continuous-time ESMD with large enough $T$.
\end{corollary}
\noindent It follows that if $\cC \cdot c_a\lesssim 1$ and the LSE is minimax optimal, then ESMD with optimal stopping time is also minimax optimal. Whether this is the case depends highly on the convex body and the design matrix \citep{Raskutti2011,Kur2020suboptimality}. In \cref{thm:in-sample-risk-bound,cor:comparison-LSE} we did not optimize the constants, and tighter bounds (in terms of constants) could be derived using our arguments.

\paragraph{Minimax optimality.} We can also directly derive a sufficient condition for minimax optimality.
To that end, we define the maximum local entropy (e.g., from \cite{Neykov2022minimax}). Let $M(r,K)$ denote the packing number of $K$ in $\ell_2$-norm at radius $r$, and let $c^\star >0$ be a sufficiently large absolute constant. The local entropy of a set $K\subset \RR^d$ is defined as
\begin{equation*}
    \locEntropy(r,K)=\sup_{\alpha\in K} M(r/c^\star ,(K-\alpha)\cap rB_2^d).
\end{equation*}
We get a sufficient condition for minimax optimality.
\begin{assumption}
    \label{ass:local-entropy}
    It holds for all $r\lesssim \diam(\X \Kt)$ that
    \begin{equation*}
        \sup_{\alpha\in \Kt} \frac{w(\X(\Kt-\alpha) \cap rB_2^n)}{r} \leq \sqrt{\log \locEntropy(r,\X \Kt)}.
    \end{equation*}
\end{assumption}
Examples that satisfy this condition are Donsker-type classes and set constrained models in general dimensions, see \citet{han2021set,kur2019optimality} for discussions.
\begin{corollary}
\label{cor:minimax-optimality}
    Consider the setting of \cref{thm:in-sample-risk-bound}. If \cref{ass:local-entropy} holds and $c_a\lesssim 1$, then continuous-time ESMD is minimax optimal (up to constant factors), that is,
    \begin{equation*}
        \sup_{\alphastar\in \Kt} \EE_\xi \br{\cR(\alphatstar)} \lesssim \inf_{\ha} \sup_{\alphastar\in \Kt} \EE_\xi\br{\cR(\ha)}.
    \end{equation*}
\end{corollary}
We prove \cref{cor:minimax-optimality} in \cref{subsec:proof-minimax-optimality} following \citet[Cor. 2.6]{Prasadan2024some}. \cref{cor:minimax-optimality} yields, for example, that if $\X$ is the identity and $\Kt$ is an $\ell_1$- or $\ell_2$-ball of arbitrary radius $\tau>0$, ESMD is minimax rate optimal. We revisit the $\ell_1$-constrained setting in \cref{subsec:l1-constraints}.

\paragraph{Estimation.} Finally, we would like to highlight that \cref{thm:in-sample-risk-bound} can easily be used to derive bounds on the estimation risk
whenever the design matrix has a vanishing kernel width, cf. \cite{Raskutti2011}.
A matrix $\X\in\RR^{n\times d}$ has a vanishing kernel width with respect to $f$ and $K$, if for all $\alpha\in K-K$
\begin{equation}
\label{eq:vanishing-kernel-width}
    \frac{1}{n}\norm{\X\alpha}_2^2\geq \norm{\alpha}_2^2-f(K,n).
\end{equation}
\begin{corollary}
\label{cor:estimation-risk}
    Under the conditions of \cref{thm:in-sample-risk-bound} and if $\X$ has vanishing kernel width \eqref{eq:vanishing-kernel-width} with respect to $3c_a\tau K$ and $f(3c_a\tau K,n)\lesssim \rzero^2(\X K_{3c_a\tau })/n$,
    the estimation error of ESMD is bounded for all $\alphastar\in \Kt$ as
    \begin{equation*}
        \norm{\alphatstar-\alphastar}_2^2 \lesssim  c_a\frac{ \rzero^2(\X \Kt)}{n}+\frac{\log(1/\delta)}{n}
    \end{equation*}
    with probability at least $1-\exp(-0.1n)-\delta$ over the noise.
\end{corollary}
\noindent We prove \cref{cor:estimation-risk} in \cref{subsec:proof-estimation-risk}. The additional assumption of vanishing kernel width is necessary because parameter estimation is ill-posed if the data matrix does not satisfy any regularity assumptions, especially in the high-dimensional regime where $d>n$.

\subsection{Proof Outline of Theorem \ref{thm:in-sample-risk-bound}}
\label{subsec:proof-outline}
Finally, we provide a short proof outline of \cref{thm:in-sample-risk-bound}; the full proof is in \cref{subsec:proof-in-sample-risk-bound}.
The first ingredient of the proof of \cref{thm:in-sample-risk-bound} is to show that, under \cref{ass:sufficient-conditions-psi}, even without strong convexity of the potential, we have the following inclusion (see \cref{fig:M-convex-hull} for a visualization in the case of an $M$-convex hull)
\begin{equation}
\label{eq:inclusion}
    \forall \alphastar\in \Kt: \quad B_\psi(\alphastar,2D_\psi(\alphastar,0)) \subset 3c_a \Kt.
\end{equation}
\begin{wrapfigure}{r}{0.5\textwidth}
\vspace{-0.6cm}
\centering
\includegraphics[width=\linewidth]{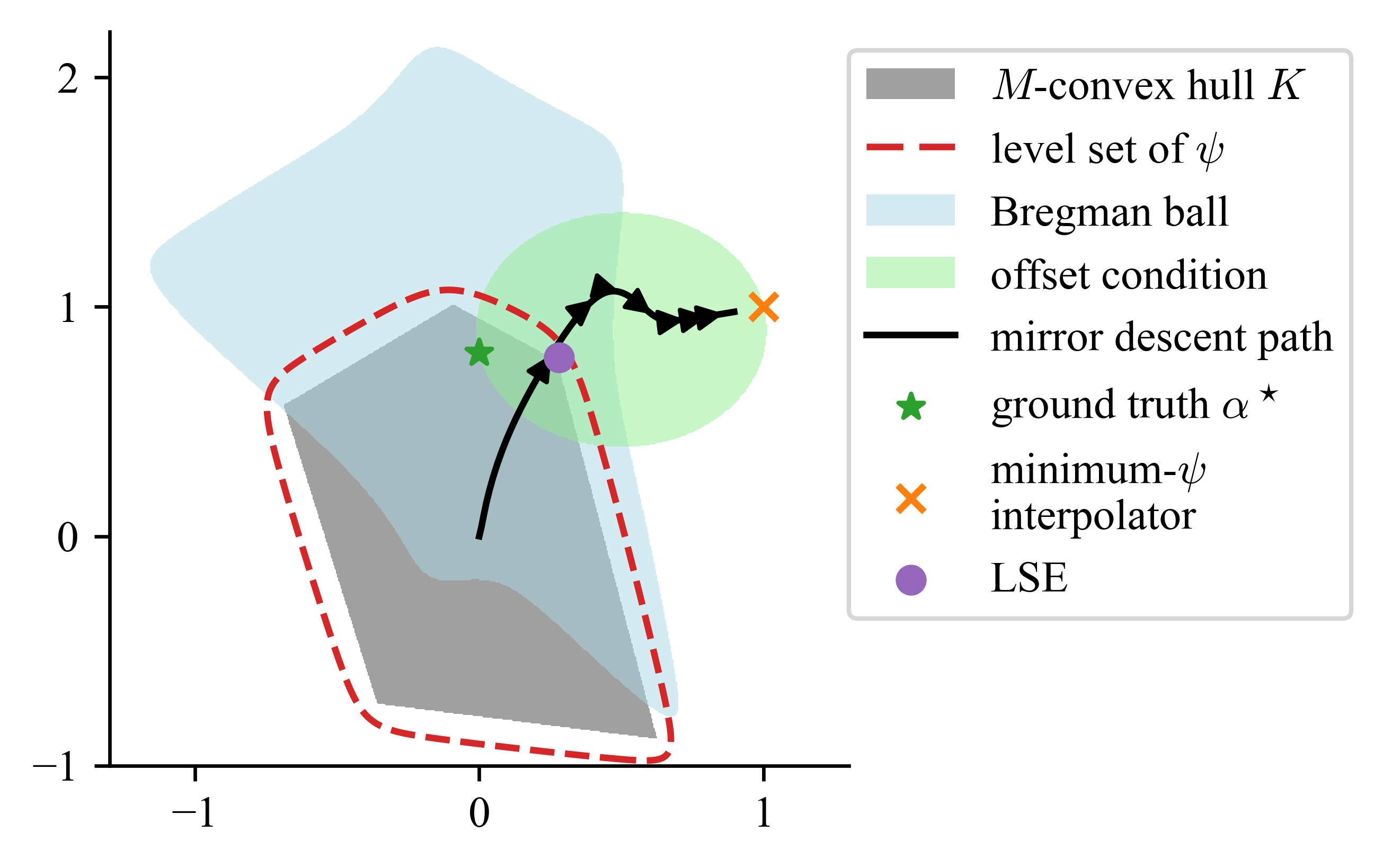}
\caption{{We plot an $M$-convex hull, a level set of the potential from \eqref{eq:M-convex-hull-potential} with $\gamma=10$ and $\rho = 0.2$, the Bregman ball from \eqref{eq:inclusion}, the set of points satisfying the offset condition \eqref{eq:offset-condition}, and the mirror descent path. 
}}
\label{fig:M-convex-hull}
\vspace{-0.8cm}
\end{wrapfigure}
This is useful, as \cite{Kanade2023earlystopped} showed that optimally early-stopped mirror descent is contained in this Bregman ball while satisfying the so-called offset condition \citep{Liang2015learning}, which is defined as
\begin{equation}
\label{eq:offset-condition}
    \Rhat(\alphatstar)-\Rhat(\alphastar)+\cR(\alphatstar) \leq \eps.
\end{equation}
The key step is then to show that using \eqref{eq:inclusion}, we can relate the offset condition \eqref{eq:offset-condition} to the stationary radius from \cref{def:critical-radius} using localization arguments akin to those in \cite{Bellec2016sharporacleinequalitiessquares}. Specifically, we show that \eqref{eq:offset-condition} and \eqref{eq:inclusion} imply that we can bound the in-sample risk with the supremum of a Gaussian process, that is,
\begin{align*}
    &\cR(\alphatstar) \leq \frac{1}{n} \pr{\frac{Z_{\rzero}}{\rzero}}^2+\eps \\ &\text{with} \quad 
     Z_{\rzero} = \sup_{\theta\in (\X(K_{3c_a\tau}-\alphastar)\cap \rzero B_2^n)} \inner{\xi}{\theta},
\end{align*}
where we denote $\rzero=\rzero(\X\alphastar,\X K_{3c_a\tau })$.
We can bound $Z_{\rzero}$ using the concentration of the supremum of a Gaussian process to its expectation $\EE\br{Z_{\rzero}}$. By definition of the stationary radius, we have that $\EE\br{Z_{\rzero}}\leq \rzero^2/2$, which we can plug in and putting things together yields \eqref{eq:in-sample-risk-bound}.

\section{Applications}
\label{sec:applications}
So far, we have only discussed the general case of arbitrary convex bodies and design matrices, which yields the full generality of our main results \cref{thm:in-sample-risk-bound,cor:comparison-LSE,cor:minimax-optimality,cor:estimation-risk}. 
These results allow us to view \cref{ass:sufficient-conditions-psi} as a blueprint. For a given convex body, one can construct a potential that satisfies \ref{ass:suff-1}-\ref{ass:suff-4}. Using this potential, ESMD then enjoys the guarantees of \cref{thm:in-sample-risk-bound}.
We now demonstrate this approach for specific choices of convex bodies and assumptions on the design matrices. We analyze existing potentials and explicitly construct novel potentials.

\paragraph{Assumptions on the Design Matrix.}
For our main results, we made no assumptions on the design matrix. Now we consider two special cases in the applications. The first is when the design matrix is fixed and we only assume normalized columns.
$\X\in\RR^{n\times d}$ is said to be column-normalized, if its columns $\X_i,i\in[d]$ satisfy
\begin{equation}
\label{eq:column-normalization}
    \norm{\X_i}_2\leq \sqrt{n}.
\end{equation}
Note that normalizing to $\sqrt{n}$ here is somewhat arbitrary, and can readily be changed.
The second setting is that of Gaussian design:
$\X$ is Gaussian, if the entries $\X_{ij},i\in[n],j\in[d]$ of $\X$ are i.i.d.\ standard Gaussian, that is, $\X_{ij}\sim \cN(0,1)$. 
Up to constants, Gaussianity implies having normalized columns \eqref{eq:column-normalization} and vanishing kernel width \eqref{eq:vanishing-kernel-width} for some $K$ with high probability; see \cite[Sec. 3.2]{Raskutti2011}.

\subsection{\texorpdfstring{$\ell_p$}{lp}-norms with \texorpdfstring{$p\in(1,2)$}{p in (1,2)}}

The following result shows that $\ell_p$-norms with $p\in(1,2)$ are regular enough such that we can simply choose $\psi$ as the squared Minkowsi functional, i.e., as $\norm{\cdot}_p^2$. 
The mirror descent algorithm associated with this potential is known as the $p$-norm algorithm \citep{Shalev2007online,Levy2019necessary,Orabona2023modern} and can be implemented very efficiently \citep{Gentile2003}. 
Orthogonally, it is worth pointing out that in the interpolation regime, when mirror descent with $\norm{\cdot}_p^2$ is \emph{not} early-stopped, it converges to the minimum $\ell_p$-norm interpolator \citep{Gunasekar2018}. These predictors have been studied in the benign overfitting literature \citep{Donhauser2022,Kur2024minimum}. 
We now derive the rates from \eqref{eq:in-sample-risk-bound} explicitly for column-normalized and Gaussian design.

\begin{proposition}
\label{prop:lp-norm-squared}
    Let $p\in (1,2)$, $1/p+1/q =1$ and $K=B_p^d$. Then $\psi(\alpha)=\norm{\alpha}_p^2$ satisfies the discrete-time version of \cref{ass:sufficient-conditions-psi} where the potential is $\rho=2(p-1)$-strongly convex with respect to the $\ell_p$-norm, and $c_a=1$. If $\X$ is column normalized \eqref{eq:column-normalization}, optimally early-stopped mirror descent achieves for all $\alphastar\in \tau B_p^d$
    \begin{equation*}
        \cR(\alphatstar)\lesssim \frac{\rank{\X}}{n}\wedge 
        \frac{\tau}{\sqrt{n}}
        \begin{cases}
            \sqrt{\log d} & \text{if } p\leq 1+\frac{1}{\log d},\\
            \sqrt{q} d^{1/q} & \text{if } p>1+\frac{1}{\log d}
        \end{cases}
    \end{equation*}
    with probability at least $0.99-\exp(-0.1n)$ over the noise.
    If $\X$ is Gaussian, then for all $\alphastar\in \tau B_p^d$, ESMD achieves the same bound with $\rank{\X}/n=1$ and 
    with probability at least $0.99-2\exp\pr{-0.1n}$ jointly over draws of the design matrix $\X$ and the noise $\xi$.
\end{proposition}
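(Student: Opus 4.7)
The first step is to verify that $\psi(\alpha)=\|\alpha\|_p^2$ satisfies the discrete-time version of \cref{ass:sufficient-conditions-psi}. For $p\in(1,2)$ the map $x\mapsto |x|^p$ is $C^1$, so $\|\cdot\|_p^2$ is differentiable with $\nabla\psi(0)=0$, giving \ref{ass:suff-1}; $\sqrt{\psi}=\|\cdot\|_p$ is a norm and hence convex, giving \ref{ass:suff-2}; and since here $\varphi_K=\|\cdot\|_p$ and $\psi=\varphi_K^2$, we read off $c_l=c_u=1$ and $c_a=1$ in \ref{ass:suff-4}. The only non-trivial point is \ref{ass:suff-3}: the $2(p-1)$-strong convexity of $\|\cdot\|_p^2$ with respect to $\|\cdot\|_p$, which is the classical inequality due to \citet{Ball1994sharp}; I would simply quote it.

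With $c_a=1$, \cref{thm:in-sample-risk-bound} reduces the problem to controlling $\rstar^2(\X\alphastar,\X\cdot 3\tau B_p^d)$. I would bound this quantity in two ways and take the minimum. First, \cref{rem:bound-rank} gives $\rstar^2\leq 4\rank{\X}$, producing the $\rank{\X}/n$ term after dividing by $n$. Second, the remark after \cref{def:critical-radius} gives $\rstar^2\leq 2w(\X\cdot 3\tau B_p^d)=6\tau\,w(\X B_p^d)$, and Gaussian-width duality yields
\begin{equation*}
w(\X B_p^d)=\EE\br{\sup_{\|\alpha\|_p\leq 1}\inner{\xi}{\X\alpha}}=\EE\br{\|\X^\top\xi\|_q},\qquad 1/p+1/q=1.
\end{equation*}
For column-normalized $\X$, each coordinate $\inner{\X_i}{\xi}$ is $\cN(0,\|\X_i\|_2^2)$ with $\|\X_i\|_2\leq\sqrt{n}$, so the elementary chain $\|v\|_q\leq d^{1/q}\|v\|_\infty$ together with $\EE\|g\|_\infty\lesssim\sqrt{\log d}$ for $g\in\cN(0,\I_d)$ gives $w(\X B_p^d)\lesssim\sqrt{n\log d}\,d^{1/q}$. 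Dividing by $n$, choosing $\eps$ to balance, and taking the minimum with the rank-based bound produces the first claim.

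For Gaussian $\X$, I would exploit rotational invariance: conditionally on $\xi$, the vector $\X^\top\xi$ is isotropic Gaussian in $\RR^d$ with entry variance $\|\xi\|_2^2$, so $\EE\br{\|\X^\top\xi\|_q \setmid \xi}=\|\xi\|_2\cdot\EE\|Z\|_q$ for $Z\sim\cN(0,\I_d)$. Combining standard $\chi^2$-concentration for $\|\xi\|_2\lesssim\sqrt{n}$ (which contributes the $\exp(-0.1n)$ failure probability, once combined with the one from \cref{thm:in-sample-risk-bound} via a union bound) with the sharp Gaussian moment bounds
\begin{equation*}
\EE\|Z\|_q\asymp
\begin{cases}
\sqrt{q}\,d^{1/q} & 2\leq q\leq \log d,\\
\sqrt{\log d} & q>\log d,
\end{cases}
\end{equation*}
and using $q=p/(p-1)$ so that $q=\log d$ corresponds exactly to $p=1+1/\log d$, converts into the stated dichotomy. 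The trivial $1\wedge$ factor follows because $\rank{\X}=\min\{n,d\}$ almost surely, which via \cref{rem:bound-rank} bounds $\cR(\alphatstar)$ by a constant when $d\geq n$.

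The main obstacle is nailing down the sharp $\EE\|Z\|_q$ dichotomy with explicit constants: the $\sqrt{q}\,d^{1/q}$ regime requires $\EE\|Z\|_q\leq(\EE|Z_1|^q)^{1/q}d^{1/q}\asymp\sqrt{q}\,d^{1/q}$ (via Stirling for the Gaussian absolute moment), while the $\sqrt{\log d}$ regime uses $\|Z\|_q\leq d^{1/q}\|Z\|_\infty$ with the standard maximum bound. The remaining steps are bookkeeping around the constants in \cref{thm:in-sample-risk-bound} and choosing $\eps$ to balance the bound.
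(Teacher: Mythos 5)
Your verification of \cref{ass:sufficient-conditions-psi} and your column-normalized bound follow essentially the paper's route: $\rstar^2(3\tau \X B_p^d)\leq 6\tau\, w(\X B_p^d)$, then $w(\X B_p^d)\leq d^{1/q}\,\EE\br{\norm{\X^\top\xi}_\infty}\lesssim d^{1/q}\sqrt{n\log d}$ using column normalization, combined with the rank bound from \cref{rem:bound-rank}; quoting Ball--Carlen--Lieb instead of \citet{Shalev2007online} for the $2(p-1)$-strong convexity is an equivalent citation. That part is fine.

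The Gaussian-design step, however, has a genuine gap in its logical structure. \cref{thm:in-sample-risk-bound} is invoked conditionally on the realized design, so what must be controlled with high probability \emph{over $\X$} is the critical radius, i.e.\ the deterministic-in-$\X$ quantity $w(\X B_p^d)=\EE_g\br{\norm{\X^\top g}_q}$, where the width Gaussian $g$ is integrated out. Your rotational-invariance computation fixes the Gaussian vector and averages over the matrix: $\EE_\X\br{\norm{\X^\top \xi}_q \setmid \xi}=\norm{\xi}_2\,\EE\norm{Z}_q$. The conditioning is reversed relative to what is needed; by Fubini this only yields $\EE_\X\br{w(\X B_p^d)}\lesssim \sqrt{n}\,\EE\norm{Z}_q$, an in-expectation statement about the design, not an event on $\X$ on which the theorem can be applied. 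Relatedly, the ``$\chi^2$-concentration for $\norm{\xi}_2$'' cannot supply the $\exp(-0.1n)$ term here: if $\xi$ is the width Gaussian it sits inside an expectation and cannot be conditioned on, and if it is the noise it plays no role in bounding $\rstar$. The gap is reparable: either follow the paper and bound $w(\X B_p^d)\leq \norm{\X}_2\, w(B_p^d)$ with $\norm{\X}_2\leq 3\sqrt{n}$ on an event of probability $1-\exp(-n/2)$ (your moment dichotomy for $\EE\norm{Z}_q$ is exactly $w(B_p^d)$ by duality, so it slots directly into that route), or add an explicit concentration (or Markov) argument for the map $\X\mapsto w(\X B_p^d)$ around its mean before taking the union bound; as written, neither step is present.
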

\noindent We prove \cref{prop:lp-norm-squared} in \cref{subsec:proof-lp-norm-squared}. The strong convexity was proved, e.g., in \cite{Shalev2007online,ball1994sharp}.
A tighter bound may be possible using the localized Gaussian width bound from \cite{Gordon2007}.

\paragraph{Minimax (Sub-)Optimality.}
When $\X$ is a scaled identity, the LSE is known to be \emph{sub}-optimal under $\ell_p$-norm constraints \citep{donoho1994minimax,johnstone2017gaussian,Prasadan2024some,aolaritei2025revisiting}, and so by \cref{eq:radii-comparison} we cannot generally hope for our bounds to prove minimax optimality of ESMD in that case.

Surprisingly, however, there seems to be no work (explicitly) establishing the minimax rate under the \emph{worst-case} fixed and Gaussian design. We now show that the rate from \cref{prop:lp-norm-squared} is optimal (up to $p$-dependent-factor) for a worst-case fixed design matrix that is column-normalized.
To that end, we explicitly construct a column-normalized data matrix as a hard instance. This is similar in spirit to a line of research investigating particularly hard design matrices  
\citep{Rigollet2010exponentialscreeningoptimalrates,Zhang2017,Pathak2024design,Foygel2011fastrate,Dalalyan2017}.
At the same time, we show that the rate from \cref{prop:lp-norm-squared} is sub-optimal for the Gaussian design.
\begin{theorem}
\label{thm:minimax-rate-1-2}
    Let $p\in[1+1/\log d,2)$, $1/p+1/q=1$ and let $\tau=1$ for simplicity. Assume that $n^{p/2} \leq d \leq n^{q/2}$.
    There exists a design matrix $\X\in\RR^{n\times d}$ that is column normalized \eqref{eq:column-normalization}, such that the minimax in-sample risk satisfies
    \begin{equation*}
         \inf_{\ha}\sup_{\alphastar\in B_p^d}\EE_\xi \br{\cR(\ha)} \geq 1\wedge c_p \frac{d^{1/q}}{\sqrt{n}},
    \end{equation*}
    where the infimum is taken over all estimators and $c_p>0$ is a constant that may depend on $p$.
    If $\X$ is Gaussian, let $ n(\log d)^{c_0} \leq d \leq n^{q/2}$ and $p\in[ 1+c_1/\log \log d,2)$ for some universal constant $c_0,c_1>0$. Then
    \begin{equation*}
        \inf_{\ha} \sup_{\alphastar\in B_p^d}\EE_\xi \br{\cR(\ha)} \asymp n^{p/2-1}(\log d)^{1-p/2}\vee \frac{d^{2/q}}{n}
    \end{equation*}
    with probability at least $1-c_2\exp(-c_3n)$ over draws of the matrix $\X$, where $c_2,c_3>0$ are some constants.
\end{theorem}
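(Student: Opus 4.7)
The plan is to establish the column-normalized and Gaussian-design minimax lower bounds by separate Fano-type arguments, and to obtain the matching upper bound in the Gaussian case by appealing to the critical-radius characterization of the LSE (\citet{Chatterjee2014}, also inherited through \cref{cor:comparison-LSE}). In all cases, the KL divergence between the Gaussian response distributions with means $\X\alpha$ and $\X\alpha'$ equals $\tfrac{1}{2}\norm{\X(\alpha-\alpha')}_2^2$, while the squared in-sample prediction separation equals $\tfrac{1}{n}\norm{\X(\alpha-\alpha')}_2^2$; Fano's lemma ties these two quantities to the log-cardinality of a suitable packing.

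For the column-normalized lower bound, I would first construct a deterministic adversarial design matrix $\X\in\RR^{n\times d}$, in the spirit of the hard instances in \citet{Pathak2024design}: the goal is to have $\norm{\X_i}_2\leq\sqrt n$ but with $\tfrac{1}{\sqrt n}\norm{\X\cdot}_2$ systematically weaker than $\norm{\cdot}_2$ on a chosen sparse subset of $B_p^d$. I would then take the standard sparse packing of $s$-sparse vectors with entries $\pm s^{-1/p}$ on $\partial B_p^d$, from which a Gilbert--Varshamov argument extracts a subfamily with $\log M\asymp s\log(d/s)$ and pairwise Hamming distance of order $s$. The sparsity $s$ is chosen so that the Fano information condition $\max_{i\neq j}\norm{\X(\alpha_i-\alpha_j)}_2^2\lesssim \log M$ is saturated; balancing this against the resulting prediction separation yields the claimed rate $d^{1/q}/\sqrt n\cdot(\log d)^{1-p/2}$. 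The range hypothesis $n^{p/2}\leq d\leq n^{q/2}$ is exactly what ensures the chosen $s$ lies in $\mathset{1,\dots,d}$.

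For the Gaussian design, the upper bound follows by applying \cref{cor:comparison-LSE} (or directly the main result of \citet{Chatterjee2014}) and computing the critical radius $\rstar(\X B_p^d)$ when $\X$ has i.i.d.\ $\cN(0,1)$ entries. Using $\EE\norm{\xi}_q\asymp\sqrt q\, d^{1/q}$ for $\xi\sim\cN(0,\I_d)$ (valid in the regime $q\lesssim\log d$, which is ensured by the hypothesis $p\geq 1+c_1/\log\log d$) together with Gordon's Gaussian comparison theorem, one bounds $w(\X B_p^d\cap rB_2^n)$ as a function of $r$; solving the fixed-point inequality \eqref{eq:localized-inequality} in the two regimes produces the two terms $n^{p/2-1}(\log d)^{1-p/2}$ and $d^{2/q}/n$. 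The matching lower bound uses the same Fano recipe: choosing $s^\star\asymp (n/\log d)^{p/2}$ in a sparse packing reproduces the first term, while a packing of $B_p^d$ whose Gaussian width saturates Sudakov's minoration reproduces the second term. For random $\X$, standard restricted eigenvalue estimates hold with probability at least $1-c_2\exp(-c_3n)$ as soon as $n\gtrsim s^\star\log(d/s^\star)$, so the data-dependent separation $\tfrac{1}{n}\norm{\X v}_2^2$ can be replaced by $\norm{v}_2^2$ up to universal constants in the Fano step.

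The main technical obstacle is the explicit construction of the column-normalized adversarial design: one must produce $\X$ satisfying $\norm{\X_i}_2\leq\sqrt n$ whose restricted singular values on the sparse packing are suppressed exactly to the level that makes the Fano trade-off yield the $d^{1/q}/\sqrt n$ scaling rather than the Gaussian $(\log d/n)^{1-p/2}$ scaling; this is the step where one needs to beat the ``typical'' design bounds. Once the construction and its relevant properties on the packing are in place, the remainder of the Fano calculation is routine. The Gaussian case is more classical and follows the template of \citet{Raskutti2011}, adapted from $p\in(0,1]$ to $p\in(1,2)$ through the explicit Gaussian-width expression $\sqrt q\, d^{1/q}$ and a standard two-regime packing.
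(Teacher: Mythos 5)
There are two genuine gaps, one in each half of the theorem. For the fixed-design lower bound, your construction goal points in the wrong direction. In a Fano argument for \emph{in-sample prediction} risk, the separation and the KL divergence are the same functional $\norm{\X(\alpha_i-\alpha_j)}_2^2$ (up to the factor $n$), so building a column-normalized $\X$ that makes $\tfrac{1}{\sqrt n}\norm{\X\cdot}_2$ \emph{weaker} than $\norm{\cdot}_2$ on the packing only shrinks both quantities and weakens the bound; it can never beat the ``typical'' rate. To exceed it one must make the image $\X B_p^d$ \emph{larger} than typical at the critical scale. The paper's hard instance does exactly this: it aligns columns in $k$ blocks of $m$ identical columns of norm $\sqrt n$ (with $m\asymp d^{1/p}/\sqrt n$, $k\asymp\sqrt n\,d^{1/q}$, which is where the assumption $n^{p/2}\le d\le n^{q/2}$ enters), so that $\X B_p^d\supset t\,B_p^k\times\{0\}^{n-k}$ with $t=\sqrt n\,m^{1/q}\gg\sqrt n$, reducing the problem to a Gaussian sequence model over a dilated low-dimensional $\ell_p$-ball; the rate $\sqrt n\,d^{1/q}(\log d)^{1-p/2}$ then comes from Sch\"utt's entropy numbers fed into the local-entropy characterization of \citet{Neykov2022minimax}. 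Since you explicitly defer the construction as the ``main technical obstacle'' and describe its mechanism backwards, the key idea of this half is missing.

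For the Gaussian-design part, your route for the upper bound does not reach the claimed rate. Applying \cref{cor:comparison-LSE} or \citet{Chatterjee2014} and solving the localized-Gaussian-width fixed point \eqref{eq:localized-inequality} for $\X B_p^d$ yields a bound of order $\sqrt q\,d^{1/q}/\sqrt n$ (this is exactly \cref{prop:lp-norm-squared}), which in the regime $n\log n\le d\le n^{q/2}$ is polynomially \emph{larger} than $n^{p/2-1}(\log d)^{1-p/2}\vee d^{2/q}/n$; indeed the whole point of this theorem in the paper is that the LSE/critical-radius bound is minimax sub-optimal under Gaussian design (consistent with the known sub-optimality of the LSE on $\ell_p$-balls, $1<p<2$). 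So your statement that solving \eqref{eq:localized-inequality} ``produces the two terms'' is incorrect: those terms come from the \emph{local-entropy} fixed point, not the width fixed point. The paper instead applies the two-sided characterization of \citet{Neykov2022minimax} conditionally on $\X$, shows the fixed point must satisfy $\eps\gtrsim d^{1/q}$ because $\sigma_{\min}(\X)\asymp\sqrt d$ implies $c\,d^{1/q}B_2^n\subset\X B_p^d$, and proves the high-probability entropy equivalence $\log M(\eps,\X B_p^d)\asymp\log M(\eps/\sqrt n,B_p^d)$ for $\eps\gtrsim d^{1/q}$ via a Sch\"utt-type decomposition combined with Johnson--Lindenstrauss; this is what delivers both directions simultaneously. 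Your Fano sketch with $s^\star\asymp(n/\log d)^{p/2}$ and restricted-eigenvalue control does plausibly recover the lower bound's first term, but without the entropy-equivalence (or some other estimator-side argument achieving the smaller rate) the matching upper bound, and hence the stated $\asymp$, is not established.
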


\noindent We prove \cref{thm:minimax-rate-1-2} in \cref{subsec:proof-minimax-rate-1-2}. Note that the upper bound from \cref{prop:lp-norm-squared} and the lower bound from \cref{thm:minimax-rate-1-2} essentially match for the fixed data matrix. 
In \cref{subsec:simulation-minimax-rate}, we present simulations of the LSE on the adversarial data matrix of the first lower bound, showing that it exhibits the rate $d^{1/q}/\sqrt{n}$.

\subsection{\texorpdfstring{$\ell_1$}{l1}-norm}
\label{subsec:l1-constraints}

\begin{table*}
    \caption{ Mirror descent potentials for linear regression over the $\ell_1$-norm ball with which early-stopping is minimax optimal (\cref{thm:l1-all-combined}). Note that the Moreau envelope is \emph{not} the same as in \cref{lem:existence}, which would also be a valid potential for \cref{thm:l1-all-combined}. Here we show $(\cM_\lambda \norm{\cdot}_1+\frac{d\lambda}{2})^2+\frac{\rho}{2}\norm{\cdot}_2^2$, because it has a closed-form solution. See \cref{fig:optimization-paths} in \cref{subsec:risk-along-optimization-path} for example optimization paths.}
    \label{tab:l1-potentials}
    \centering
    \resizebox{\textwidth}{!}{
    \begin{tabular}{l l l l}
        \toprule
         \textbf{name} & \textbf{potential} $\psi(\alpha)$ & \makecell[l]{\textbf{strong convexity} \\ \textbf{parameter} $\rho$} & \makecell[l]{\textbf{approximation} \\ \textbf{constant} $c_a$} \\
        \midrule
        \makecell[l]{squared \\ $\ell_p$-norm} & $\norm{\alpha}_p^2$  with  $1<p\leq 1+1/\log(d)$ & \makecell[l]{$\rho= 2(p-1)\leq \frac{2}{\log d}$  \\ with $\ell_p$-norm} & $c_a=e$ \\[2ex]
        \makecell[l]{Moreau envelope \\ (Huber loss) } & \makecell[l]{$(\sum_{i=1}^d h (\alpha_i) + \tau)^2 + \norm{\alpha}_2^2$ \\
        where $h(x)=\begin{cases} \frac{x^2 d}{4\tau} & \abs{x}\leq \frac{2\tau}{d} \\ \abs{x}-\frac{\tau}{d} & \abs{x}> \frac{2\tau}{d} \end{cases}$} & \makecell[l]{$\rho = 2 $ \\ with $\ell_2$-norm} & $c_a=\sqrt{5}$ \\[5ex]
        \makecell[l]{adjusted \\ hypentropy} & \makecell[l]{$\pr{\sum_{i=1}^d \frac{\pr{ \alpha_i\arcsinh\pr{\alpha_i/\gamma}-\sqrt{\alpha_i^2+\gamma^2}+\gamma+1}}{\arcsinh\pr{\gamma^{-1}}}}^2$ \\ with $\gamma\leq  \sinh(d/\tau)^{-1}\wedge (4\tau)^{-1}\wedge  2^{-1/2}$} & \makecell[l]{$\rho = \arcsinh(\gamma^{-1})^{-2}$ \\ $\leq (\tau/d)^2$ with $\ell_2$-norm}  & $c_a=3$ \\[5ex]
        \makecell[l]{sigmoidal} & \makecell[l]{$\pr{\sum_{i=1}^d \frac{\pr{\log\pr{1+\exp\pr{-\gamma \alpha_i}}+\log\pr{1+\exp\pr{\gamma \alpha_i}}}}{\gamma}}^2$ \\ with $\gamma \geq d\log (4)/\tau$} & \makecell[l]{only valid for \\ continuous time} & $c_a = 2$ \\
        \bottomrule
    \end{tabular}
    }
\end{table*}

When the convex body is an $\ell_1$-norm ball, the corresponding LSE is the LASSO estimator in its constrained form \citep{Tibshirani1996lasso}. It is known \citep[Thm. 7]{Bellec2017localizedgaussianwidthmconvex} that for all column-normalized design matrices \eqref{eq:column-normalization}, if $d\geq \tau \sqrt{n}$, the LSE achieves with high probability over the noise
\begin{equation}
\label{eq:l1-LSE-bound}
     \cR(\haLSE)\lesssim \frac{\rank{\X}}{n} \wedge \tau\sqrt{\frac{\log (ed/(\tau \sqrt{n}))}{n}},
\end{equation}
and there exists at least one column-normalized data matrix for which this is minimax optimal \citep[Thm. 5.3 and Eq. 5.25]{Rigollet2010exponentialscreeningoptimalrates}. 
If $d/(\tau\sqrt{n})\asymp d^\kappa$ with some constant $\kappa>0$ and the rank of the design matrix is large enough, the bound from \eqref{eq:l1-LSE-bound} reduces to the rate $\tau \sqrt{\log(d)/n}$. In \cite[Thm. 3]{Raskutti2011}, it is shown that if this scaling assumption holds and the design matrix has vanishing kernel width \eqref{eq:vanishing-kernel-width} with $f(\tau B_1^d,n)\lesssim \sqrt{\tau}\pr{\log(d)/n}^{1/4}$ (implying it has rank $n$), the minimax lower bound matches the simpler bound.
However, without the scaling assumption and if $\tau\sqrt{n}\geq e$, the bound from \eqref{eq:l1-LSE-bound} is a stronger bound. See also \citet[pgs. 16-17]{Rigollet2010exponentialscreeningoptimalrates} for a discussion.

\citet[Thm. 5]{Kanade2023earlystopped} showed that when the design matrix is column-normalized \eqref{eq:column-normalization}, optimally early-stopped continuous-time mirror descent with the hyperbolic entropy from \cite{Ghai2020}, defined as
\begin{equation*}
    \psi(\alpha)=\sum_{i=1}^d  \alpha_i\arcsinh\pr{\alpha_i/\gamma}-\sqrt{\alpha_i^2+\gamma^2},
\end{equation*}
achieves for appropriate $\gamma>0$ with high probability over draws of the noise
\begin{equation}
\label{eq:hyperbolic-entropy-rate}
     \cR(\alphatstar)\lesssim \tau\frac{\log^{3/2}d}{\sqrt{n}}.
\end{equation} 
As one can see, there is a gap from \eqref{eq:hyperbolic-entropy-rate} to \eqref{eq:l1-LSE-bound} of order (at least) $\log d$. The same potential has also been used for the related setting of sparse noisy phase retrieval \citep{Wu2022} and in the analysis of diagonal networks \cite{woodworth2020kernel}. 
And while a batch conversion of online mirror descent with the first potential from \cref{tab:l1-potentials} is known to achieve the rate $\sqrt{\log(d)/n}$ (e.g., \cite{bach2024learning}), as discussed, there is still a gap to \eqref{eq:l1-LSE-bound} and their proof technique does not apply to our definition of mirror descent.
Employing our results, together with results from \cite{Bellec2017localizedgaussianwidthmconvex}, we can improve upon \eqref{eq:hyperbolic-entropy-rate} and fully close the gap between \eqref{eq:l1-LSE-bound} and \eqref{eq:hyperbolic-entropy-rate}. Because $x\mapsto\norm{x}_1^2$ is \emph{not} differentiable, nor strictly convex, we cannot use it as a potential itself. However, we can use \cref{ass:sufficient-conditions-psi} to derive several alternatives. The next theorem provides a joint analysis of the potentials in \cref{tab:l1-potentials}.
\begin{theorem}
\label{thm:l1-all-combined}
    Suppose that $K=B_1^d$ and $d\geq \tau \sqrt{n}$. Then all potentials from \cref{tab:l1-potentials} satisfy \cref{ass:sufficient-conditions-psi} with the specified constants, and if $\X$ is column-normalized \eqref{eq:column-normalization}, optimally stopped mirror descent using any of the potentials from \cref{tab:l1-potentials} achieves for all $\alphastar\in \tau B_1^d$
    \begin{equation*}
        \cR(\alphatstar)\lesssim \pr{\frac{\rank{\X}}{n} \wedge c_a\tau\sqrt{\frac{\log (ed/(\tau\sqrt{n}))}{n}}} +\frac{\log(1/\delta)}{n}
    \end{equation*}
    with probability at least $1-\exp(-0.1n)-\delta$ over the noise. 
    If $\X$ is Gaussian, then optimal early-stopping achieves
    \begin{equation*}
        \cR(\alphatstar) \lesssim 1\wedge c_a\tau \sqrt{\frac{\log d}{n}}
    \end{equation*}
    with probability at least $0.99-2\exp\pr{-0.1n}$ jointly over draws of the design matrix $\X$ and the noise $\xi$.
\end{theorem}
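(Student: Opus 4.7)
The plan is to follow the blueprint of the paper: verify that each of the four potentials in \cref{tab:l1-potentials} satisfies \cref{ass:sufficient-conditions-psi} with the listed $c_a$ and $\rho$, apply \cref{thm:in-sample-risk-bound} to reduce to a bound on $\rstar^2(\X K_{3c_a\tau})$, and then invoke existing critical-radius estimates for the $\ell_1$-ball under column-normalized and Gaussian design.

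Verifying \cref{ass:sufficient-conditions-psi} is done row by row. Differentiability \ref{ass:suff-1} with $\nabla\psi(0)=0$ is immediate by inspection of each entry, and the convexity of $\sqrt{\psi}$ in \ref{ass:suff-2} follows either because $\sqrt{\psi}$ equals a sum of convex non-negative coordinate functions (rows 1, 3, 4) or because $\sqrt{a^2+b^2}$ preserves convexity of non-negative convex $a,b$ (row 2, to handle the $\ell_2^2$-summand). Strong/strict convexity \ref{ass:suff-3} is row-specific: for $\|\cdot\|_p^2$ with $p=1+1/\log d$ this is the classical $2(p-1)$-strong convexity in $\ell_p$ \citep{Juditsky2008large,Shalev2007online}; for the Moreau/Huber row it comes from the $(\rho/2)\|\cdot\|_2^2$ term; for the adjusted hypentropy, the Hessian of the squared inner sum decomposes into a rank-one outer product of the gradients $(\arcsinh(\alpha_i/\gamma))_i$ plus a diagonal $\diag(1/\sqrt{\alpha_i^2+\gamma^2})$ scaled by $\sum_i f(\alpha_i)\geq d$, and the prescribed range of $\gamma$ in the table yields the claimed $\rho=\arcsinh(\gamma^{-1})^{-2}$; the sigmoidal potential is only strictly convex, matching the table's restriction to continuous time. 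The Minkowski-functional bounds \ref{ass:suff-4} are the numerical input: for $p=1+1/\log d$, the norm nesting $\|\alpha\|_p\leq\|\alpha\|_1\leq d^{1-1/p}\|\alpha\|_p$ together with $d^{1-1/p}\leq e$ gives $c_l\leq e$ and $c_u=1$; for the Huber potential the two-sided bound $|x|-\lambda/2\leq h_\lambda(x)\leq |x|$ plus $\lambda\leq 2\tau/d$ and the $\ell_2^2$-summand yield $c_l=1$, $c_u=\sqrt{5}$; for the adjusted hypentropy, the asymptotics of $\arcsinh$ together with $\gamma\leq\sinh(d/\tau)^{-1}\wedge(4\tau)^{-1}\wedge 2^{-1/2}$ give $c_a=3$; and for the sigmoidal potential the double-softplus majorizes $|\cdot|$ up to an additive $(\log 4)/\gamma$ that is at most $\tau/d$ once $\gamma\geq d(\log 4)/\tau$, giving $c_a=2$.

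Once \cref{ass:sufficient-conditions-psi} is in place, \cref{thm:in-sample-risk-bound} gives $\min_{0\leq t\leq T}\cR(\alphat)\leq 2\rstar^2(\X\alphastar,\X K_{3c_a\tau})/n + 4\log(1/\delta)/n + \eps$. For column-normalized $\X$ I would invoke Theorem 7 of \citet{Bellec2017localizedgaussianwidthmconvex} applied to the rescaled body $K_{3c_a\tau}$, obtaining $\rstar^2(\X K_{3c_a\tau})\lesssim c_a\tau\sqrt{n\log(ed/(\tau\sqrt n))}$, and combine this with \cref{rem:bound-rank} to obtain the $\rank(\X)/n\wedge\cdots$ structure; balancing $\eps$ against the leading term finishes that case. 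For Gaussian $\X$ I would bound the Gaussian width directly via $w(\X\tau B_1^d)=\tau\,\EE\|\X^\top\xi\|_\infty\lesssim\tau\sqrt{n\log d}$ and solve the defining inequality \eqref{eq:localized-inequality} to obtain $\rstar^2(\X K_{3c_a\tau})\lesssim c_a\tau\sqrt{n\log d}$ with high probability over $\X$; a union bound over the noise and design events then yields the stated $0.99-2\exp(-0.1n)$ probability.

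The main obstacle I expect is the global $\ell_2$-strong convexity check for the adjusted hypentropy on all of $\RR^d$: because the diagonal inner-Hessian $1/\sqrt{\alpha_i^2+\gamma^2}$ vanishes as $|\alpha_i|\to\infty$, the rank-one outer-product term from squaring must carry the curvature in the large-coordinate regime, and matching it to the diagonal term in the small-coordinate regime through the prescribed range of $\gamma$ requires a careful regime split. Tracking the constants $c_a$ precisely through each smoothed potential is the secondary bookkeeping challenge.
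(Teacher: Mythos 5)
Your proposal is correct and follows essentially the same route as the paper's proof: a row-by-row verification of \cref{ass:sufficient-conditions-psi} (the paper's \cref{lem:lp-for-l1,lem:l1-moreau,lem:squared-hypentropy,lem:sigmoidal-approximation}), then \cref{thm:in-sample-risk-bound} combined with the localized-width bound of \citet{Bellec2017localizedgaussianwidthmconvex} and \cref{rem:bound-rank} for column-normalized design, and a Gaussian-width bound (the paper via $w(\X B_1^d)\leq \norm{\X}_2\, w(B_1^d)$ with operator-norm concentration, you via column-norm concentration and Hölder, both yielding $\tau\sqrt{n\log d}$) plus a union bound for the Gaussian case. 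Regarding the obstacle you flag for the adjusted hypentropy: the paper does not rely on the rank-one term from squaring to supply curvature at large coordinates; it drops that positive semi-definite term and instead uses $\sqrt{\psi(x)}\geq \norm{x}_1\vee d\arcsinh(\gamma^{-1})^{-1}$, so that the prefactor $\sqrt{\psi(x)}$ compensates the decay of $\diag\pr{1/\sqrt{x_i^2+\gamma^2}}$ uniformly on $\RR^d$.
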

\noindent We prove \cref{thm:l1-all-combined} in \cref{subsec:proof-l1-all-combined}, where we also specify the constants of the first bound.
As discussed above, because when $\X$ is Gaussian it has vanishing kernel width \citep[Proposition 1]{Raskutti2011}, both bounds are minimax optimal under weak scaling assumptions. Therefore, they cannot be improved upon beyond the constants and ESMD is minimax optimal, as is the LSE. 

Notice how the third potential in \cref{tab:l1-potentials} is an adjusted version of the hypentropy potential from \cite{Ghai2020}. With a few changes to the potential, we closed the logarithmic gap from \eqref{eq:hyperbolic-entropy-rate} to \eqref{eq:l1-LSE-bound}; In particular, the key is that we \emph{square} the potential. The sigmoidal example in \cref{tab:l1-potentials} for continuous-time mirror descent, where strong convexity is not required, is a natural smooth approximation of the absolute value function from \cite{Schmidt2007L1approximation}.
Some example paths using potentials from \cref{tab:l1-potentials}, and the risk along the optimization path, are in \cref{subsec:risk-along-optimization-path}.

\paragraph{Computational-statistical trade-off.}
As we can see in \cref{tab:l1-potentials}, the strong convexity parameter $\rho$ varies depending on the potential we use. For example, the strong convexity parameter $2(p-1)\leq 2/\log(d)$ of the squared $\ell_p$-norm vanishes logarithmically as $d\to\infty$. However, the Moreau envelope-based potential showcases that this is not necessary. Importantly, the stopping time from \cref{thm:in-sample-risk-bound} behaves as $T\asymp 1/\rho$, ignoring all other dependencies. Hence, whether (and how fast) the strong convexity parameter vanishes determines the bound on the computational cost of achieving minimax optimality with the given potentials. Thus, the squared $\ell_p$-norm is not necessarily the best candidate.
Generally, we can improve the constant $c_a$ arbitrarily close to $1$ from above by paying in a smaller $\rho$.

\subsection{\texorpdfstring{$M$}{M}-convex Hulls}

Let the convex body be the convex hull of $M$ points $k_i\in \RR^d$ that contains the origin in its interior. Using the points to represent the convex body is known as the $V$-representation, but one can also transform this representation into a so-called $H$-representation, where the convex body is characterized as the set of solutions to the linear inequality $\A \alpha \leq \ind_{m}$ with $\A\in \RR^{m\times d}$ and $\ind_m=(1,\dots,1)^\top\in \RR^m$, where $m\in\NN$ is some finite number that can be bounded as $m\geq d+1$ and in terms of $M$. See \cite{Schrijver1998theory} for more in-depth background. Given this representation, we can equivalently write the convex body as the solution to $\max_{i\in [m]}\inner{a_i}{\alpha}\leq 1$, where $a_i$ is the $i$-th row of $\A$.
Hence, it is easy to see that 
\begin{equation}
\label{eq:M-convex-hull}
     \varphi_K(\alpha)=\max_{i\in [m]}\inner{a_i}{\alpha}.
\end{equation}
Special cases of this are the $\ell_1$-norm and the $\ell_\infty$-norm.
We can approximate this Minkowski functional using a smoothing from \cite[Example 4.5]{Beck2012smoothing}.
\begin{proposition}
\label{prop:M-convex-hulls}
    Suppose that $K$ is an $M$-convex hull that contains the origin its interior, as described above, and that the Minkowski functional is in the form of \eqref{eq:M-convex-hull} with $\sum_{i=1}^m a_i = 0$. Then $\psi$, defined as
    \begin{equation}
    \label{eq:M-convex-hull-potential}
        \psi(\alpha)=\frac{1}{\gamma^2}\log^2\pr{m\sum_{i=1}^m \exp(\gamma \inner{a_i}{\alpha})}+\frac{\rho}{2}\norm{\alpha}_2^2
    \end{equation}
    with $\rho = 2/ \max_{i\in [M]} \norm{k_i}_2^2$ and $\gamma \geq 2\log (m)/\tau$ satisfies the continuous and discrete-time version of \cref{ass:sufficient-conditions-psi} with $\rho$ for the $\ell_2$-norm and $c_a=3$. Moreover, if 
    $$\phi=\max_{i\in[M]} \frac{\norm{\X k_i}_2}{\sqrt{n}}$$ and $M\geq \tau \phi \sqrt{n}$, optimally early-stopped mirror descent with the potential from \eqref{eq:M-convex-hull-potential} achieves for all $\alphastar\in \Kt$
    \begin{equation*}
        \cR(\alphatstar)\lesssim \frac{\rank{\X}}{n}\wedge c_a \tau \phi  \sqrt{\frac{\log\pr{eM/(\tau\phi \sqrt{n})}}{n}}
    \end{equation*}
    with probability at least $0.99-\exp\pr{-0.1n}$ over draws of the noise.
\end{proposition}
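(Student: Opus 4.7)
The plan is to establish the proposition in three stages: first, verify that the potential $\psi$ from \eqref{eq:M-convex-hull-potential} satisfies \cref{ass:sufficient-conditions-psi} with the stated constants; second, bound the critical radius $\rstar(\X\alphastar, \X K_{3c_a\tau})$ for the $M$-convex hull; and finally, substitute these estimates into \cref{thm:in-sample-risk-bound}.

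For the first step, write $\psi = f^2 + \frac{\rho}{2}\norm{\cdot}_2^2$ with $f(\alpha) = \gamma^{-1}\log\pr{m\sum_i \exp(\gamma\inner{a_i}{\alpha})}$. Conditions \ref{ass:suff-1} and \ref{ass:suff-3} are immediate: $\psi$ is $C^\infty$ and the quadratic term furnishes $\rho$-strong convexity with respect to $\norm{\cdot}_2$; under the centering hypothesis $\sum_i a_i = 0$, the softmax weights at $\alpha=0$ coincide, so $\nabla f(0) = \tfrac{1}{m}\sum_i a_i = 0$ and hence $\nabla \psi(0) = 0$. For \ref{ass:suff-2}, observe that $\sqrt{\psi}$ is the composition of the Euclidean norm on $\RR^2$ (convex and monotone on the non-negative orthant) with the pair of non-negative convex maps $\alpha \mapsto f(\alpha)$ and $\alpha \mapsto \sqrt{\rho/2}\,\norm{\alpha}_2$, and is therefore convex. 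For \ref{ass:suff-4}, the standard log-sum-exp sandwich yields $\varphi_K(\alpha) \leq f(\alpha) \leq \varphi_K(\alpha) + 2\log(m)/\gamma$, where $f\geq 0$ follows from $\sum_i a_i=0$ via Jensen, so $c_l = 1$; on $\Kt$ the choice $\gamma \geq 2\log(m)/\tau$ gives $f \leq 2\tau$, and the convex-combination bound $\norm{\alpha}_2 \leq \tau\max_i\norm{k_i}_2$ together with $\rho = 2/\max_i\norm{k_i}_2^2$ gives $\frac{\rho}{2}\norm{\alpha}_2^2 \leq \tau^2$, so $\psi(\alpha) \leq 5\tau^2$ on $\Kt$, yielding $c_u = \sqrt{5}\leq 3$ and hence $c_a\leq 3$.

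For the second step, note that $\X K_{3c_a\tau}$ is itself an $M$-convex hull with generators $\{3c_a\tau\X k_i\}_{i\in[M]}$, whose Euclidean norms are bounded by $3c_a\tau\phi\sqrt{n}$. The crux is then to invoke the sharp localized Gaussian width bound for $M$-convex hulls \citep[Theorem 7]{Bellec2017localizedgaussianwidthmconvex}, which, under the scaling $M \geq \tau\phi\sqrt{n}$, gives $\rstar^2(\X\alphastar, \X K_{3c_a\tau}) \lesssim c_a \tau\phi\sqrt{n\log(eM/(\tau\phi\sqrt{n}))}$ uniformly in $\alphastar \in K_{3c_a\tau}$. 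The complementary bound $\rstar^2(\X K_{3c_a\tau}) \leq 4\rank{\X}$ comes from \cref{rem:bound-rank}. Substituting these two estimates into \cref{thm:in-sample-risk-bound} with $\eps$ of the same order as $\rstar^2/n$ and $\delta$ a small absolute constant (so that the $\log(1/\delta)/n$ term is absorbed) yields the advertised rate with probability at least $0.99-\exp(-0.1n)$.

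The main technical obstacle is not the verification of \cref{ass:sufficient-conditions-psi}, which is routine given the log-sum-exp smoothing from \citet{Beck2012smoothing}, but rather obtaining the sharp $\sqrt{\log(eM/(\tau\phi\sqrt{n}))}$ factor in the localized Gaussian width: the naive estimate $\rstar^2 \leq 2w(\X K_{3c_a\tau}) \lesssim \tau\phi\sqrt{n\log M}$ loses a factor of $\log(\tau\phi\sqrt{n})$ inside the logarithm, and recovering the sharper scaling that matches (and interpolates to) the $\ell_1$ case of \cref{subsec:l1-constraints} genuinely requires the localized chaining argument of \citet{Bellec2017localizedgaussianwidthmconvex} rather than a one-shot Dudley bound.
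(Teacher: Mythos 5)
Your proposal is correct and takes essentially the same route as the paper: verify \cref{ass:sufficient-conditions-psi} via the log-sum-exp sandwich from \citet{Beck2012smoothing} (with $\nabla\psi(0)=0$ from $\sum_i a_i=0$), bound the localized Gaussian width of the $M$-convex hull using \citet{Bellec2017localizedgaussianwidthmconvex} together with the rank bound of \cref{rem:bound-rank}, and substitute into \cref{thm:in-sample-risk-bound} with a constant $\delta$. The only cosmetic difference is that you apply the polytope width bound directly to the inflated body $K_{3c_a\tau}$, which strictly requires the vertex-count condition at that scale (i.e.\ $M\gtrsim c_a\tau\phi\sqrt{n}$), whereas the paper applies it at scale $\tau$ (where $M\geq\tau\phi\sqrt{n}$ is assumed) and then invokes the critical-radius scaling inequality \eqref{eq:critical-radius-scaling} — a constant-factor detail absorbed by $\lesssim$.
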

\noindent Using the bound on localized Gaussian width of $M$-convex hulls from \cite{Bellec2017localizedgaussianwidthmconvex}, we prove \cref{prop:M-convex-hulls} in \cref{subsec:proof-M-convex-hulls}, where we also specify the constants.
Noticably, when $K$ is an $\ell_1$-norm ball, we recover the bound from \cref{thm:l1-all-combined} with $M=2d$, $m=2^d$ and $\phi = 1$.

\begin{figure*}[t]
    \centering
    \includegraphics[width=\linewidth]{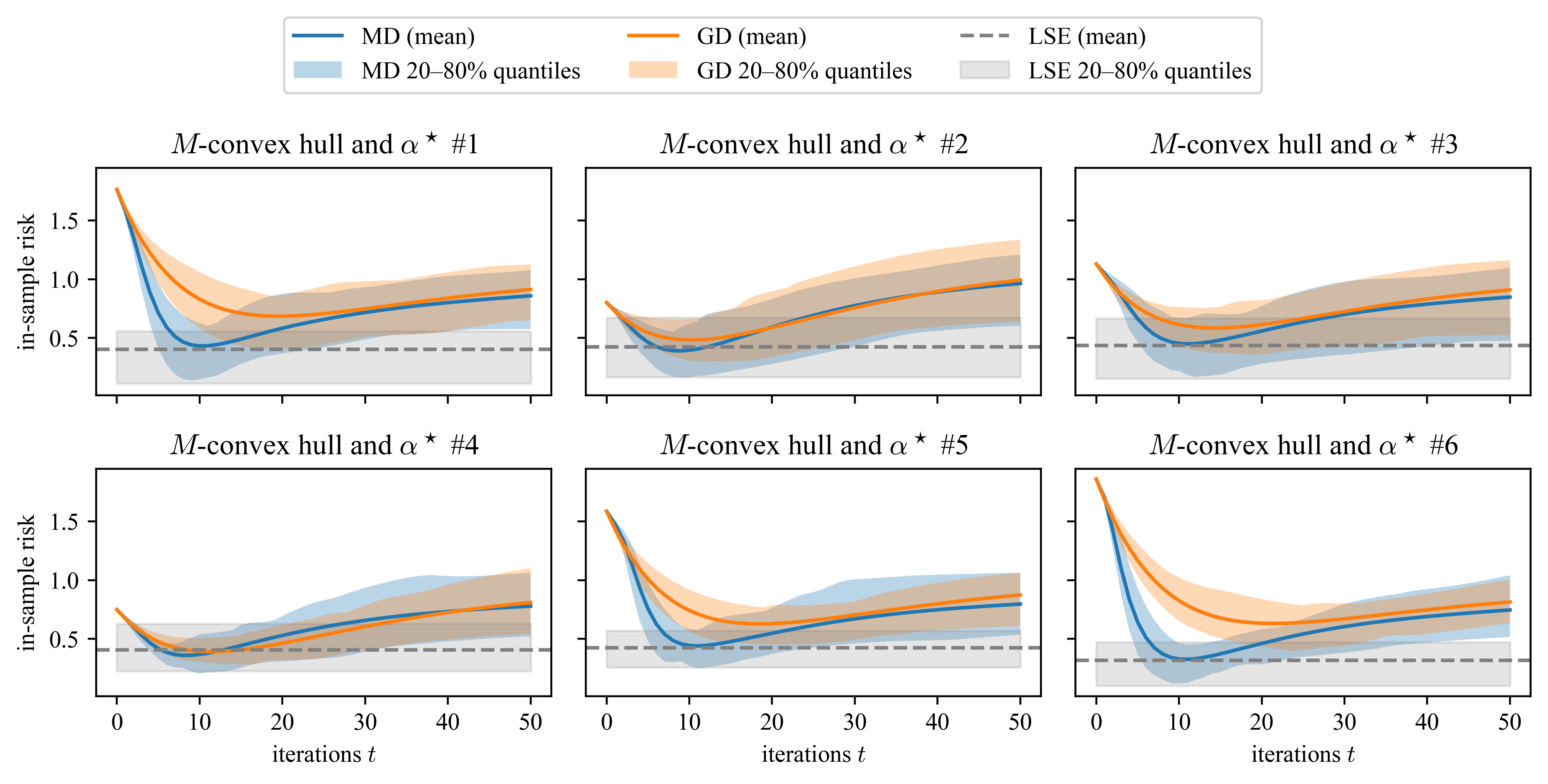}
    \caption{ Risks of mirror descent (MD) with potential from \eqref{eq:M-convex-hull-potential}, gradient descent (GD) and the LSE on six randomly generated convex bodies and ground truths. Using the potential adapts ESMD to the geometry of the convex body.}
    \label{fig:adaptivity-geometry}
\end{figure*}
\paragraph{Simulations.} In \cref{fig:adaptivity-geometry}, we randomly generate six $M$-convex hulls in $\RR^{10}$ with $m=50$ points, and sample a ground truth within each convex body. We let $\X$ be the identity. \cref{fig:adaptivity-geometry} shows how the mirror potential from \cref{eq:M-convex-hull-potential} adapts to the geometry of the convex body and how early-stopped mirror descent performs better than early-stopped gradient descent. We repeat the experiment $30$ times and plot mean, $20$th and $80$th percentile for the instances of the noise. Note that the best iterate has a similar risk to the LSE, as described in \cref{cor:comparison-LSE}.

\section{Discussion}
\label{sec:conclusion}

Our main contribution is to show that the sharp Gaussian width analysis of the LSE from \cite{Bellec2016sharporacleinequalitiessquares} translates to early-stopped mirror descent, when the mirror potential is chosen to approximate the squared Minkowski functional of the convex body.
This enables a general comparison to the LSE, a formulation of sufficient conditions for minimax optimality, and an improvement over the best known bounds for the $\ell_1$-constrained case. Our results extend to general convex constraints and the high-dimensional setting, which had evaded some of the previous analyses.

\begin{wrapfigure}{r}{0.5\textwidth}
    \centering
    \vspace{-0.2cm}
    \includegraphics[width=\linewidth]{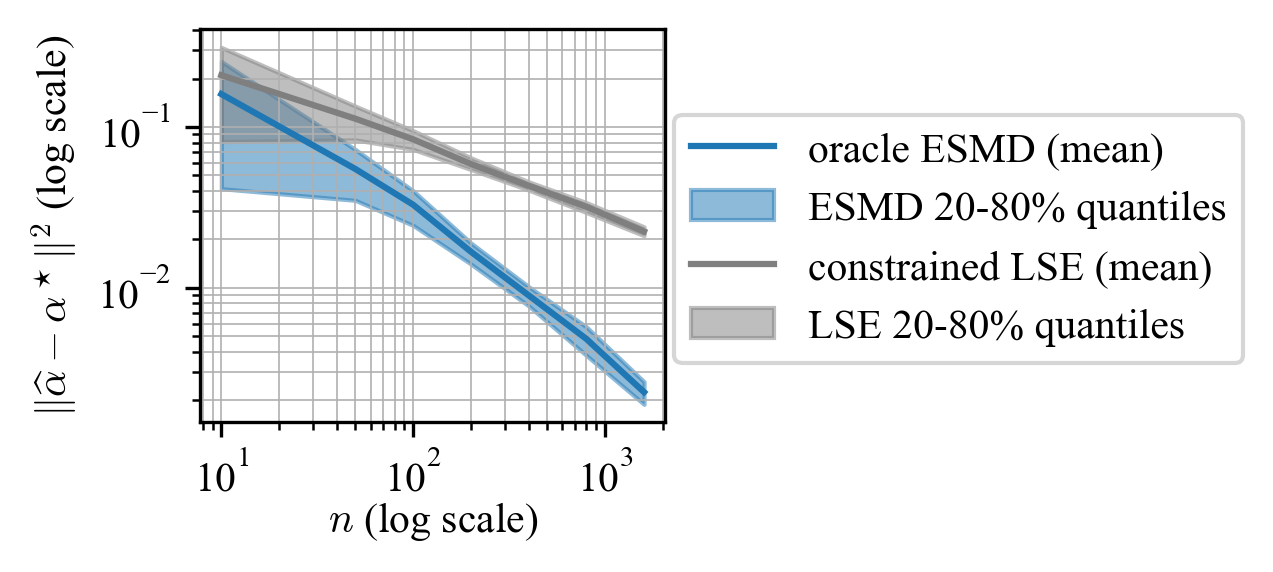}
    \caption{Risk as a funciton of $n$ in the $\ell_{3/2}$-instance described in \cref{sec:conclusion}: ESMD outperforms the LSE.}
    \label{fig:separation-pnorm}
    \vspace{-0.5cm}
\end{wrapfigure}
A caveat of our bound is that it cannot prove that ESMD achieves better rates than the LSE in settings where this may be expected. 
For instance, a setting where the LSE is expected to perform poorly (see Section 2.2 in \citet{aolaritei2025revisiting}) is when we choose $\X=\I_n$, $K=B_{3/2}^d$ and $\alphastar=e_1$, and we scale the noise as $\xi \sim \cN(0,n^{-4/3}\I_n)$. 
Then we can observe in \cref{fig:separation-pnorm} that optimally stopped mirror descent with potential $\norm{\cdot}_{3/2}^2$ outperforms the LSE.
However, a fundamentally different proof technique is required to prove such a discrepancy, as the stationary radius in our bound for ESMD also bounds the risk of the LSE.

Moreover, our comparison between ESMD and the LSE only pertains to the maximal risk over the constraint set, and whether a similar comparison is possible point-wise remains an interesting problem.

Lastly, our analysis is specific to in-sample prediction. The results from \cite{Kanade2023earlystopped} may also be applicable for showing analogous bounds for out-of-sample prediction, but this would require tight bounds on the offset Rademacher complexity of convex bodies and, to facilitate a comparison to the LSE, precise lower bounds for the LSE in out-of-sample prediction. 

\section*{Acknowledgments}
We thank Reese Pathak for helpful feedback and suggesting the experiment in \cref{fig:separation-pnorm}, as well as Tomas Va\v{s}kevi\v{c}ius and Fanny Yang for insightful discussions. Tobias Wegel was partially supported by SNF Grant 204439, and conducted some of this work while he was visiting the Simons Institute for the Theory of Computing. Gil Kur conducted part of this work during his visit to the IDEAL Institute, hosted by Lev Reyzin and supported by NSF ECCS-2217023.
Patrick Rebeschini and Tobias Wegel were partially funded by UK Research and Innovation (UKRI) under the UK government’s Horizon Europe funding guarantee [grant number EP/Y028333/1].

\bibliography{bibliography}

\newpage
\appendix

\counterwithin{lemma}{section}
\addtocontents{toc}{\protect\setcounter{tocdepth}{1}}

\section{Simulations}

\subsection{A Sanity Check for the Hard Design Matrix for \texorpdfstring{$\ell_p$}{lp}-Constraints}
\label{subsec:simulation-minimax-rate}
\begin{wrapfigure}{r}{0.5\textwidth}
    \centering
    \vspace{-0.5cm}
    \includegraphics[width=\linewidth]{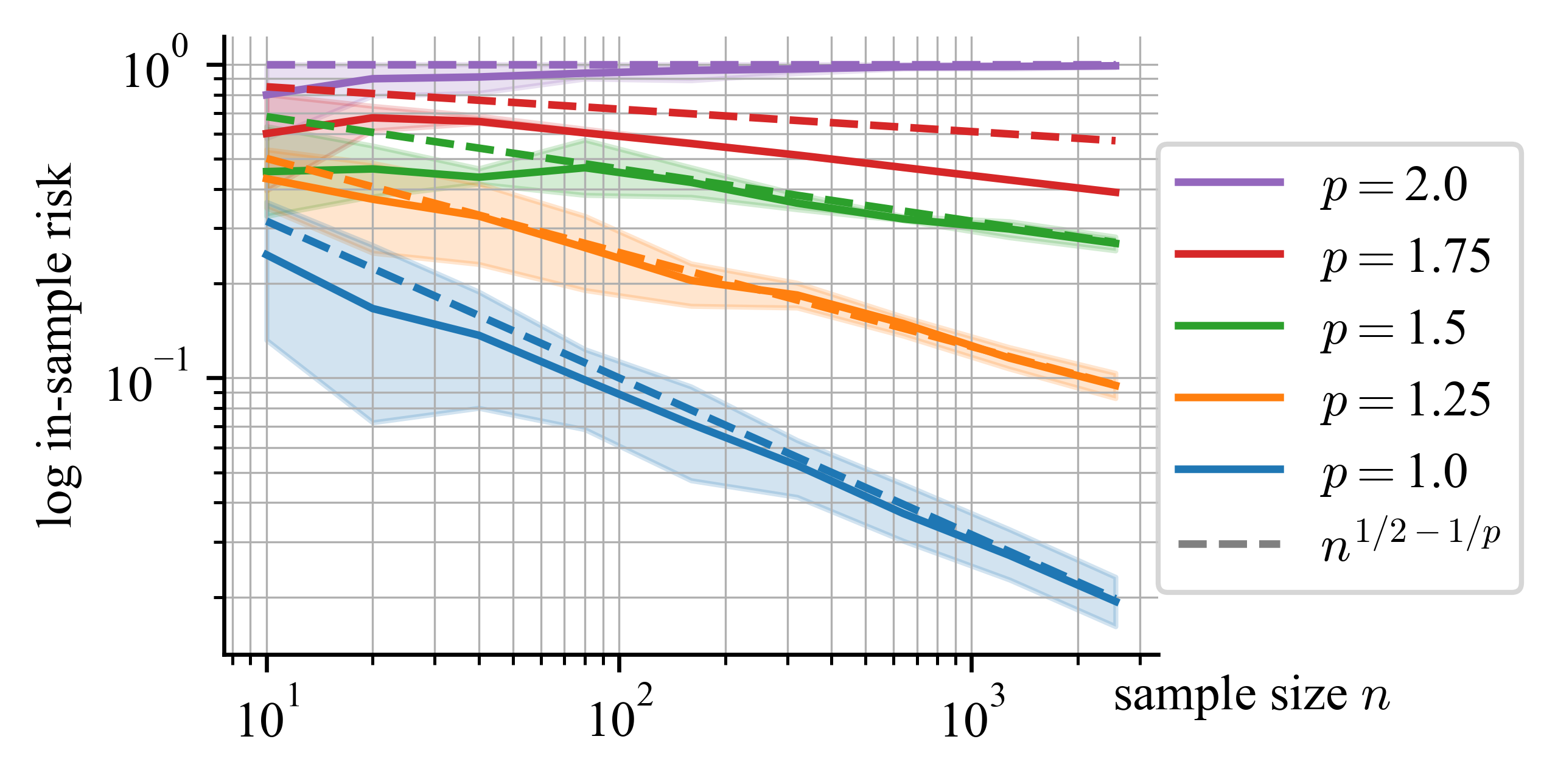}
    \caption{Simulation results for \cref{thm:minimax-rate-1-2}. For each $p$, we plot the log average in-sample risk over 30 experiments in solid, and the log $20$th and $80$th quantile as the shaded regions. The dashed lines show $(1/2-1/p)\log n$.}
    \label{fig:minimax-rate-confirmed}
    \vspace{-0.5cm}
\end{wrapfigure}
In this section, we present a simple simulation using the LSE on the adversarial data matrix constructed in the proof of \cref{thm:minimax-rate-1-2}. We do this as a sanity check of our construction and the rate $d^{1/q}/\sqrt{n}$: Since we know that the LSE also enjoys the upper bound of order $d^{1/q}/\sqrt{n}$ from \cref{prop:lp-norm-squared} (e.g., due to \citet{Bellec2016sharporacleinequalitiessquares}), and according to \cref{thm:minimax-rate-1-2} on the adversarial matrix we have
\begin{equation*}
    \sup_{\alphastar\in B_p^d} \EE_\xi\cR(\haLSE) \gtrsim \frac{d^{1/q}}{\sqrt{n}},
\end{equation*}
there should be a ground-truth $\alphastar\in B_p^d$ attaining the supremum, such that $ \EE_\xi \cR(\haLSE) \asymp  d^{1/q}/\sqrt{n}$.

We do not prove it, but the following simulations suggest that\textemdash at least approximately\textemdash the LSE exhibits the rate $d^{1/q}/\sqrt{n}$ at the ground truth $\alphastar=0$.\footnote{Note that the LSE achieving its worst-case risk at the origin is not obvious a priori \citep{aolaritei2025revisiting}.}
We let $d=n$ vary over $\mathset{10\cdot2^{i}:i\in\mathset{0,\ldots,8}}$, as well as $p$ over $\mathset{1,1.25,1.5,1.75,2}$. We take $\X$ to be the data matrix from the proof of \cref{thm:minimax-rate-1-2}, as defined in \cref{eq:adversarial-matrix} of \cref{subsec:proof-minimax-rate-1-2}, and $\alphastar=0$. 
Thus, the rate from \cref{thm:minimax-rate-1-2} is given by $ \sim d^{1/q}/\sqrt{n}=n^{1-1/p-1/2}= n^{1/2-1/p}$. 

For each $(n,p)$, we then repeat the following experiment $30$ times. 
We sample $y=\X\alphastar+\xi = \xi\sim\cN_n(0,\I_n)$ and 
compute the estimator $\haLSE \in \argmin_{\alpha\in B_p^d} \Rhat(\alpha)$
using the Python library \texttt{cvxpy} \citep{cvxpy}, which is designed for convex optimization.

\cref{fig:minimax-rate-confirmed} shows the results. We plot the sample size $n$ against the average in-sample prediction risk $ \cR(\haLSE)$ in log-scale (solid lines), as well as the (logarithm of) the $20$th and $80$th quantiles (shaded regions) for every pair $(n,p)$. Because $\cR(\haLSE) \sim n^{1/2-1/p}$, we should see a linear dependence with slope $1/2-1/p$. Therefore, we also plot the corresponding lines (dashed lines). And indeed, comparing the risks achieved by the LSE to the rate $n^{1/2-1/p}$, we see that the behavior matches.
This confirms our construction and \cref{thm:minimax-rate-1-2} according to this sanity check.

\subsection{Risk Along \texorpdfstring{$\ell_1$}{l1} Optimization Paths}
\label{subsec:risk-along-optimization-path}

We plot some example optimization paths of some potentials from \cref{subsec:l1-constraints} respectively \cref{tab:l1-potentials} in \cref{fig:optimization-paths}. 
\begin{figure}[h!]
    \centering
    \subfloat[Optimization and regularization paths.]{
    \includegraphics[width=0.63\linewidth]{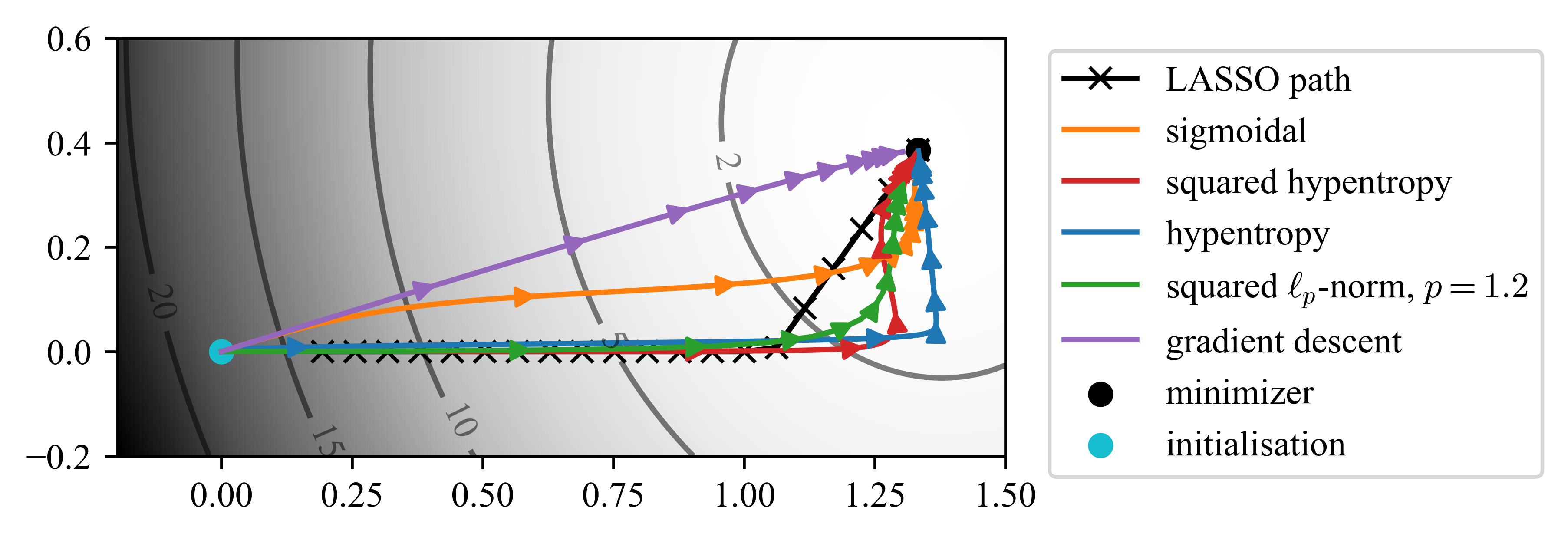}
    \label{fig:iterate-paths}
    }
    \subfloat[Risk along mirror descent paths.]{
    \includegraphics[width=0.35\linewidth]{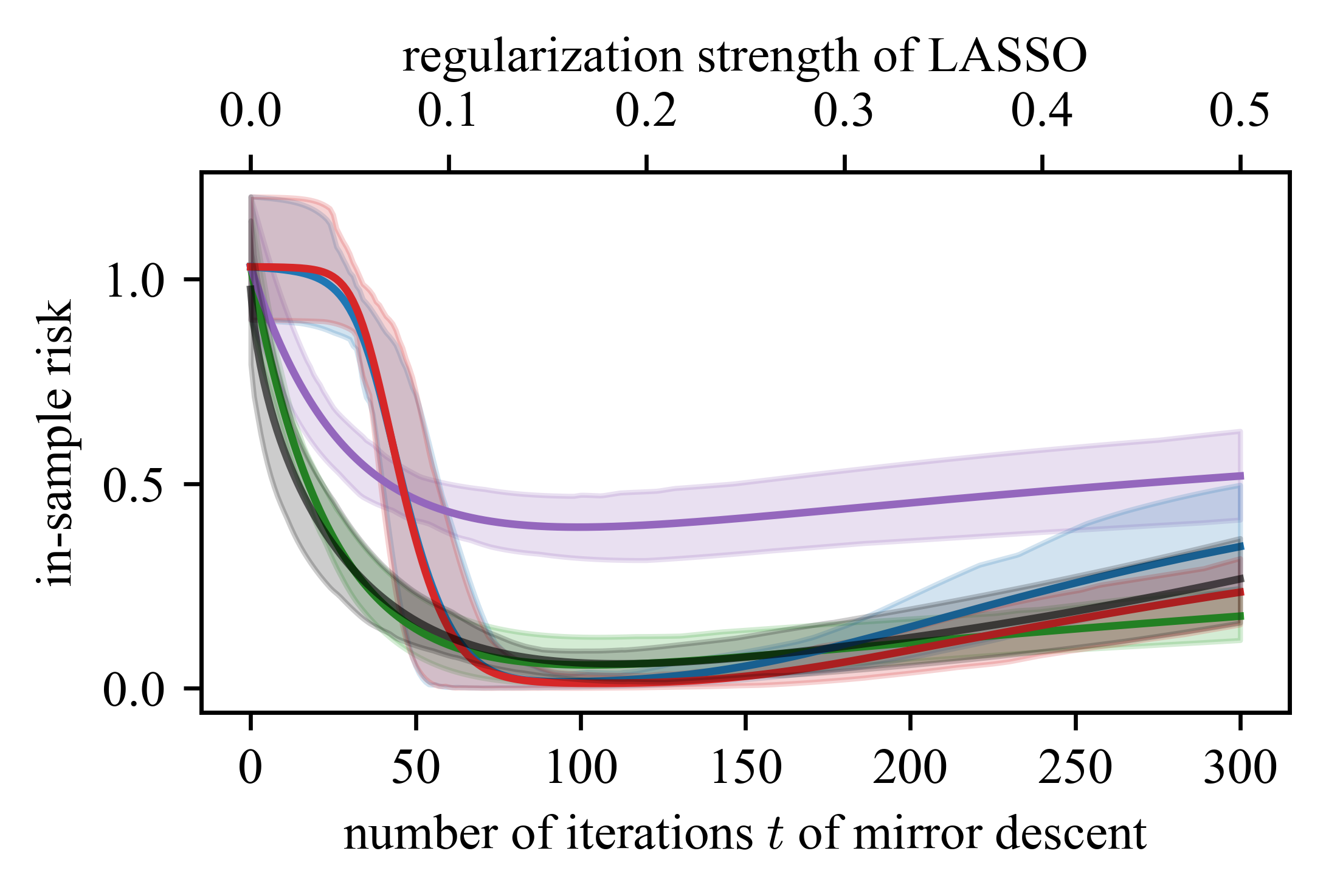}
    \label{fig:risk-along-path}
    }
    \caption{
    Optimization paths of different known and new potentials from this work for regression over $\ell_1$-balls. 
    }
    \label{fig:optimization-paths}
\end{figure}
The squared hypentropy ``fixes'' issues arising for previously used hypentropy from \cite{Ghai2020}. 
In \cref{fig:optimization-paths}\protect\subref{fig:iterate-paths} we plot paths on one data instance for a two-dimensional problem.
Note that the paths can (somewhat) deviate from the LASSO path, but early-stopping still achieves minimax rates (\cref{subsec:l1-constraints}).
In \cref{fig:optimization-paths}\protect\subref{fig:risk-along-path}, we take $\X$ to be Gaussian  for $n=d=100$ and $\alphastar$ to be 1-sparse. We repeat the experiment 50 times and plot mean, $10$th and $90$th percentile.

\subsection{Implementation Details}
Finally, we just note that for all potentials, even those for which the (inverse) gradient does not have a closed-form solution, we can implement mirror descent using the the following expression which is equivalent to \cref{def:discrete-mirror-descent};
\begin{equation*}
    \alpha_{t+1} \in \argmin_{\alpha\in \RR^d} \pr{ \inner{\nabla\Rhat(\alphat)}{\alpha-\alphat}+\frac{1}{\eta}D_\psi(\alpha,\alphat)}.
\end{equation*}
We implement all mirror descent experiments with this approach using \texttt{cvxpy} \cite{cvxpy}. In particular, \cref{fig:M-convex-hull,fig:adaptivity-geometry,fig:separation-pnorm,fig:optimization-paths} are created using mirror descent updates that are implemented in the same way.

\section{Proofs}

\subsection{Preliminaries}

We first show this preliminary result that we use in multiple proofs.
\begin{lemma}
    For any convex body $K$ that contains $0$ in its interior, and any $\tau\geq 1$, it holds that
\begin{equation}
\label{eq:critical-radius-scaling}
    \rzero^2(\tau K):=\sup_{\alphastar\in \tau K}\rzero^2(\alphastar,\tau K)\leq \tau \cdot \sup_{\alphastar\in K} \rzero^2(\alphastar,K) =: \tau \cdot \rzero^2(K).
\end{equation}
\end{lemma}

\begin{proof}
    Note that, by definition, $\rzero(K)$ is bounded by any $r\geq 0$ that satisfies for all $\alphastar \in K$ (cf. \eqref{eq:localized-inequality})
\begin{equation*}
     w\pr{(K-\alphastar)\cap rB_2}\leq \frac{r^2}{2}.
\end{equation*}
Take any $\alphastar \in \tau K$ and define $R = \sqrt{\tau} \cdot \rzero(K)$. We have that
\begin{align*}
    w\pr{(\tau K-\alphastar)\cap RB_2} &= w\pr{\tau \pr{( K-\alphastar/\tau)\cap (R/\tau) B_2}}\\
    &=\tau \cdot w\pr{( K-\alphastar/\tau)\cap (R/\tau) B_2} \\
    &= \tau \cdot w\pr{( K-\alphastar/\tau)\cap (\rzero(K)/\sqrt{\tau}) B_2} \\
    &\leq \tau \cdot w\pr{( K-\alphastar/\tau)\cap \rzero(K) B_2} \\
    &\leq \tau \cdot \frac{\rzero^2(K)}{2} = \frac{R^2}{2}.
\end{align*}
Consequently, $\rzero^2(\alphastar,\tau K)\leq R^2 = \tau \rzero^2(K)$ for every $\alphastar\in \tau K$, and hence we have that $\sup_{\alphastar \in \tau K}\rzero^2(\alphastar, \tau K)\leq \tau \cdot \sup_{\alphastar \in K}\rzero^2(\alphastar, K)$, or in short, $\rzero^2(\tau K) \leq \tau \cdot \rzero^2(K)$.
\end{proof}

\subsection{Proof of Lemma \ref{lem:existence}}
\label{subsec:proof-existence}

    Let $K$ be any convex body with zero contained in its interior. It is known that for every $\delta>0$, there exists a convex body $K_\delta$ such that $K\subset K_\delta\subset (1+\delta)K$, and the corresponding Minkowski functional squared $\varphi_\delta^2 := \varphi_{K_\delta}^2$ is twice continuously differentiable on $\RR^d$ with $\nabla \varphi_\delta^2 (0)=0$,
    see for example \cite[Section 27]{Bonnesen_1934}, \cite{Schneider2013convex}, \cite[Chapter 13]{Guirao2022renormings} and \cite[Chapter 7, \S 9]{Hajek2014smooth}. For this convex body, it clearly holds that $\frac{1}{1+\delta}\varphi_K\leq \varphi_\delta\leq \varphi_K$.

    Furthermore, since $\varphi_\delta^2$ is a convex and twice continuously differentiable function, the function $\psi$ given by
    \begin{equation*}
        \psi(\alpha)=\varphi_\delta^2(\alpha)+\frac{\rho}{2}\norm{\alpha}_2^2
    \end{equation*}
    with $\rho = 2/(\max_{\alpha\in K}\norm{\alpha}_2^2)$
    satisfies
    \paragraph{\ref{ass:suff-1}.} $\psi$ is twice continuously differentiable and the gradient satisfies $\nabla \psi(0) =0$. Therefore, \ref{ass:suff-1} is satisfied.
    
    \paragraph{\ref{ass:suff-2}.} The function $\alpha\mapsto \sqrt{\psi(\alpha)}=\sqrt{\varphi_\delta^2(\alpha)+\frac{\rho}{2}\norm{\alpha}_2^2}$ is convex, as we can write it as \begin{equation*}
        \sqrt{\psi(\alpha)}=\norm{\br{\varphi_\delta(\alpha),\sqrt{\frac{\rho}{2}}\norm{\alpha}_2}^\top}_2
    \end{equation*}
    which is the composition of convex functions and a non-decreasing one, making $\sqrt{\psi}$ convex. Therefore, \ref{ass:suff-2} is satisfied.
        
    \paragraph{\ref{ass:suff-3}.} $\psi$ is $\rho$-strongly convex with respect to the $\ell_2$-norm \ref{ass:suff-3}. This is due to the well-known fact \citep{Nikodem2011characterization} that, because $\norm{\cdot}_2$ is induced by an inner product, $\psi$ is $\rho$-strongly convex with respect to $\norm{\cdot}_2$ if and only if $\psi-\frac{\rho}{2}\norm{\cdot}_2^2=\varphi_\delta^2$ is convex, which clearly holds since $\varphi_\delta$ is a Minkowski functional.

    \paragraph{\ref{ass:suff-4}.} We have that for all $\alpha\in \RR^d$ and $\alpha' \in \tau K$
    \begin{align*}
        \sqrt{\psi(\alpha)}&=\sqrt{\varphi_\delta^2(\alpha)+\frac{\rho}{2}\norm{\alpha}_2^2}\geq \varphi_\delta(\alpha) \geq \frac{1}{1+\delta} \varphi_K(\alpha)=: \frac{1}{c_l} \varphi_K(\alpha) \\
        \sqrt{\psi(\alpha')} &=\sqrt{\varphi_\delta^2(\alpha')+\frac{\rho}{2}\norm{\alpha'}_2^2} \leq \sqrt{\varphi_K^2(\alpha')+\frac{1}{\max_{\alpha\in K}\norm{\alpha}_2^2}\norm{\alpha'}_2^2} \leq \sqrt{2}\tau=: c_u \tau.
    \end{align*}
    Hence \ref{ass:suff-4} is satisfied.

    For discrete time, we only need the potential to be differentiable (not twice differentiable), and hence we can also use the Moreau envelope of $\varphi_K^2$, defined as
    \begin{equation*}
        \cM_{\lambda} \varphi_K^2(\alpha):=\inf_{\alpha'}\mathset{\varphi_K^2(\alpha')+\frac{1}{2\lambda}\norm{\alpha'-\alpha}_2^2}
    \end{equation*}
    and let $\psi(\alpha) = \cM_{\lambda} \varphi_K^2(\alpha) + \frac{\rho}{2}\norm{\alpha}_2^2$ with $\rho=2/(\max_{\alpha\in K}\norm{\alpha}_2^2)$.
    We can verify all necessary properties.

    \paragraph{\ref{ass:suff-1}.} $\psi$ is continuously differentiable with gradient \citep[Theorem 2.26]{Rockafellar2009variational}
        \begin{equation*}
            \nabla \psi(\alpha)=\nabla \cM_\lambda \varphi_K^2(\alpha)+\rho \alpha = \frac{1}{\lambda}(\alpha-P_\lambda \varphi_K^2(\alpha)) +\rho\alpha\quad\text{with}\quad P_\lambda \varphi_K^2(\alpha)=\argmin_{\alpha'}\mathset{\varphi_K^2(\alpha')+\frac{1}{2\lambda}\norm{\alpha'-\alpha}_2^2},
        \end{equation*}
        so that $\nabla \psi (0)=0$, cf.\ \cite[Theorem 5.6]{Planiden2019proximal}, and thus is satisfies \ref{ass:suff-1}.
        
    \paragraph{\ref{ass:suff-2}.} $\sqrt{\psi}$ is convex by \cite[Theorem 5.6]{Planiden2019proximal} and the same argument as above, hence it satisfies \ref{ass:suff-2}.
    
    \paragraph{\ref{ass:suff-3}.} $\psi$ is $\rho$-strongly convex with respect to the $\ell_2$-norm \ref{ass:suff-3} by the same argument as above, noting that the squared Moreau-envelope is convex \citep[Theorem 5.6]{Planiden2019proximal}.
    
    \paragraph{\ref{ass:suff-4}.} To verify \ref{ass:suff-4}, we use the following fact that $\sqrt{\cM_\lambda \varphi_K^2} \leq \varphi_K$, as well as the pointwise convergence $\lim_{\lambda\to 0} \sqrt{\cM_\lambda \varphi_K^2}(\alpha) = \varphi_K(\alpha)$ for all $\alpha\in\RR^d$ \citep[Theorem 5.6]{Planiden2019proximal}. As $\sqrt{\cM_\lambda \varphi_K^2}$ is continuous and $\mathset{\alpha\in\RR^d \setmid \varphi_K(\alpha)=1}$ is compact, by Dini's theorem this pointwise convergence implies uniform convergence, i.e., for the function $\zeta(\lambda)=\min_{\varphi_K(\alpha)=1}\sqrt{\cM_\lambda \varphi_K^2}(\alpha)\leq 1$ that $\lim_{\lambda\to 0}\zeta(\lambda)=1$. Therefore, there exists a $\lambda_0$ such that for all $\lambda \leq \lambda_0$ it holds $\zeta(\lambda)\geq 1/2$ and hence also $\sqrt{\cM_\lambda \varphi_K^2(\alpha)}\geq \varphi_K(\alpha)/2$ for all $\alpha$ with $\varphi_K(\alpha)=1$. Because $\sqrt{\cM_\lambda \varphi_K^2}$ is another Minkowski functional  \citep[Theorem 5.6]{Planiden2019proximal} and hence positive homogeneous, the same holds on $\RR^d$.

        Therefore, for $\rho=2/(\max_{\alpha\in K}\norm{\alpha}_2^2)$ and all $\alpha\in \RR^d$, $\alpha'\in \tau K$ it holds
        \begin{align*}
            \sqrt{\psi(\alpha)}&=\sqrt{\cM_{\lambda} \varphi_K^2(\alpha) + \frac{\rho}{2}\norm{\alpha}_2^2} \geq \sqrt{\cM_{\lambda} \varphi_K^2(\alpha)} \geq \frac{1}{2}\varphi_K(\alpha) = \frac{1}{c_l}\cdot \varphi_K(\alpha) \\
            \sqrt{\psi(\alpha')}&=\sqrt{\cM_{\lambda} \varphi_K^2(\alpha') + \frac{\rho}{2}\norm{\alpha'}_2^2} \leq \sqrt{\varphi_K^2(\alpha')+\frac{1}{\max_{\alpha\in K}\norm{\alpha}_2^2}\norm{\alpha'}_2^2} \leq \sqrt{2}\tau =: c_u \tau
        \end{align*}
    This concludes the proof.

\subsection{Proof of Theorem \ref{thm:in-sample-risk-bound}}
\label{subsec:proof-in-sample-risk-bound}
We begin by proving this auxiliary statement:
Let \cref{ass:sufficient-conditions-psi} hold. Then
\begin{equation}
\label{eq:inclusion-2}
    B_\psi(\alphastar,2 D_\psi(\alphastar,0))\subset K_{3c_a\tau}.
\end{equation}
The proof is based on the following two inequalities. We denote $f=\sqrt{\psi}$ so that $\psi(\alpha)=f^2(\alpha)$. Hence, for $f^2(\alpha)>0$,
\begin{align*}
    2D_\psi(\alphastar,0)&=2\pr{f^2(\alphastar)-f^2(0)-\inner{\nabla f^2(0)}{\alphastar-0}} \\
    &\leq 2f^2(\alphastar) &&(\ref{ass:suff-1}: \nabla f^2(0)=0)\\
    D_\psi(\alphastar,\alpha)&= f^2(\alphastar)-f^2(\alpha)-\inner{\nabla f^2(\alpha)}{\alphastar-\alpha} \\
    &=f^2(\alphastar)-f^2(\alpha)-2f(\alpha)\inner{\nabla f(\alpha)}{\alphastar-\alpha} && (\text{chain rule}) \\
    &\geq f^2(\alphastar)-f^2(\alpha)-2f(\alpha)(f(\alphastar)-f(\alpha)) && (\ref{ass:suff-2}: \text{convexity of $f$}) \\
    &=f^2(\alphastar)+f^2(\alpha)-2f(\alpha)f(\alphastar) \\
    &= (f(\alphastar)-f(\alpha))^2
\end{align*}
Since for every $\alpha\in B_\psi(\alphastar,2D_\psi(\alphastar,0))$ we have by definition that $D_\psi(\alphastar,\alpha)\leq 2D_\psi(\alphastar,0)$, the following inequality holds too;
\begin{equation*}
    (f(\alphastar)-f(\alpha))^2 \leq 2f^2(\alphastar) \implies f(\alpha)\leq (1+\sqrt{2})f(\alphastar) \leq 3f(\alphastar).
\end{equation*}
Therefore, by \ref{ass:suff-4} we have that
\begin{equation*}
    \varphi_K(\alpha) \leq c_lf(\alpha) \leq 3c_lf(\alphastar) \leq 3c_lc_u\tau = 3c_a\tau
\end{equation*}
and hence $B_\psi(\alphastar,2D_\psi(\alphastar,0))\subset 3c_a\Kt$, which finishes the proof of this first auxiliary part. Notice how we could improve the constant $3$ arbitrarily close to $2$ in the above argument.

Define the event
\begin{equation*}
    A_1:= \mathset{\Rhat(\alphastar)\leq 2}.
\end{equation*}
This event holds with high probability, as $\Rhat(\alphastar)=\norm{\xi}_2^2/n$, and thus
\begin{align*}
    \PP\pr{A_1}&=\PP\pr{\norm{\xi}_2^2/n\leq 2} = \PP\pr{\norm{\xi}_2^2\leq 2n} \\
    &\geq 1-\pr{\frac{2n}{n}\exp\pr{1-\frac{2n}{n}}}^{n/2} \\
    &= 1-\exp\pr{\pr{\log(2)+1-2}n/2} \\
    &\geq 1-\exp\pr{-0.1 n}
\end{align*}
where we use Chernoff's bound for $\chi_n^2$-distributed random variables in the first inequality.
Conditioned on $A_1$, we have by \cite[Theorems 2 and 3]{Kanade2023earlystopped} that for
\begin{equation*}
    \eta\leq \frac{\rho}{\beta} \wedge \frac{ D_{\psi}(\alphastar,0)}{2}
\end{equation*}
(the latter of which implies $\eta \Rhat(\alphastar)\leq D_\psi(\alphastar,0)$)
and any $\eps>0$ we may choose later, there exists a stopping time 
\begin{equation*}
    t^\star\leq \left\lceil\frac{D_\psi(\alphastar,0)+\eta \Rhat(\alphastar)}{\eta\eps}\right\rceil \leq \left\lceil\frac{2D_\psi(\alphastar,0)}{\eta\eps}\right\rceil \leq \left\lceil\frac{2c_u^2\tau^2}{\eta\eps}\right\rceil=T \quad \text{or} \quad t^\star\leq \frac{D_\psi(\alphastar,0)}{\eps}\leq \frac{c_u^2\tau^2}{\eps}=T
\end{equation*}
for the discrete and continuous-time cases respectively, such that the following two hold:
\begin{enumerate}
    \item For all $t\leq \tstar$, it holds $\alphat\in B_{\psi}(\alphastar,D_\psi(\alphastar,0)+\eta \Rhat(\alphastar))$. 
    \item $\Rhat(\alphatstar)-\Rhat(\alphastar)+\cR(\alphatstar)\leq \eps$, which is called the offset condition with parameter $\eps>0$.
\end{enumerate}
From the choice of $\eta$, on the event $A_1$, we get $\alphat \in B_{\psi}(\alphastar,2D_\psi(\alphastar,0))$.
As we have shown in the auxiliary part \eqref{eq:inclusion-2}, this implies $\alpha_t\in 3c_a\Kt$ for all $t\leq t^\star$. We now combine this with the second part in a localization argument.

By rearranging the offset condition, we retrieve the inequality
\begin{align}
    \cR(\alphatstar) &\leq \Rhat(\alphastar)-\Rhat(\alphatstar) + \eps \nonumber\\
    &= \frac{1}{n}\norm{y-\X\alphastar}_2^2-\frac{1}{n}\norm{y-\X\alphatstar}_2^2+\eps \nonumber\\
    &= \frac{1}{n}\norm{\xi}_2^2-\frac{1}{n}\norm{\X(\alphastar-\alphatstar)}_2^2-\frac{2}{n}\inner{\xi}{\X(\alphastar-\alphatstar)} -\frac{1}{n}\norm{\xi}_2^2+\eps \nonumber\\
    &=-\frac{1}{n}\norm{\X(\alphastar-\alphatstar)}_2^2-\frac{2}{n}\inner{\xi}{\X(\alphastar-\alphatstar)}+\eps, \nonumber\\
    &=\frac{2}{n}\inner{\xi}{\X(\alphatstar-\alphastar)}-\cR(\alphatstar)+\eps. \label{eq:basic-inequality}
\end{align}
We \emph{could} use this to get a unlocalized uniform bound, by bounding this as $\cR(\alphatstar)\leq \sup_{\alpha\in K_{3c_a\tau}}\frac{1}{n}\inner{\xi}{\X(\alphastar-\alpha)}+\eps/2$. However, we can improve upon this using a localization argument.
We follow an argument akin to \cite[Theorem 2.3]{Bellec2016sharporacleinequalitiessquares} where it was used for the LSE. 
Define the random variable
\begin{equation*}
    Z_r: = \sup_{\theta\in (\X K_{3c_a\tau}-\X \alphastar)\cap rB_2^n} \inner{\xi}{ \theta}
\end{equation*}
which is the supremum of a Gaussian process indexed by $\theta\in (\X K_{3c_a\tau}-\X \alphastar)\cap rB_2^n$. 
By concentration of the supremum of Gaussian processes \citep[Theorem 5.8]{Boucheron2013concentration}, for any $r>0$, the event
\begin{equation*}
    A_2 =\mathset{Z_r \leq \EE\br{Z_r} +r\sqrt{2\log(1/\delta)}}
\end{equation*}
has probability at least $1-\delta$.
Recall \cref{def:critical-radius}, in particular, of the stationary radius
\begin{equation*}
    \rzero(\X\alphastar,\X K_{3c_a\tau}) = \inf \mathset{r\geq 0 \setmid w\pr{(\X K_{3c_a\tau}-\X \alphastar)\cap rB_2^n}-\frac{r^2}{2} \leq 0}
\end{equation*}
and denote $\rzero= \rzero(\X\alphastar,\X K_{3c_a\tau})$. By definition, we have that
\begin{equation*}
    \EE\br{Z_{\rzero}}= w\pr{(\X K_{3c_a\tau}-\X \alphastar)\cap \rzero B_2^n}\leq \frac{\rzero^2}{2}
\end{equation*}
and hence, conditioned on $A_2$ with $r=\rzero$, we have
\begin{equation*}
    Z_{\rzero}\leq \frac{\rzero^2}{2}+\rzero\sqrt{2\log(1/\delta)}.
\end{equation*}

We now make a case distinction between $\cR(\alphatstar)\leq \rzero^2/n$ and $\cR(\alphatstar)> \rzero^2/n$ conditioned on $A_2$.
In the first case, we get trivially 
\begin{equation*}
    \cR(\alphatstar)\leq \frac{\rzero^2}{n} \leq \frac{1}{n}\pr{\rzero+\sqrt{2\log(1/\delta)}}^2+\eps.
\end{equation*}

In the second case when $ \cR(\alphatstar)=\frac{1}{n}\norm{\X(\alphatstar-\alphastar)}_2^2> \rzero^2/n$, we define $\lambda = \rzero/\norm{\X(\alphatstar-\alphastar)}_2\in(0,1)$ and $v=\lambda \X\alphatstar+(1-\lambda)\X\alphastar \in \X K_{3c_a\tau}$. Note $(v-\X\alphastar)/\lambda = \X\alphatstar-\X\alphastar$ as well as $\cR(\alphatstar)=\frac{1}{n}\norm{\X(\alphastar-\alphatstar)}_2^2=\rzero^2/(n\lambda^2)$, and hence bounding \cref{eq:basic-inequality} yields with probability at least $1-\delta$
\begin{align*}
    \frac{2}{n}\inner{\xi}{\X(\alphatstar-\alphastar)}-\cR(\alphatstar)+\eps&=\frac{1}{n}\pr{\frac{2}{\lambda }\inner{\xi}{v-\X\alphastar}-\frac{\rzero^2}{\lambda^2}} +\eps\\
    &\leq \frac{1}{n}\pr{\frac{2}{\lambda}Z_{\rzero}-\frac{\rzero^2}{\lambda^2}}+\eps\\
    &=\frac{1}{n}\pr{\frac{2\rzero}{\lambda}\frac{Z_{\rzero}}{\rzero}-\frac{\rzero^2}{\lambda^2}}+\eps\\
    &\leq \frac{1}{n}\pr{\frac{Z_{\rzero}}{\rzero}}^2+\eps &&(\text{since }2ab-b^2\leq a^2)\\
    &\leq \frac{1}{n}\pr{\frac{\rzero}{2}+\sqrt{2\log(1/\delta)}}^2+\eps.
\end{align*}
Hence, either way, we have that
\begin{equation*}
    \cR(\alphatstar)=\frac{1}{n}\norm{\X(\alphatstar-\alphastar)}_2^2\leq \frac{1}{n}\pr{\rzero+\sqrt{2\log(1/\delta)}}^2+\eps \leq \frac{2\rzero^2+4\log(1/\delta)}{n}+\eps. 
\end{equation*}
Taking the union bound on $A_1$ and $A_2$, we get that with probability at least $1-\exp\pr{-0.1n}-\delta$, it holds
\begin{equation*}
    \cR(\alphatstar)\leq \frac{2\rzero^2}{n}+\frac{4\log(1/\delta)}{n} + \eps
\end{equation*}
where $\rzero = \rzero(\X\alphastar,\X K_{3c_a\tau})$. This concludes the proof of the high probability bound.

The in-expectation bound for continuous time mirror descent follows by observing that we do not require the event $A_1$ to hold, and so the high probability bound holds with probability $1-\delta$ (in fact, one could improve its constants, which we neglect here for simplicity). In particular, denoting $z=\frac{2\rzero^2}{n}+\eps$ we have that
\begin{equation*}
    \cR(\alphatstar)\leq z+\frac{4\log(1/\delta)}{n}.
\end{equation*}
From tail integration, we then obtain
\begin{align*}
    \EE_\xi\br{\cR(\alphatstar)}&\leq \int_0^\infty \PP\pr{\cR(\alphatstar)>x}dx \\
    &=\int_0^{z}\PP\pr{\cR(\alphatstar)>x}dx + \int_{z}^\infty \PP\pr{\cR(\alphatstar)>x}dx \\
    &\leq z + \int_z^\infty \exp\pr{-\frac{n}{4}(x-z)}dx\\
    &= z+\frac{4}{n},
\end{align*}
concluding the in-expectation bound and hence the proof.

\subsection{Proof of Remark \ref{rem:bound-rank}}
\label{subsec:proof-bound-rank}
    An argument akin to the proof of Theorem 7 in \cite{Bellec2017localizedgaussianwidthmconvex} shows that if we let $r=2\sqrt{\rank{\X}}$ and we denote the orthogonal projection onto the column space of $\X$ as $\Pi_\X$, we have $\EE\br{\norm{\Pi_\X\xi}_2}\leq \sqrt{\rank{\X}}$, and hence for every $\alphastar\in K$ we have
    \begin{equation*}
        w((\X K-\X\alphastar) \cap rB_2^d)=\EE\br{\sup_{\alpha\in (K-\alphastar),\norm{\X\alpha}_2\leq r}\inner{\xi}{\X\alpha}}\leq r\EE\br{\norm{\Pi_\X\xi}_2}\leq  r\sqrt{\rank{\X}}=\frac{r^2}{2}
    \end{equation*}
    implying by \eqref{eq:localized-inequality} that $\rzero^2(\X K)\leq 4\rank{\X}$.

\subsection{Proof of Corollary \ref{cor:comparison-LSE}}
\label{subsec:proof-comparison-LSE}

We begin by quantifying the constant $C$ in \cite[Corollary 1.2]{Chatterjee2014}: Temporarily denote $\rstar:=\rstar(\X\alphastar,\X \Kt)$.
By Theorem 1.1 in \cite{Chatterjee2014}, we know that
\begin{align*}
    \PP\pr{\abs{\norm{\X\pr{\haLSE-\alphastar}}_2-\rstar}\geq x\sqrt{\rstar}}&\leq 3\exp\pr{-\frac{x^4}{32(1+\frac{x}{\sqrt{\rstar}})^2}} \leq 3\exp\pr{-\frac{x^4}{32(1+x)^2}},
\end{align*}
where the second inequality is true if $\rstar\geq 1$.
From tail integration we get
\begin{align*}
    \EE_\xi\pr{\norm{\X(\haLSE-\alphastar)}_2-\rstar}^2 &= \int_0^\infty \PP\pr{\pr{\norm{\X(\haLSE-\alphastar)}_2-\rstar}^2\geq u}du \\
    &= 2\rstar \int_0^\infty x\PP\pr{\abs{\norm{\X\pr{\haLSE-\alphastar}}_2-\rstar}\geq x\sqrt{\rstar}}dx \\
    &\leq  6\rstar\int_0^\infty x \exp\pr{-\frac{x^4}{32(1+x)^2}}dx \\
    &\leq  125 \rstar
\end{align*}
and similarly, without the square,
\begin{align*}
    \EE_\xi\abs{\norm{\X(\haLSE-\alphastar)}_2-\rstar}
    &\leq  3\sqrt{\rstar}\int_0^\infty \exp\pr{-\frac{x^4}{32(1+x)^2}}dx \leq  18 \sqrt{\rstar}.
\end{align*}
Combining the two, we get
\begin{align*}
    \abs{\EE_\xi \norm{\X(\haLSE-\alphastar)}_2^2-\rstar^2}&= \abs{2\rstar \EE_\xi(\norm{\X(\haLSE-\alphastar)}_2-\rstar)+\EE_\xi(\norm{\X(\haLSE-\alphastar)}_2-\rstar)^2} \\
    &\leq 2\cdot 18\cdot \rstar^{3/2}+125\rstar \leq 161\rstar^{3/2}.
\end{align*}
Consequently, for every $\alphastar\in \Kt$ with $\rstar(\X\alphastar,\X\Kt)\geq (644/3)^2$, it holds $ \EE_\xi\cR(\haLSE)\geq \frac{1}{4}\frac{\rstar^2}{n}$, and in particular, if $\rstar(\X\Kt)\geq (644/3)^2$, then
\begin{equation}
    \label{eq:tight-bound-LSE}
     \sup_{\alphastar\in \Kt} \EE_\xi\cR(\haLSE) \geq \frac{\rstar^2(\X\Kt)}{4n}.
\end{equation}

Moreover, from \cref{thm:in-sample-risk-bound} we know that for $\rzero(\X\alphastar,\X K_{3c_a\tau})\geq 1$ and $\eps=\rzero(\X\alphastar,\X K_{3c_a\tau})/n$, in continuous time,
\begin{equation*}
    \EE_\xi \cR(\alphatstar) \leq \frac{7\rzero^2(\X\alphastar,\X K_{3c_a\tau})}{n}.
\end{equation*}
Hence, combining the two equations and using \cref{eq:critical-radius-scaling} yields
\begin{align}
    \sup_{\alphastar\in \Kt}\EE_\xi \cR(\alphatstar)&\leq \sup_{\alphastar\in \Kt} \frac{7\rzero^2(\X\alphastar, \X K_{3c_a\tau})}{n} \nonumber \\
    &\leq 21c_a \frac{ \rzero^2( \X K_{\tau})}{n} \tag{\text{from } \eqref{eq:critical-radius-scaling}} \\
    &\leq 21\cC c_a \cdot \frac{\rstar^2(\X K_{\tau})}{n} \tag{\text{by \cref{ass:unreal}}} \\
    &\leq 84 \cC c_a \cdot \sup_{\alphastar\in \Kt} \EE_\xi \cR (\haLSE), \tag{\text{from } \eqref{eq:tight-bound-LSE}}
\end{align}
which concludes the proof.

\subsection{Proof of Corollary \ref{cor:minimax-optimality}}
\label{subsec:proof-minimax-optimality}

The proof of \cref{cor:minimax-optimality} is analgous to the proof of \citet[Corollary 2.6]{Prasadan2024some}, where we can replace the upper bound on the LSE from their Proposition 2.4 with our bound on continuous-time ESMD from \cref{thm:in-sample-risk-bound} and use \cref{eq:critical-radius-scaling}.

Specifically, from \cref{thm:in-sample-risk-bound}, applying \cref{eq:critical-radius-scaling} and $c_a\lesssim 1$, we know that
\begin{equation*}
    \sup_{\alphastar\in \Kt}\EE_\xi \cR(\alphatstar)\lesssim  \frac{\rzero^2(\X K_{3c_a\tau})}{n} \lesssim \frac{\rzero^2(\X \Kt)}{n}.
\end{equation*}
Also note that in the proof of \cref{thm:in-sample-risk-bound} (\cref{subsec:proof-in-sample-risk-bound}), we showed that $\alphatstar\in 3c_a\Kt$, and so $\EE_\xi \cR(\alphatstar)\lesssim (\diam(\X \Kt))^2/n$.

Denote now temporarily
\begin{equation*}
    \Rbar:= \sup_{\alphastar\in \Kt} \EE_\xi \cR(\alphatstar).
\end{equation*}
Now, for every $\alphastar\in \Kt$, $r\mapsto \sup_{\alphastar\in \Kt} w(\X(\Kt-\alphastar)\cap rB_2^n)/r$ is non-increasing \citep[page 7]{Prasadan2024some}. Denoting $\rzero=\rzero(\X \Kt)$, we get that for some constant $c_1>0$, and some $r<\rzero$
\begin{align*}
    c_1\sqrt{n\Rbar}\leq r &\leq 2 \sup_{\alphastar\in \Kt} \frac{w(\X(\Kt-\alphastar)\cap r B_2^n)}{r} \tag{by \cref{def:critical-radius} and $r<\rzero$} \\
    &\leq 2  \frac{\sup_{\alphastar\in \Kt} w(\X(\Kt-\alphastar)\cap c_1\sqrt{n\Rbar} B_2^n)}{c_1\sqrt{n\Rbar}} \tag{non-increasing} \\
    &\leq 2 \max\mathset{1,\frac{1}{c_1}} \frac{\sup_{\alphastar\in \Kt} w(\X(\Kt-\alphastar)\cap \sqrt{n\Rbar} B_2^n)}{\sqrt{n\Rbar}}\\
    &\lesssim \sqrt{\log \locEntropy\pr{\sqrt{n\Rbar}, \X \Kt}} \tag{by \cref{ass:local-entropy}}
\end{align*}
Hence, $\sqrt{n\Rbar} \leq \sup\mathset{r>0: r\lesssim \sqrt{\log \locEntropy\pr{r, \X \Kt}}}$, which by the main result of \cite{Neykov2022minimax} yields minimax optimality of ESMD.

\subsection{Proof of Corollary \ref{cor:estimation-risk}}
\label{subsec:proof-estimation-risk}

    Consider the setting of \cref{thm:in-sample-risk-bound}, and assume that $\X$ has vanishing kernel width satisfying
    \begin{equation*}
        \frac{1}{n}\norm{\X\alpha}_2^2\geq \norm{\alpha}_2^2-f(3c_a\tau K,n)\quad \text{for all }\alpha\in 3c_a\tau\pr{K-K}.
    \end{equation*}
    Per assumption, we have that $f(K_{3c_a\tau },n)\lesssim \rzero^2(\X K_{3c_a\tau })/n$, which combined with the bound from \cref{thm:in-sample-risk-bound} and the fact that $\alphatstar\in 3c_a\tau K$ (see the proof of \cref{thm:in-sample-risk-bound}) yields for all $\alphastar\in \Kt$
    \begin{equation*}
        \norm{\alphastar-\alphatstar}_2^2\leq \frac{1}{n}\norm{\X(\alphatstar-\alphastar)}_2^2+f(3c_a\tau K,n)\lesssim \frac{\rzero^2}{n}+\frac{\log(1/\delta)}{n} + \frac{\rzero^2}{n}\asymp \frac{\rzero^2}{n}+\frac{\log(1/\delta)}{n},
    \end{equation*}
    where we denoted $\rzero = \rzero(\X K_{3c_a\tau })$.
    Applying \cref{eq:critical-radius-scaling} concludes the proof of the corollary.

\subsection{Proof of Proposition \ref{prop:lp-norm-squared}}
\label{subsec:proof-lp-norm-squared}

    Let $p\in(1,2)$ and $\varphi_K=\norm{\cdot}_p$. We first show that $\psi=\norm{\cdot}_p^2$ satisfies the discrete-time version of \cref{ass:sufficient-conditions-psi}.

    \paragraph{\ref{ass:suff-1}.} First, $\psi(\cdot)=\norm{\cdot}_p^2$ with $p\in(1,2)$ is differentiable \citep[Lemma 17][Example 3.2]{Shalev2007online,Juditsky2008large} with 
    \begin{equation*}
         \nabla \psi(\alpha)= 
         \begin{cases}
             2\frac{\sign{\alpha_i}\abs{\alpha_i}^{p-1}}{\norm{\alpha}_p^{p-2}} & \alpha \neq 0 \\
             0 & \alpha = 0
         \end{cases}       
         \quad \text{and} \quad 
         (\nabla \psi)^{-1}(\alpha)= 
         \begin{cases}
             \frac{\sign{\alpha_i}\abs{\alpha_i}^{q-1}}{2\norm{\alpha}_q^{q-2}} & \alpha \neq 0 \\
             0 & \alpha = 0
         \end{cases} 
         \quad \text{with} \quad \frac{1}{p}+\frac{1}{q}=1,
    \end{equation*}
    see for example \cite[Lemma 1]{Gentile2003}. Hence, $\nabla \psi(0)=0$.

    \paragraph{\ref{ass:suff-2}.} $\sqrt{\psi(\cdot)}=\norm{\cdot}_p$ is convex, as it is a norm.
    
    \paragraph{\ref{ass:suff-3}.} $\psi$ is $2(p-1)$-strongly convex with respect to the $\ell_p$-norm by \cite[Lemma 17]{Shalev2007online}.

    \paragraph{\ref{ass:suff-4}} This clearly holds with $c_u=c_l=1$.

    We bound $\rzero(\X K_{3\tau})=\rzero(3\tau\X B_p^d)$ under column-normalized design \eqref{eq:column-normalization}. We can use $\rzero^2(3\tau \X B_p^d)\leq 6\tau w(\X B_p^d)$ and then further bound $w(\X B_p^d)$. We can apply Hölder's inequality with $1/p+1/q=1$ and get, if $p> 1+1/\log d$
    \begin{equation*}
        w(\X B_p^d)=\EE\br{\sup_{\alpha\in B_p^d}\inner{\xi}{\X\alpha}} \leq \EE\br{\sup_{\alpha\in B_p^d}\norm{\X^\top\xi}_q\norm{\alpha}_p} \leq \EE\br{\norm{\X^\top\xi}_{q}} \leq \pr{\EE\br{\norm{\X^\top\xi}_{q}^q}}^{1/q}.
    \end{equation*}
    By column normalization, and since $\abs{\inner{\xi}{\X_j}} \stackrel{d}{=} \norm{\X_j}_2\abs{Z}$ for $Z\sim\cN(0,1)$, we get that
    \begin{equation*}
        \pr{\EE\br{\norm{\X^\top\xi}_{q}^q}}^{1/q} = (\EE \abs{Z}^q)^{1/q} \pr{\sum_{j=1}^d\norm{\X_j}_2^q}^{1/q} \asymp \sqrt{q} \sqrt{n} d^{1/q}.
    \end{equation*}
    On the other hand, if $p\leq 1+1/\log d$ and $d\geq 2$, we have that
    \begin{equation*}
        w(\X B_p^d)=\EE\br{\sup_{\alpha\in B_p^d}\inner{\xi}{\X\alpha}} \leq \EE\br{\sup_{\alpha\in B_p^d}\norm{\X^\top\xi}_q\norm{\alpha}_p} \leq d^{1/q}\EE\br{\norm{\X^\top\xi}_{\infty}} \leq d^{1/q}2\sqrt{n\log d} \leq 2e\sqrt{n\log d}.
    \end{equation*}
    where in the second last inequality we used that since $\X^\top\xi\sim \cN(0,\X^\top\X)$, and $\X$ is column-normalized, we know that $(\X^\top\xi)_i,i\in[d]$ is Gaussian with variance $n$, so that we may apply standard sub-Gaussian concentration, such as \cite[Exercise 2.5.10]{Vershynin2018} or \cite[Exercise 2.12]{Wainwright2019}, and in the last inequality we used that for $p\leq 1+1/\log d$ it holds that $d^{1/q}=d^{1-1/p}\leq d^{1/\log d}\leq e$.
    Hence, 
    \begin{equation}
        \label{eq:p-norm-radius-bound}
        \rzero^2(3\tau \X B_p^d)\lesssim 
    \begin{cases} 
         \tau \sqrt{n\log d} & \text{if } p\leq 1+1/\log d \\
         \tau \sqrt{q}\sqrt{n}d^{1/q} & \text{if } p> 1+1/\log d
    \end{cases}
    \end{equation}
    and from \cref{thm:in-sample-risk-bound} and \cref{rem:bound-rank} we get that
    \begin{align*}
        \cR(\alphatstar) &\leq \frac{4\rzero^2(\X K_{3\tau})}{n}+\frac{8\log(1/\delta)}{n} \\
        &\lesssim \frac{\rank{\X}}{n}\wedge 
        \frac{\tau}{\sqrt{n}}
        \begin{cases}
            \sqrt{\log d} & \text{if } 1<p\leq 1+\frac{1}{\log d},\\
            \sqrt{q} d^{1/q} & \text{if } 1+\frac{1}{\log d}< p < 2.
        \end{cases}
    \end{align*}

    To bound $\rzero(3\tau \X B_p^d)$ in under Gaussian design, we use \cite[Exercise 7.5.4]{Vershynin2018} and get
    \begin{equation*}
        \rzero^2(\X B_p^d)\leq 2 w(\X B_p^d)\leq 2\norm{\X}_2 w(B_p^d).
    \end{equation*}
    By \cite[Theorem 6.1]{Wainwright2019} we have that
    \begin{equation*}
        \PP\pr{\norm{\X}_2 \leq 3\sqrt{n} } \geq 1-\exp\pr{-n/2}
    \end{equation*}
    and for $1/q+1/p=1$ it is easily verified that, because for small enough $p$ we have $B_1^d\subset B_p^d \subset eB_1^d$ \citep{Lecue2017regularization,Vershynin2018},  the Gaussian width is bounded as
    \begin{equation*}
        w(B_p^d) \lesssim 
        \begin{cases}
            \sqrt{\log d} & \text{if } 1<p\leq 1+\frac{1}{\log d}, \\
            \sqrt{q}d^{1/q} & \text{if } 1+\frac{1}{\log d}< p <2,
        \end{cases}
    \end{equation*}
    so that with probability $1-\exp(-n/2)$ over draws of $\X$ the bound \cref{eq:p-norm-radius-bound} holds again. 
    This concludes the proof.

\subsection{Proof of Theorem \ref{thm:minimax-rate-1-2}}
\label{subsec:proof-minimax-rate-1-2}

Let $p\in(1,2)$ and $1/p+1/q=1$.  We introduce some further notation just for this proof.
We use $N(\eps,S)$, $M(\eps,S)$ to denote the $\eps$-covering number and, respectively, the $\eps$-packing number of a set $S$ with respect to the $\ell_2$-norm \citep[Definitions 5.1 and 5.4]{Wainwright2019}. $\log N(\eps,S)$ is commonly referred to as the metric entropy of $S$. We assume basic knowledge about covering and packing numbers that can be found, e.g., in Chapter 5 in \cite{Wainwright2019}.

\paragraph{Fixed design.} We will construct a hard design matrix $\X$ and reducing the problem to a Gaussian sequence model over a convex constraint set in $\RR^n$.

Let $m\in\NN$ and $k\in\NN$ be defined as
\begin{equation*}
    m = \floor{\frac{d^{1/p}}{\sqrt{n}} } \asymp\frac{d^{1/p}}{\sqrt{n}}\qquad \text{and} \qquad k = \floor{ \sqrt{n}d^{1/q}} \asymp \sqrt{n}d^{1/q}.
\end{equation*}
where we may write ``$\asymp$'' in the first case, because we assume that $ d^{1/p}\geq \sqrt{n}$.
We note that:
\begin{enumerate}
    \item By definition of $k$ and $m$, it holds that $k\cdot m \leq d^{1/p}d^{1/q}= d $.
    \item By definition of $k$, it holds that $k\leq n$ if $\sqrt{n}\geq d^{1/q}$, which we assumed. 
\end{enumerate}

Define $\ind_m = (1,\dots,1)^\top\in \RR^m$ as well as the all-zeros matrix $\zeros_{n\times d}\in\RR^{n\times d}$. We define the data matrix 
\begin{equation}
\label{eq:adversarial-matrix}
    \X=\sqrt{n}
    \begin{pmatrix}
        \A_{k\times km} & \zeros_{k\times(d-km)} \\
        \zeros_{(n-k)\times km} & \zeros_{(n-k)\times(d-km)}
    \end{pmatrix}
    \in\RR^{n\times d} \quad \text{with} \quad 
    \A_{k\times km}=
    \begin{pmatrix}
        \ind_m^\top & \zeros_{1\times m} & \hdots &\zeros_{1\times m}\\
        \zeros_{1\times m}  & \ind_m^\top & \hdots & \zeros_{1\times m}\\
        \vdots & \vdots & \ddots & \vdots \\
        \zeros_{1\times m} & \zeros_{1\times m} & \dots & \ind_m^\top
    \end{pmatrix}
    \in \RR^{k\times km}.
\end{equation}
Note that the columns of $\X$ are normalised to $\norm{\X_i}_2\leq \sqrt{n}$.

As a first step, we reduce our problem to a more well-studied Gaussian sequence model.
Define $t:=\sqrt{n}m^{1/q}$ and $x_i, v_i\in \RR^d,i\in[k]$ as 
\begin{equation*}
    x_i=\sqrt{n}(\zeros_{1\times m(i-1)},\ind_m^\top,\zeros_{1\times (d-mi)})^\top \quad \text{and} \quad v_i=\frac{m^{-1/p}}{\sqrt{n}} x_i,
\end{equation*}
where $x_i$ is the vector corresponding to the $i$-th row of $\X$. Note that $\norm{v_i}_p=\norm{ \frac{m^{-1/p}}{\sqrt{n}} x_i }_p= m^{-1/p} m^{1/p}=1$ and thus $v_i\in  B_p^d$.
Denoting the $i$-th standard-basis vector in $\RR^n$ as $\e_i$, we have that
\begin{equation*}
    \X v_i = \frac{m^{-1/p}}{\sqrt{n}} \X x_i =  \sqrt{n} m^{1-1/p}  \e_i = \sqrt{n} m^{1/q}\e_i = t\e_i
\end{equation*}
for all $i\in[k]$. Taking the $p$-convex hull of the $v_i$ amounts to choosing any $\delta_i,i\in[k]$ such that $\sum_{i=1}^k\abs{\delta_i}^p\leq1$ and letting $v_\delta = \sum_{i=1}^k\delta_i v_i$. It follows that $v_\delta\in B_p^d$ and
\begin{equation*}
    \X v_\delta = \sum_{i=1}^k\delta_i \X v_i = t \sum_{i=1}^k\delta_i \e_i.
\end{equation*}
Because $B_p^k\times \mathset{0}^{n-k}$ can be decomposed into $\sum_{i=1}^k\delta_i\e_i$, we have that
\begin{equation*}    
  S_t:=tB^k_p\times \mathset{0}^{n-k} \subset \X B^d_p\subset \RR^n.
\end{equation*}

Therefore, since $y=\X\alphastar+\xi$ with $\xi\sim \cN(0,\I_n)$ and $S_t\subset \X B_p^d$, the minimax rate for estimation in the Gaussian sequence model $y=\theta^\star+\xi$, with $\theta^\star\in S_t$ is a lower bound for our original problem, that is,
\begin{equation}
\label{eq:reduction-Gaussian-sequence}
    \inf_{\ha}\sup_{\alphastar\in  B_p^d}\EE\br{\norm{\X(\alphastar-\ha)}_2^2}\geq \inf_{\widehat \theta}\sup_{\theta^\star\in S_t}\EE\br{\|\theta^\star-\widehat \theta\|_2^2}.
\end{equation}

We may now use the following result from \citet[Theorem 11.7]{johnstone2017gaussian}, see also \citet[page 6]{aolaritei2025revisiting}:
Let $p\in(1,2)$ and consider the Gaussian sequence model $y = \theta^\star+\xi$ where $\xi\sim \cN(0,\sigma^2 \I_d)$. Then, if $\sigma^2 \in [\frac{1}{d^{2/p}}, \frac{1}{1+\log d}]$, the minimax rate over $B_p^d$ is given by
\begin{equation*}
    \mathfrak{M}(B_p^d,\sigma) := \inf_{\widehat\theta} \sup_{\theta^\star\in B_p^d} \EE_\xi\br{\|\widehat\theta(y)-\theta^\star\|_2^2} \asymp (\sigma^2 \log(e d \sigma^p))^{1-p/2}.
\end{equation*}

To apply this result to \cref{eq:reduction-Gaussian-sequence}, we use the following identity
\begin{equation*}
    \mathfrak{M}(\tau B_p^d,\sigma) = \tau^2 \cdot \mathfrak{M}\pr{ B_p^d,\frac{\sigma}{\tau}},
\end{equation*}
which follows from simple rescaling arguments. Moreover, we rely on the fact that if a convex body 
$K$ (here $S_t$) is contained in a linear subspace, then the minimax risk of the Gaussian sequence model over $K\subset\RR^d$ coincides with that of the model restricted to the subspace, since the orthogonal projection onto the subspace constitutes a sufficient statistic for the parameter. In particular, we can bound \cref{eq:reduction-Gaussian-sequence} as
\begin{equation*}
    \inf_{\widehat \theta}\sup_{\theta^\star\in S_t}\EE\br{\|\theta^\star-\widehat \theta\|_2^2} \geq \mathfrak{M}(t B_p^k,1) = t^2 \cdot \mathfrak{M}(B_p^k,1/t) = t^{p} \pr{\log\pr{ek/t^p}}^{1-p/2}
\end{equation*}
where the last equality holds because
\begin{equation*}
    t^p=\pr{\sqrt{n}m^{1/q}}^p \asymp_p \pr{\sqrt{n}\pr{\frac{d^{1/p}}{\sqrt{n}}}^{1/q}}^p = (\sqrt{n})^{p-p/q} d^{1/q}=\sqrt{n}d^{1/q} = k \quad \implies \quad \pr{\log\pr{ek/t^p}}^{1-p/2} \asymp_p 1,
\end{equation*}
and in particular, $1/t^2 \asymp_p 1/k^{2/p}  \in [\frac{1}{k^{2/p}}, \frac{1}{1+\log k}]$. Plugging this into the lower bound and dividing by $n$ proves that the minimax rate on this data matrix is lower bounded by $c_p d^{1/q}/\sqrt{n}$, which concludes the first part.

    \paragraph{Gaussian design.}
    Let $p\in[1+c_1/\log \log d,2)$, $1/p+1/q = 1$ and $ n(\log d)^{c_0} \leq d \leq n^{q/2}$. Suppose $\X\in\RR^{n\times d}$ is a standard Gaussian matrix.

    We begin by noting that for some constant $c>0$, the event 
    \begin{equation*}
        \cE_1 = \mathset{\sqrt{d}-c\sqrt{n} \leq \sigma_{\min}(\X)\leq \sigma_{\max}(\X) \leq \sqrt{d}+c\sqrt{n}}
    \end{equation*}
    has probability $\PP\pr{\cE_1}\geq 1-2\exp\pr{-n}$ \citep[Theorem 4.6.1]{Vershynin2018}.
    Since $d\geq n(\log d)^{c_0}$ we have that $\sigma_{\min}(\X)\asymp \sqrt{d}$ with probability at least $1-2\exp\pr{-n}$. Since $\norm{\cdot}_p \leq d^{1/p-1/2} \norm{\cdot}_2$, we know that $d^{1/2-1/p} B_2^d\subset B_p^d$, and hence on $\cE_1$ that for some constant $c>0$
    \begin{align*}
          c \sqrt{d} d^{1/2-1/p} B_2^n = c d^{1/q} B_2^n \subset \X B_p^d.
    \end{align*}
    
    Following \cite{Neykov2022minimax}, conditioned on $\X$, we need to solve the stationary condition of 
    \begin{equation*}
        \log M(\eps,\mathbf{X}B_p^d  \cap C\eps B_2^n) \asymp \log M(\eps,\mathbf{X}B_p^d )  \asymp \eps^2.
    \end{equation*}
    to determine the minimax rate given $\X$, where the first ``$\asymp$'' holds by a pigeon-hole argument.
    We solve this by first noting that on the event $\cE_1$, the solution must satisfy $\eps \gtrsim d^{1/q}$, since on $\cE_1$ and if $\eps \asymp d^{1/q}$ we have
    \begin{equation*}
        \log M (\eps , \X B_p^d ) \geq \log M (\eps , c\eps B_2^n ) \gtrsim n \geq d^{2/q} \asymp \eps^2.
    \end{equation*}
    Thus, all we need is to estimate $ \log M(\eps,\mathbf{X}B_p^d ) $ when $\eps \gtrsim d^{1/q}$. This was done (implicitly) in \cite{Kur2024minimum}, when $p \geq 1+ c_1/\log\log(d)$, $ d \gtrsim n(\log d)^{c_0}$ and $n$ is sufficiently large. Specifically, it was shown that if $ \eps \gtrsim  d^{1/q}$, the event
    \begin{equation}\label{eq:coveringlp}
                 \cE_2 = \mathset{\log M(\eps,\mathbf{X}B_p^d  ) \asymp \log M(\eps/\sqrt{n}, B_p^d )}
    \end{equation}
    has probability at least $1-C\exp\pr{-cn}$. 
    We outline the proof of \eqref{eq:coveringlp} after finishing the main proof.
    Using \eqref{eq:coveringlp}, for $\eps \gtrsim d^{1/q}$ we can solve the inequality
    \begin{equation*}
        \log M(\eps /\sqrt{n}, B_p^d )  \asymp \pr{\frac{\eps}{\sqrt{n}}}^{-\frac{2p}{2-p}}\log \pr{d\pr{\frac{\eps}{\sqrt{n}}}^{\frac{2p}{2-p}}} \gtrsim \eps^2,
    \end{equation*}
    where we used \eqref{eq:metric-entropy-lp} from \cite{Schutt1984entropy}:
    \begin{equation}
    \label{eq:metric-entropy-lp}
        \log N(\eps,B_p^k)\asymp_p
        \begin{cases}
            \eps^{-\frac{2p}{2-p}}\log\pr{k \eps^{\frac{2p}{2-p}}} & \text{if } \eps \gtrsim k^{1/2-1/p}, \\
            k\log \pr{k^{-1}\eps^{-\frac{2p}{2-p}}} & \text{if } \eps \lesssim k^{1/2-1/p},
        \end{cases}
    \end{equation}
    We can see that choosing $\eps^2 = n^{p/2}(\log d)^{1-p/2}$ satisfies this inequality.
    It follows from dividing by $n$ and taking the union bound over $\cE_1,\cE_2$ that with probability $1-c_2\exp(-c_3n)$ over draws of the data matrix $\X$, we have
    \begin{equation*}
        \inf_{\widehat\alpha}\sup_{\alphastar \in B_p^d} \EE_\xi \cR(\widehat\alpha) \asymp n^{p/2-1} (\log d)^{1-p/2}  \vee \frac{d^{2/q}}{n},
    \end{equation*}
    which finishes the proof.
    
\begin{proof}[Proof of \eqref{eq:coveringlp}]
    We outline the proof of \eqref{eq:coveringlp} based on arguments from \cite{Kur2024minimum} here again for completeness.     
    Define the $i$-th dyadic entropy number of any set $S$ as 
    \begin{equation*}
        e_i(S)=\inf\mathset{\eps>0 : \log_2 N(\eps,S) \leq i-1}.
    \end{equation*}
    By definition, we have that $e_i(S)\leq \eps$, if and only if $\log N(\eps,S)\leq i$. We know by Schütt's Theorem \citep{Schutt1984entropy} that the $i$-th dyadic entropy number of $B_p^k$ is given by
    \begin{equation}
    \label{eq:dyadic-entropy-lp}
        e_i(B_p^k) \asymp 
        \begin{cases}
            1 & \text{if } 1\leq i \leq \log k,\\
            \pr{\frac{\log(ek/i)}{i}}^{1/p-1/2} & \text{if } \log k \leq i \leq k, \\
            2^{-i/k}k^{1/2-1/p} & \text{if } i\geq k.
        \end{cases}
    \end{equation}
    
    For $i \in [d]$ with $\log d \leq i \leq d$, define $\Sigma_i = \mathset{\sigma\subset [d]\setmid \abs{\sigma}\asymp i/\log(ed/i)}$
    and $\eps_i := e_i(B_p^d) \asymp (\log(ed/i)/i)^{1/p-1/2}$ (recall \eqref{eq:dyadic-entropy-lp}), and define the sets
    \begin{equation*}
        \cV_i = \bigcup_{\sigma \in \Sigma_i} A_\sigma \quad \text{where} \quad A_\sigma =   \frac{\eps_i}{\sqrt{\abs{\sigma}}} \cdot B_\infty^\sigma = \frac{\eps_i}{\sqrt{\abs{\sigma}}} \mathset{v\in \RR^d\setmid v_i= 0 \text{ if }i\notin \sigma, \norm{v}_\infty \leq 1}.
    \end{equation*}
    Then it is easy to see that $\cV_i \subset B_p^d$, as for all $v\in \cV_i$ we have
    \begin{equation*}
        \norm{v}_p \leq \abs{\sigma}^{1/p} \norm{v}_\infty \leq \abs{\sigma}^{1/p-1/2} \eps_i \lesssim \pr{\frac{i}{\log (ed/i)}}^{1/p-1/2} \pr{\frac{\log(ed/i)}{i}}^{1/p-1/2} \leq 1.
    \end{equation*}
    Also note that by the same calculation, for $v\in \cV_i$, it holds $\norm{v_i}_2\leq \eps_i$.

    Let $\cN_i$ be an $\eps_i$-covering of $\cV_i$ and note that \citep[underneath Equation (25)]{Kur2024minimum} we get $\log \abs{\cN_i} \lesssim i$, meaning it has the same metric entropy for $\eps_i$ as $B_p^d$.
    Following \cite{Schutt1984entropy,Kur2024minimum}, one can see that for $\cI := \{2^0,2^1,2^2,\ldots,c \cdot d \}$, every  $v \in B_{p}^d$ can be written as
    \begin{equation*}  
        v = v_0 + \sum_{i\in \cI}\delta_i v_i  \qquad \text{where} \qquad \norm{v_0}_2 \leq \eps_{d} \lesssim d^{1/q - 1/2} 
    \end{equation*}
    and $v_i\in \cN_i$, $\|v_i\|_p\asymp 1 $, $\sum_{i\in \cI}\delta_i^{p} \leq 1$. Consider the partition of $\cI=\cI_1\cup \cI_2$ with $\cI_1 = \{2^0,2^1,2^2,\ldots,c n \}$, $\cI_2 =  \{n,2^1n,2^2n,\ldots,c \cdot d\}$ (where w.l.o.g., we assume these terms are powers of two).
    By triangle inequality, it follows that we can bound the norm of $\X v$ for any $v\in B_p^d$ as 
    \begin{equation*}
        \norm{\X v}_2 \leq \norm{\X v_0}_2 + \norm{\sum_{i\in \cI_1} \delta_i \X v_i}_2 + \norm{\sum_{i\in \cI_2} \delta_i \X v_i}_2.
    \end{equation*}
    We now bound each term separately.

    To that end, recall that by the Johnson-Lindenstrauss Lemma \citep[Theorem 5.3.1 and Exercise 5.3.3]{Vershynin2018}, we know that for $\delta \in (0,1)$ and any finite set $\cN\subset \RR^d$ with $n\geq (C/\delta^2)\log \abs{\cN}$, it holds that
    \begin{equation}\label{eq:JL}
       \forall v,w\in \cN: \quad (1 - \delta) \cdot \|v-w\|_{2} \leq  \frac{1}{\sqrt{n}}\|\mathbf{X}(v-w)\|_{2} \leq (1+ \delta) \cdot \|v-w\|_{2}
    \end{equation}
    with probability of at least $1-2\exp(-cn\delta^2)$.

    \begin{enumerate}
        \item Note that as the maximal singular value satisfies $\sigma_{\max}(\mathbf X) \lesssim \sqrt{d}$ on the event $\cE_1$, we get that
        \begin{equation*}
            \norm{\mathbf{X} v_0}_2 \lesssim \sqrt{d} \cdot \varepsilon_{d} \lesssim  d^{1/q}.
        \end{equation*}
    
        \item By \eqref{eq:JL}, since $\log \abs{\bigcup_{i \in \cI_1}\cN_i} \lesssim n $, we obtain for $ \cN = \bigcup_{i\in \cI_1}\cN_i$ that
        \begin{equation*}
            \cE_3 = \mathset{\forall v,w\in \cN:\  \norm{\X (v-w)}_{2} \asymp \sqrt{n} \cdot \|v-w\|_{2}},
        \end{equation*}
        holds with probability at least $1-2\exp\pr{-cn}$. Clearly, on $\cE_3$, we also have that $\norm{\X v}_{2} \asymp \sqrt{n} \cdot \|v\|_{2}$ for all $v\in \cN$.
    
        \item Again, by \eqref{eq:JL} with $\delta\asymp \sqrt{i/n}$ so that $\log \abs{\cN_i} \lesssim i \asymp \delta^2 n$, the event
        \begin{equation*}
            \cE_4=\mathset{\forall i \in \cI_2:\ \sup_{v \in \cN_i}\norm{ \X v }_{2} \lesssim \sqrt{n} \cdot \sqrt{i/n}   \cdot \norm{v}_2 \leq  \sqrt{n} \cdot \sqrt{i/n}   \cdot \eps_i = \sqrt{i} \cdot \eps_i}.
        \end{equation*}
        holds with probability at least $1-2\sum_{i\in \cI_2}\exp(-c i)\geq 1-C\exp(cn)$. 
        Hence, by triangle and Hölder's inequalities, we obtain that 
        \begin{align*}
            \norm{\sum_{i \in \cI_2} \delta_i \mathbf{X} v_i }_{2} 
            \leq \sum_{i \in \cI_2} \sqrt{i} \, \varepsilon_i \cdot \delta_i \lesssim \left(\sum_{i \in \cI_2} \delta_i^p \right)^{1/p} \cdot \left(\sum_{i \in \cI_2} (\sqrt{i} \, \varepsilon_i)^{q} \right)^{1/q} \lesssim \left(\sum_{i \in \cI_2} (\sqrt{i} \, \varepsilon_i)^{q} \right)^{1/q}.
        \end{align*}
        Plugging in $\eps_i\lesssim \log(d/i)^{1/p-1/2} i^{1/2-1/p}$ from \eqref{eq:dyadic-entropy-lp} we get that this is bounded by
        \begin{equation*}
             \left(\sum_{i \in \cI_2} (\sqrt{i} \, \varepsilon_i)^{q} \right)^{1/q}\lesssim \left(\sum_{i \in \cI_2} \left( \log(d/i)^{1/p - 1/2} i^{1/q} \right)^{q} \right)^{1/q} \lesssim \log(d)^{2/q} \cdot d^{1/q} \lesssim d^{1/q}
        \end{equation*}
        where in the last inequality we used that $p \geq 1 + C/\log\log(d)$. 
    \end{enumerate}
    Note that if $\eps \gtrsim d^{1/q}$, we have that $\log M(\eps, \X B_p^d)\asymp \log M(\eps, \X B_p^d \setminus d^{1/q }B_2^n)$. 
    We just showed that on the intersection of events $\cE_1,\cE_3, \cE_4$ we have that if $\norm{\X v}_2 \geq d^{1/q}$, it holds that $d^{1/q} \leq \norm{\X v}_2 \lesssim \sqrt{n} \norm{\sum_{i\in \cI_1} \delta_i v_i}_2 + d^{1/q}$, and hence $\norm{\sum_{i\in \cI_1} \delta_i v_i}_2\gtrsim d^{1/q}/\sqrt{n}$.
    Therefore, the metric entropy beyond the threshold $d^{1/q}$ only depends on the $i$-th dyadic numbers of $B_p^d$ with $i\in \cI_1$. On this set, we already showed that the Johnson-Lindenstrauss Lemma applies (i.e., $\cE_3$), and hence
    \begin{equation*}
        \log M(\eps, \X B_p^d) \asymp \log M(\eps /\sqrt{n}, B_p^d)
    \end{equation*}
    with probability at least $1-C\exp\pr{-c n}$, which is \eqref{eq:coveringlp}.
\end{proof}

\subsection{Proof of Theorem \ref{thm:l1-all-combined}} 
\label{subsec:proof-l1-all-combined}

    We split the proof of \cref{thm:l1-all-combined} into several auxiliary lemmas for each potential, which we prove in \cref{subsec:proofs-auxiliary-l1-lemmas}.
    Specifically, we show for each potential from \cref{tab:l1-potentials} that it satisfies \cref{ass:sufficient-conditions-psi} with the parameters stated in \cref{tab:l1-potentials}. \cref{lem:lp-for-l1} provides the details for the squared $\ell_p$-norm, \cref{lem:l1-moreau} for the Moreau envelope-based potential, \cref{lem:squared-hypentropy} for the adjusted hypentropy and \cref{lem:sigmoidal-approximation} for the sigmoidal potential. We prove \cref{lem:lp-for-l1,lem:l1-moreau,lem:squared-hypentropy,lem:sigmoidal-approximation} in \cref{subsec:proofs-auxiliary-l1-lemmas}.

    \begin{lemma}
\label{lem:lp-for-l1}
    Suppose that $\varphi_K=\norm{\cdot}_1$ and $\psi=\norm{\cdot}_p^2$ with $1< p\leq 1+1/\log(d)$. Then the discrete time versions of \cref{ass:sufficient-conditions-psi} are satisfied with constants $c_l=e,c_u=1$ and $\rho=2(p-1)$ w.r.t.\ the $\ell_p$-norm.
\end{lemma}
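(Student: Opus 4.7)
The plan is to verify the four items of Assumption~\ref{ass:sufficient-conditions-psi} for $\psi = \|\cdot\|_p^2$ with $p \in (1, 1+1/\log d]$; items \ref{ass:suff-1}--\ref{ass:suff-3} are essentially textbook facts about the $p$-norm, and the only point requiring some care is the quantitative relation \ref{ass:suff-4} with $c_l = e$.

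For \ref{ass:suff-1} (continuous time), it is standard (and was already recorded for the proof of Proposition~\ref{prop:lp-norm-squared} in \cref{subsec:proof-lp-norm-squared}) that for $1 < p \leq 2$ the map $\alpha \mapsto \|\alpha\|_p^2$ is $C^2$ on $\RR^d$ with $\nabla \psi(0) = 0$. For \ref{ass:suff-2}, $\sqrt{\psi} = \|\cdot\|_p$ is a norm, hence convex. For \ref{ass:suff-3}, the $2(p-1)$-strong convexity of $\|\cdot\|_p^2$ with respect to the $\ell_p$-norm is \citet[Lemma~17]{Shalev2007online} (and also the argument used in the proof of \cref{prop:lp-norm-squared}); strong convexity implies strict convexity, which is what the continuous-time definition in \cref{def:continuous-mirror-descent} demands.

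The substantive step is \ref{ass:suff-4}. The upper bound $\sqrt{\psi(\alpha)} = \|\alpha\|_p \leq \|\alpha\|_1 \leq \tau$ for $\alpha \in \tau B_1^d$ is immediate from monotonicity of $\ell_p$-norms in $p$, giving $c_u = 1$. For the lower bound, Hölder's inequality yields
\begin{equation*}
    \|\alpha\|_1 \leq d^{1-1/p} \|\alpha\|_p = d^{(p-1)/p} \|\alpha\|_p,
\end{equation*}
so it suffices to bound $d^{(p-1)/p}$ by $e$. Using $p - 1 \leq 1/\log d$ and $p \geq 1$, we get $(p-1)/p \leq 1/\log d$, hence
\begin{equation*}
    d^{(p-1)/p} \leq d^{1/\log d} = \exp\!\pr{\tfrac{\log d}{\log d}} = e,
\end{equation*}
which gives $c_l = e$. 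Combining the four items completes the verification. The only mild subtlety in the argument — and the reason the range of $p$ is restricted — is the quantitative Hölder bound, where pushing $p$ any closer to $1$ would blow up the equivalence constant between $\ell_1$ and $\ell_p$ beyond an absolute constant.
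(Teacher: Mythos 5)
Your proposal follows the paper's own proof essentially verbatim: items \ref{ass:suff-1}--\ref{ass:suff-3} are discharged by appeal to the facts already recorded for \cref{prop:lp-norm-squared} (including the $2(p-1)$-strong convexity from \citet[Lemma 17]{Shalev2007online}), and the substantive step \ref{ass:suff-4} is the identical computation, namely $\norm{\alpha}_1\leq d^{1-1/p}\norm{\alpha}_p$ combined with $d^{1-1/p}\leq e$ when $p\leq 1+1/\log d$, together with $\norm{\alpha}_p\leq\norm{\alpha}_1\leq \tau$ on $\tau B_1^d$. One inaccuracy worth flagging: your claim that $\norm{\cdot}_p^2$ is $C^2$ on $\RR^d$ is false for $1<p<2$ and $d\geq 2$ (second derivatives behave like $\abs{\alpha_i}^{p-2}$ and blow up at points where a coordinate vanishes); only first-order differentiability holds, which is all the paper itself verifies in \cref{prop:lp-norm-squared} and all the discrete-time part of \ref{ass:suff-1} requires, so this imprecision mirrors the paper's own loose ``continuous time'' phrasing rather than constituting a new gap in your argument.
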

\begin{lemma}
\label{lem:l1-moreau}
    Suppose that $\varphi_K=\norm{\cdot}_1$ and 
    \begin{equation*}
            \psi(\alpha) = \pr{\sum_{i=1}^d h_\lambda(\alpha_i)+\frac{d\lambda}{2}}^2+\frac{\rho}{2}\norm{\alpha}_2^2 \quad \text{with} \quad h_\lambda (x)= \begin{cases} \frac{x^2}{2\lambda} & \abs{x}\leq \lambda \\ \abs{x}-\frac{\lambda}{2} & \abs{x}> \lambda \end{cases}
    \end{equation*}
    with $\lambda \leq 2\tau /d $ and $\rho =2$.
    Then the discrete-time version of \cref{ass:sufficient-conditions-psi} is satisfied with $\rho=2$ with respect to the $\ell_2$-norm and constants $c_l=1, c_u = \sqrt{5}$.
\end{lemma}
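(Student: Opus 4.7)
The plan is to verify each of the four sub-assumptions \ref{ass:suff-1}--\ref{ass:suff-4} directly, exploiting that $\psi$ has been written precisely in the form $\psi = f^2 + (\rho/2)\|\cdot\|_2^2$ with $f(\alpha) = \sum_i h_\lambda(\alpha_i) + d\lambda/2$, and using two elementary pointwise bounds on the Huber function: $|x| - \lambda/2 \leq h_\lambda(x) \leq |x|$ for every $x\in\RR$. The first of these follows by definition of $h_\lambda$ on $\{|x|>\lambda\}$ and from a trivial estimate on $\{|x|\leq \lambda\}$; the second uses $x^2/(2\lambda)\leq |x|$ whenever $|x|\leq \lambda$ and is immediate on $\{|x|>\lambda\}$.

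For \ref{ass:suff-1} I would note that $h_\lambda$ is $C^1$ (the two pieces match in value and derivative at $|x|=\lambda$), so $f$ is $C^1$ and thus $\psi$ is $C^1$. At $\alpha=0$ we have $h_\lambda'(0)=0$, hence $\nabla f(0)=0$ and so $\nabla\psi(0) = 2f(0)\nabla f(0) + \rho\cdot 0 = 0$. For \ref{ass:suff-2}, since $f\geq d\lambda/2>0$ and $f$ is convex (sum of convex Huber terms plus a constant), I would write
\begin{equation*}
    \sqrt{\psi(\alpha)} = \left\|\bigl(f(\alpha),\,\sqrt{\rho/2}\,\|\alpha\|_2\bigr)^\top\right\|_2,
\end{equation*}
so $\sqrt{\psi}$ is the composition of the $\ell_2$-norm (convex and nondecreasing in each argument on $[0,\infty)^2$) with two non-negative convex functions, and therefore convex. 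For \ref{ass:suff-3}, I would use the standard characterization: $\psi$ is $\rho$-strongly convex w.r.t.\ $\|\cdot\|_2$ iff $\psi - (\rho/2)\|\cdot\|_2^2 = f^2$ is convex, which holds because $f\geq 0$ is convex, so $f^2$ is convex.

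For the sandwich \ref{ass:suff-4}, the lower Huber bound gives $f(\alpha) \geq \sum_i |\alpha_i| - d\lambda/2 + d\lambda/2 = \|\alpha\|_1 = \varphi_K(\alpha)$, so $\sqrt{\psi(\alpha)} \geq f(\alpha) \geq \varphi_K(\alpha)$, yielding $c_l=1$. For the upper bound, if $\alpha \in \tau B_1^d$ the upper Huber bound gives $\sum_i h_\lambda(\alpha_i)\leq \|\alpha\|_1 \leq \tau$, so $f(\alpha)\leq \tau + d\lambda/2 \leq 2\tau$ by the assumption $\lambda \leq 2\tau/d$. Combined with $\|\alpha\|_2 \leq \|\alpha\|_1 \leq \tau$ and $\rho=2$, we get $\psi(\alpha) \leq (2\tau)^2 + \tau^2 = 5\tau^2$, so $\sqrt{\psi(\alpha)}\leq \sqrt{5}\,\tau$, giving $c_u = \sqrt{5}$.

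There is essentially no obstacle here: the proof is a collection of two-line verifications once the Huber sandwich $|x|-\lambda/2\leq h_\lambda(x)\leq |x|$ and the ``sum-of-squares'' representation of $\sqrt{\psi}$ are in place. The only point requiring mild care is the choice $\rho=2$ together with $\lambda \leq 2\tau/d$: the former must be small enough that the $(\rho/2)\|\alpha\|_2^2$ contribution does not blow up the upper constant $c_u$, while the latter ensures that the ``offset'' $d\lambda/2$ added to align $f$ with $\|\cdot\|_1$ at zero remains comparable to $\tau$, so both the shift in \ref{ass:suff-1} (making $\nabla\psi(0)=0$) and the upper bound in \ref{ass:suff-4} hold simultaneously.
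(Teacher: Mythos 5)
Your proposal is correct and follows essentially the same route as the paper: the same sum-of-squares representation of $\sqrt{\psi}$ for \ref{ass:suff-2}, the same strong-convexity characterization via convexity of $\psi-\tfrac{\rho}{2}\norm{\cdot}_2^2$ for \ref{ass:suff-3}, and the same constants in \ref{ass:suff-4}. The only (harmless) difference is that you verify the needed facts directly from the explicit Huber form via the sandwich $\abs{x}-\lambda/2\leq h_\lambda(x)\leq \abs{x}$, whereas the paper identifies $\sum_i h_\lambda(\alpha_i)$ as the Moreau envelope $\cM_\lambda\norm{\cdot}_1$ and cites its smoothness and the bound $\cM_\lambda f\leq f$ from the literature.
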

\begin{lemma}
\label{lem:squared-hypentropy}
    Suppose that $\varphi_K=\norm{\cdot}_1$ and 
    \begin{align*}
        \psi(\alpha)= \pr{\sum_{i=1}^d \frac{\pr{ \alpha_i\arcsinh\pr{\alpha_i/\gamma}-\sqrt{\alpha_i^2+\gamma^2}+\gamma+1}}{\arcsinh\pr{\gamma^{-1}}}}^2,
    \end{align*} 
    with $\gamma\leq  \sinh(d/\tau)^{-1}\wedge (4\tau)^{-1}\wedge  2^{-1/2}$.
    Then it satisfies both the continuous and discrete-time versions of \cref{ass:sufficient-conditions-psi} with constants
    $c_l=1, c_u=3$ and $\rho = \arcsinh(\gamma^{-1})^{-2}$ with respect to both the $\ell_1$ and $\ell_2$-norm.
\end{lemma}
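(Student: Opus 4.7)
The plan is to write $\psi = \phi^2$ where $\phi(\alpha) = \sum_{i=1}^d h(\alpha_i)/\arcsinh(\gamma^{-1})$ and $h(x) = x\arcsinh(x/\gamma) - \sqrt{x^2+\gamma^2}+\gamma+1$, and then verify the four conditions in turn. The useful preliminary calculation is that the $\sqrt{\cdot}$-terms in $h'(x)$ cancel exactly, so
\[
h'(x) = \arcsinh(x/\gamma), \qquad h''(x) = \frac{1}{\sqrt{\gamma^2+x^2}}>0.
\]
Hence $h$ is smooth and strictly convex with $h(0)=1$, which immediately gives $\phi \geq d/\arcsinh(\gamma^{-1})>0$, and $\nabla\psi(0) = 2\phi(0)\nabla\phi(0)=0$ because $h'(0)=0$. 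This handles \ref{ass:suff-1}; moreover, since $\phi$ is a positive convex function, $\sqrt{\psi}=\phi$ is convex, giving \ref{ass:suff-2}.

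For \ref{ass:suff-4}, the lower bound $\varphi_K(\alpha)=\norm{\alpha}_1 \leq \phi(\alpha)$ reduces coordinate-wise to the scalar inequality $h(x)\geq |x|\arcsinh(\gamma^{-1})$, which I verify by minimizing $u(x):=h(x)-x\arcsinh(\gamma^{-1})$ over $x\geq 0$: since $u'(x)=\arcsinh(x/\gamma)-\arcsinh(\gamma^{-1})$, the minimum sits at $x=1$ with value $u(1)=1+\gamma-\sqrt{1+\gamma^2}\geq 0$. For the upper bound $\phi(\alpha)\leq 3\tau$ on $\tau B_1^d$, I use $|\arcsinh(\alpha_i/\gamma)|\leq \arcsinh(\tau/\gamma)$ and $\sqrt{\alpha_i^2+\gamma^2}\geq \gamma$ to get $h(\alpha_i)\leq |\alpha_i|\arcsinh(\tau/\gamma)+1$, so $\sum_i h(\alpha_i)\leq \tau\arcsinh(\tau/\gamma)+d$. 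The two hypotheses on $\gamma$ then plug in cleanly: $\gamma\leq \sinh(d/\tau)^{-1}$ gives $d\leq \tau\arcsinh(\gamma^{-1})$, and the doubling identity $\sinh(2\arcsinh(\gamma^{-1}))=2\gamma^{-2}\sqrt{1+\gamma^2}$ combined with $\tau\gamma\leq 1/4$ gives $\arcsinh(\tau/\gamma)\leq 2\arcsinh(\gamma^{-1})$, so that $\phi(\alpha) \leq (2\tau\arcsinh(\gamma^{-1})+\tau\arcsinh(\gamma^{-1}))/\arcsinh(\gamma^{-1}) = 3\tau$.

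The main obstacle is the strong-convexity bound \ref{ass:suff-3}, since the Hessian of $\phi$ degenerates whenever any $|\alpha_i|$ is large. The plan is to expand
\[
v^{\top}\nabla^2\psi(\alpha)v = 2\langle \nabla\phi,v\rangle^2 + 2\phi(\alpha)\sum_{i=1}^d\frac{v_i^2}{\arcsinh(\gamma^{-1})\sqrt{\gamma^2+\alpha_i^2}},
\]
drop the first non-negative term, and apply Cauchy--Schwarz to get $\sum_i v_i^2/\sqrt{\gamma^2+\alpha_i^2} \geq \norm{v}_1^2/\sum_i\sqrt{\gamma^2+\alpha_i^2} \geq \norm{v}_1^2/(d\gamma+\norm{\alpha}_1)$. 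The inequality $v^\top\nabla^2\psi(\alpha)v\geq \arcsinh(\gamma^{-1})^{-2}\norm{v}_1^2$ then reduces, after substituting $\phi(\alpha)\arcsinh(\gamma^{-1})=\sum_i h(\alpha_i)$, to the coordinate-wise claim $2h(x)\geq \gamma+|x|$ for all $x$. I check this by minimizing $2h(x)-\gamma-x$ on $x\geq 0$: the stationary point lies at $x=\gamma\sinh(1/2)$ with value $2+\gamma(1-2\cosh(1/2))$, which is positive because $\gamma\leq 2^{-1/2}$ and $2\cosh(1/2)-1<2\sqrt{2}$. This gives $\rho$-strong convexity with respect to $\ell_1$, and since $\norm{v}_2\leq \norm{v}_1$ in $\RR^d$, the same bound transfers to $\ell_2$, completing the proof of both the continuous- and discrete-time versions of \cref{ass:sufficient-conditions-psi}.
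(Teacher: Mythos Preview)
Your proof is correct and follows essentially the same strategy as the paper: write $\psi=\phi^2$, compute $h'(x)=\arcsinh(x/\gamma)$ and $h''(x)=(\gamma^2+x^2)^{-1/2}$, verify \ref{ass:suff-1}--\ref{ass:suff-2} directly, expand the Hessian of $\psi$ and drop the rank-one term, and then apply Cauchy--Schwarz to reduce the $\ell_1$ strong-convexity bound to a scalar inequality. The only differences are tactical: for \ref{ass:suff-3} you reduce to the single coordinate-wise inequality $2h(x)\geq \gamma+|x|$ and verify it by a one-variable minimization, whereas the paper proves two separate global facts ($\phi(\alpha)\geq\|\alpha\|_1\vee d\arcsinh(\gamma^{-1})^{-1}$ and $\gamma\arcsinh(\gamma^{-1})\leq 1$) and combines them; and for the upper bound in \ref{ass:suff-4} you use the doubling identity $\sinh(2\arcsinh(\gamma^{-1}))=2\gamma^{-2}\sqrt{1+\gamma^2}$ to get $\arcsinh(\tau/\gamma)\leq 2\arcsinh(\gamma^{-1})$, while the paper works through the logarithmic representation of $\arcsinh$. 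Both of your reductions are slightly cleaner than the paper's, and your observation that the $\ell_2$ bound follows immediately from $\|v\|_2\leq\|v\|_1$ saves the separate calculation the paper does.
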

\begin{lemma}
\label{lem:sigmoidal-approximation}
Suppose that $\varphi_K=\norm{\cdot}_1$, and
\begin{equation*}
    \psi(\alpha)= \pr{\sum_{i=1}^d \frac{1}{\gamma}\pr{\log\pr{1+\exp\pr{-\gamma \alpha_i}}+\log\pr{1+\exp\pr{\gamma \alpha_i}}}}^2
\end{equation*}
with $\gamma \geq d\log (4)/\tau$.
Then $\psi$ satisfies the continous-time version of \cref{ass:sufficient-conditions-psi} with constants $c_l=1, c_u =2$.
\end{lemma}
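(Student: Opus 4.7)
The plan is to set $f_\gamma(x):=\frac{1}{\gamma}\bigl(\log(1+e^{-\gamma x})+\log(1+e^{\gamma x})\bigr)$ and $F(\alpha):=\sum_{i=1}^d f_\gamma(\alpha_i)$, so that $\psi=F^2$ and $\sqrt{\psi}=F$ (once we show $F\geq 0$). The four items \ref{ass:suff-1}--\ref{ass:suff-4} are essentially consequences of the fact that $f_\gamma$ is a smooth convex upper bound of $|\cdot|$ with error $\leq 2\log(2)/\gamma$. I would first compute $f_\gamma'(x)=\tanh(\gamma x/2)$ (so $f_\gamma'(0)=0$) and $f_\gamma''(x)=\tfrac{\gamma}{2}\,\operatorname{sech}^2(\gamma x/2)>0$, which immediately shows that $\psi$ is $C^2$, that $\nabla\psi(0)=2F(0)\nabla F(0)=0$, and that $f_\gamma$ is strictly convex. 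This handles \ref{ass:suff-1}.

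For \ref{ass:suff-2}, note that $f_\gamma(x)\geq f_\gamma(0)=2\log(2)/\gamma>0$, so $F>0$ and $\sqrt{\psi}=F=\sum_i f_\gamma(\alpha_i)$ is the sum of strictly convex functions, hence convex. For \ref{ass:suff-3}, strict convexity of $\psi=F^2$ follows by combining strict convexity of $F$ with monotonicity of $t\mapsto t^2$ on $(0,\infty)$: for $\alpha\neq\alpha'$ and $\lambda\in(0,1)$,
\begin{equation*}
F(\lambda\alpha+(1-\lambda)\alpha')^2 < \bigl(\lambda F(\alpha)+(1-\lambda)F(\alpha')\bigr)^2 \leq \lambda F(\alpha)^2+(1-\lambda)F(\alpha')^2,
\end{equation*}
the first inequality being strict because $F(\alpha)+F(\alpha')>0$.

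The substantive step is \ref{ass:suff-4}, for which I would establish the two-sided bound
\begin{equation*}
    |x| \;\leq\; f_\gamma(x) \;\leq\; |x| + \tfrac{2\log 2}{\gamma} \qquad \text{for all } x\in\RR.
\end{equation*}
Both sides reduce to the elementary sandwich $z^+ \leq \log(1+e^z) \leq z^+ + \log 2$ applied to $z=\pm\gamma x$: the lower bound follows by discarding one summand and using $\log(1+e^{\gamma|x|})\geq\gamma|x|$; the upper bound uses $1+e^z\leq 2 e^{z^+}$. Summing over coordinates gives $\|\alpha\|_1\leq F(\alpha)\leq \|\alpha\|_1+\tfrac{2d\log 2}{\gamma}$. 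The lower bound gives $c_l=1$. For the upper bound with $\alpha\in\tau B_1^d$, the hypothesis $\gamma\geq d\log(4)/\tau=2d\log(2)/\tau$ yields $\tfrac{2d\log 2}{\gamma}\leq \tau$, so $F(\alpha)\leq 2\tau$, giving $c_u=2$.

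I do not expect any real obstacle: the only subtlety is that this lemma is stated only for continuous-time MD, which is fine because we never claim $\rho$-strong convexity of $\psi$ (indeed, $f_\gamma''$ vanishes as $|x|\to\infty$, so no uniform strong convexity with respect to any norm is available; this is exactly why the discrete-time version is not asserted). The choice of constants is essentially forced: the $\log 2$-type slack from approximating $|\cdot|$ by a softplus-style function, distributed across $d$ coordinates, is what dictates the threshold $\gamma\geq d\log(4)/\tau$.
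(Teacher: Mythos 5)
Your proof is correct and follows essentially the same route as the paper: verify \ref{ass:suff-1}--\ref{ass:suff-4} directly for $\psi=\bigl(\sum_i f_\gamma(\alpha_i)\bigr)^2$, with the only (harmless) variations being that you obtain the key bound $|x|\leq f_\gamma(x)\leq |x|+\tfrac{\log 4}{\gamma}$ via the elementary softplus sandwich $z^+\leq\log(1+e^z)\leq z^++\log 2$ instead of the paper's monotonicity analysis of $g_\gamma(x)-|x|$, and you argue \ref{ass:suff-3} by strict convexity of $F$ composed with squaring rather than via positive definiteness of $\nabla^2\psi$ (which, if desired for \cref{def:continuous-mirror-descent}, also follows immediately from your computation $\nabla^2\psi=2(\nabla F\nabla F^\top+F\,\nabla^2 F)$ with $F>0$ and $\nabla^2F$ diagonal positive). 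The constants $c_l=1$, $c_u=2$ come out exactly as in the paper.
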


The proofs of \cref{lem:lp-for-l1,lem:l1-moreau,lem:squared-hypentropy,lem:sigmoidal-approximation} can be found in \cref{subsec:proofs-auxiliary-l1-lemmas}.

    Therefore, we may invoke \cref{thm:in-sample-risk-bound}, and the result follows by directly bounding the stationary radius:
    Suppose that $K=B_1^d$. If $\X$ is column-normalized \eqref{eq:column-normalization} and $d\geq \tau \sqrt{n}$, then
    \begin{equation*}
        \frac{\rzero^2(\X K_{3c_a\tau })}{n} \leq 4\frac{\rank{\X}}{n} \wedge 186c_a\tau\sqrt{\frac{\log (2ed/(\tau\sqrt{n}))}{n}}.
    \end{equation*}
    This follows directly from the derivations in \cite[Theorem 7]{Bellec2017localizedgaussianwidthmconvex} or from \cite{Gordon2007}.  
    Plugging this into \eqref{eq:in-sample-risk-bound} from \cref{thm:in-sample-risk-bound} we get that with probability at least $1-\exp(-0.1n)-\delta$ it holds
    \begin{align*}
        \cR(\alphatstar) &\leq \frac{4\rzero^2(\X K_{3c_a\tau})}{n}+\frac{8\log(1/\delta)}{n} \\
        &\leq  4\pr{4\frac{\rank{\X}}{n} \wedge 186c_a\tau\sqrt{\frac{\log (2ed/(\tau\sqrt{n}))}{n}}} + \frac{8\log(1/\delta)}{n} \\
        &\leq  \pr{16\frac{\rank{\X}}{n} \wedge 744c_a\tau\sqrt{\frac{\log (2ed/(\tau\sqrt{n}))}{n}}} + \frac{8\log(1/\delta)}{n}
    \end{align*}

    To bound $\rzero(3c_a\tau \X B_1^d)$ in under Gaussian design,
    we use \cite[Examples 7.5.4 and 7.5.11]{Vershynin2018}
    which yield that 
    \begin{equation*}
        \rzero^2(\X B_1^d)\leq 2 w(\X B_1^d)\leq 2\norm{\X}_2 w(B_1^d) \lesssim \norm{\X}_2 \sqrt{\log d}.
    \end{equation*}
    By \cite[Theorem 6.1]{Wainwright2019} we have that
    \begin{equation*}
        \PP\pr{\norm{\X}_2 \leq 3\sqrt{n} } \geq 1-\exp\pr{-n/2}
    \end{equation*}
    so that with probability $1-\exp(-n/2)$ it holds $\rzero^2(3c_a\tau \X B_1^d)\lesssim c_a\tau \sqrt{n\log d}$.
    Combining this with \cref{thm:in-sample-risk-bound} and \cref{rem:bound-rank}, and noting that $\rank{\X}=n$ with probability $1$, we get from a union bound that with probability $1-2\exp\pr{-0.1n}-\delta$ that
    \begin{align*}
        \cR(\alphatstar) &\leq \frac{4\rzero^2( 3c_a\tau\X B_1^d)}{n}+\frac{8\log(1/\delta)}{n} \\
        &\lesssim \pr{1\wedge c_a\tau \sqrt{\frac{\log d}{n}}}+\frac{\log(1/\delta)}{n}
    \end{align*}
    Choosing $\delta = 0.01$ so that $1-2\exp\pr{-0.1n}-\delta = 0.99-2\exp\pr{-0.1n}$, we get that
    \begin{equation*}
        \cR(\alphatstar)\lesssim 1\wedge c_a\tau\sqrt{\frac{\log d}{n}}.
    \end{equation*}
    This concludes the proof.

\begin{remark}
    As a side-remark, we also provide a direct proof that if $\X$ is column-normalized \eqref{eq:column-normalization}, then
    \begin{equation*}
        \frac{\rzero^2(\X K_{3c_a\tau})}{n} \leq 4\frac{\rank{\X}}{n} \wedge 12 c_a\tau \sqrt{\frac{\log d}{n}}.
    \end{equation*}
    Note that the bound in terms of the rank of $\X$ follows from \cref{rem:bound-rank} and \cref{subsec:proof-bound-rank}.
    We can use $\rzero^2(\X \Kt)\leq 2w(\X \Kt)$ and to further bound $w(\X \Kt)$, we can apply Hölder's inequality and get for $d\geq 2$
    \begin{equation*}
        w(\X \Kt)=\EE\br{\sup_{\alpha\in \Kt}\inner{\xi}{\X\alpha}} \leq \EE\br{\norm{\X^\top\xi}_{\infty}}\sup_{\alpha\in \Kt}\norm{\alpha}_1 = \tau\EE\br{\norm{\X^\top\xi}_{\infty}}\leq 2\tau \sqrt{n\log d}, 
    \end{equation*}
    where in the last step we used that $\X^\top\xi\sim \cN(0,\X^\top\X)$, and $\X$ is column-normalized \eqref{eq:column-normalization}, which implies that $(\X^\top\xi)_i,i\in[d]$ is Gaussian with variance $n$, so that we may use standard sub-Gaussian concentration \citep[Exercise 2.12]{Wainwright2019}, which yields the second upper bound.
    Plugging the second bound into \eqref{eq:in-sample-risk-bound} from \cref{thm:in-sample-risk-bound} we get that with probability at least $1-\exp(-0.1n)-\delta$ it holds
    \begin{align*}
        \cR(\alphatstar) &\leq \frac{4\rzero^2(\X K_{3c_a\tau})}{n}+\frac{8\log(1/\delta)}{n} \\
        &\leq  4\pr{4\frac{\rank{\X}}{n} \wedge 12c_a\tau\sqrt{\frac{\log d}{n}}} + \frac{8\log(1/\delta)}{n} \\
        &\leq  \pr{16\frac{\rank{\X}}{n} \wedge 48c_a\tau\sqrt{\frac{\log d}{n}}} + \frac{8\log(1/\delta)}{n}.
    \end{align*}
\end{remark}

\subsection{Proof of Proposition \ref{prop:M-convex-hulls}}
\label{subsec:proof-M-convex-hulls}

    We check that \cref{ass:sufficient-conditions-psi} is satisfied. We use the results from \cite[Example 4.5]{Beck2012smoothing}.
    
    \paragraph{\ref{ass:suff-1}.} The differentiability of $\psi$ is clear, with gradient and Hessian of $\psi$ given by
    \begin{align*}
        \nabla\psi(x) &= \frac{2}{\gamma} \log\pr{ m\sum_{i=1}^m\exp(\gamma\inner{a_i}{x})} \mu(x) + \rho x \\
        \nabla^2\psi(x) &= 2\log\pr{m\sum_{i=1}^m\exp(\gamma\inner{a_i}{x})}\pr{\sum_{i=1}^m p_i(x)a_ia_i^\top-\mu(x)\mu(x)^\top} +2\mu(x)\mu(x)^\top+ \rho\I_d
    \end{align*}
    where we defined the quantities
    \begin{equation*}
        p_i(x) = \frac{\exp(\gamma\inner{a_i}{x})}{\sum_{j=1}^m \exp(\gamma\inner{a_j}{x})} \quad \text{and} \quad \mu(x) = \sum_{i=1}^m p_i(x) a_i
    \end{equation*}
    Therefore, since $p_i(0)=1/m$ we have $\mu(0)=\frac{1}{m}\sum_{i=1}^m a_i $ which we assumed to be zero, and hence $\nabla \psi(0)=0$.

    \paragraph{\ref{ass:suff-2}.} The convexity of $\sqrt{\psi}$ follows as previously because $\sqrt{\psi}$ is the Euclidean norm of $\sqrt{\frac{\rho}{2}}\norm{x}_2$ and $\frac{1}{\gamma}\log \pr{m\sum_{i=1}^m \exp\pr{\gamma \inner{a_i}{x}}}$, of which the Hessian 
    is given by
    \begin{equation*}
        \pr{\sum_{i=1}^m p_i(x)a_ia_i^\top-\mu(x)\mu(x)^\top}
    \end{equation*}
    which is clearly positive semi-definite for every $x\in\RR^d$ (as it is the covariance matrix of the discrete distribution over $a_i$ with probabilities $p_i(x)$) and thus convex.

    \paragraph{\ref{ass:suff-3}.} The $\rho$-strong convexity follows as previously because
    \begin{equation*}
        x\mapsto \psi(x) - \frac{\rho}{2}\norm{x}_2^2 = \frac{1}{\gamma^2}\log^2 \pr{m\sum_{i=1}^m \exp\pr{\gamma \inner{a_i}{x}}}
    \end{equation*}
    is convex (which can be seen by the Hessian above).

    \paragraph{\ref{ass:suff-4}.} By \cite[Example 4.5]{Beck2012smoothing}, we know that for all $x\in\RR^d$
    \begin{equation*}
        \varphi_K(x)-\frac{1}{\gamma}\log m \leq \frac{1}{\gamma}\log \pr{\sum_{i=1}^m \exp\pr{\gamma \inner{a_i}{x}}} \leq \varphi_K(x) +\frac{1}{\gamma}\log m.
    \end{equation*}
    It follows that
    \begin{equation*}
        \varphi_K(x) \leq \frac{1}{\gamma}\log \pr{m\sum_{i=1}^m \exp\pr{\gamma \inner{a_i}{x}}} \leq \varphi_K(x) +\frac{2}{\gamma}\log m.
    \end{equation*}
    and therefore, for all $\alpha\in \RR^d$ and $\alpha'\in \tau K$ with $\rho = 2/ \max_{i\in[M]} \norm{k_i}_2^2$ and $\gamma \geq 2\log (m)/\tau$
    \begin{align*}
        \varphi_K(\alpha) & \leq  \frac{1}{\gamma}\log \pr{m\sum_{i=1}^m \exp\pr{\gamma \inner{a_i}{\alpha}}} \leq c_l\sqrt{\psi(\alpha)} \\
        \sqrt{\psi(\alpha')} &= \sqrt{\frac{1}{\gamma^2}\log^2 \pr{m\sum_{i=1}^m \exp\pr{\gamma \inner{a_i}{\alpha'}}} +\frac{\rho}{2}\norm{\alpha'}_2^2} \leq \varphi_K(\alpha')+\frac{2}{\gamma}\log m +\tau \leq 3\tau = c_u\tau
    \end{align*}
    with $c_l=1$, $c_u = 3$.

    Using \cite[Proposition 5]{Bellec2017localizedgaussianwidthmconvex} we get that if $M\geq \tau \phi \sqrt{n}$, then
    \begin{equation*}
        \rzero^2(\X \Kt) \leq 31 \tau \phi \sqrt{n}\sqrt{\log\pr{\frac{eM}{\tau\phi \sqrt{n}}}}.
    \end{equation*}
    Hence, from \cref{thm:in-sample-risk-bound} and \cref{rem:bound-rank}, we know that with probability $1-\exp(-0.1n)-\delta$ it holds that
    \begin{align*}
        \cR(\alphatstar) &\leq \frac{4\rzero^2(\X K_{3c_a\tau})}{n}+\frac{8\log(1/\delta)}{n} \\
        &\leq 4\pr{\frac{4\rank{\X}}{n}\wedge 93 c_a \tau \phi \sqrt{\frac{\log\pr{eM/(\tau\phi \sqrt{n})}}{n}}} +\frac{8\log(1/\delta)}{n} \\
        &=\pr{\frac{16\rank{\X}}{n}\wedge 372 c_a \tau \phi \sqrt{\frac{\log\pr{eM/(\tau\phi \sqrt{n})}}{n}}} +\frac{8\log(1/\delta)}{n}.
    \end{align*}
    Choosing $\delta=0.01$ yields the result.
    This concludes the proof.

\subsection{Proofs of the Auxiliary Lemmata for Theorem \ref{thm:l1-all-combined}}
\label{subsec:proofs-auxiliary-l1-lemmas}

\subsubsection{Proof of Lemma \ref{lem:lp-for-l1}}

    We check each condition from \cref{ass:sufficient-conditions-psi}. We already proved \ref{ass:suff-1},\ref{ass:suff-2} and \ref{ass:suff-3} in the proof of \cref{prop:lp-norm-squared}.

    \paragraph{\ref{ass:suff-4}.} We use the norm inequalities $\norm{\cdot}_p \leq \norm{\cdot}_1 \leq d^{1-1/p}\norm{\cdot}_p$ and the fact that if $1<p\leq 1+1/\log(d)$ 
    \begin{equation*}
        d^{1-1/p}\leq d^{\frac{1}{1+\log d}}=\exp\pr{\frac{\log d}{1+\log d}} \leq \exp(1)=e.
    \end{equation*}
    which implies that for all $\alpha\in\RR^d$ and $\alpha'\in \tau B_1^d$
    \begin{align*}
        \varphi_K(\alpha)&=\norm{\alpha}_1\leq e \norm{\alpha}_p = c_l\sqrt{\psi(\alpha)} \\
        \sqrt{\psi(\alpha')} &=\norm{\alpha'}_p \leq \norm{\alpha'}_1 \leq c_u \tau
    \end{align*}
    with $c_l=e,c_u=1$.
This concludes the proof.

\subsubsection{Proof of Lemma \ref{lem:l1-moreau}}

    We first note that the potential in \cref{lem:l1-moreau} is indeed given by the Moreau envelope, since by, e.g., \cite[Example 4.2]{Beck2012smoothing} it holds
    \begin{equation*}
        (\cM_\lambda\norm{\cdot}_1)(\alpha)=\sum_{i=1}^d h_\lambda(\alpha_i) \quad \text{with} \quad h_\lambda (x)= \begin{cases} \frac{x^2}{2\lambda} & \abs{x}\leq \lambda, \\ \abs{x}-\frac{\lambda}{2} & \abs{x}> \lambda .\end{cases}
    \end{equation*}
    It follows that $\cM_\lambda \norm{\cdot}_1+\frac{d\lambda}{2}\geq \norm{\cdot}_1\geq 0$. We check that \cref{ass:sufficient-conditions-psi} is indeed satisfied.

    \paragraph{\ref{ass:suff-1}.} First note that the function is differentiable, as Moreau envelopes are continuously differentiable \citep[\S 3.1]{Parikh2014proximal} or \citep[\S 4.2]{Beck2012smoothing}. Moreover, a calculation shows that
    \begin{equation*}
        \nabla\psi(\alpha) = 2\pr{\sum_{i=1}^d h_\lambda(x)+\frac{d\lambda}{2}}\pr{h'_\lambda(\alpha_1),\dots,h'_\lambda(\alpha_d)} + \rho \alpha \quad \text{with} \quad h'_\lambda(x)=
        \begin{cases}
            \frac{x}{\lambda} & \abs{x}\leq \lambda \\
            \sign{x} & \abs{x}\geq \lambda
        \end{cases}
    \end{equation*}
    and so $\nabla\psi(0)=0$.

    \paragraph{\ref{ass:suff-2}.} The Moreau envelope $\cM_\lambda \norm{\cdot}_1+\frac{d\lambda}{2}$ is convex \citep[\S 4.2]{Beck2012smoothing} and non-negative, so that
    \begin{equation*}
        \sqrt{\psi(\alpha)} = \sqrt{\pr{(\cM_\lambda \norm{\cdot}_1)(\alpha)+\frac{d\lambda}{2}}^2+\frac{\rho}{2}\norm{\alpha}_2^2}
    \end{equation*}
    is convex, as it is the Euclidean norm of two non-negative convex functions.

    \paragraph{\ref{ass:suff-3}.} E.g., by \cite{Nikodem2011characterization}, the $\rho$-strong convexity of $\psi$ with respect to $\ell_2$-norm holds if and only if
    \begin{equation*}
        \psi-\frac{\rho}{2}\norm{\alpha}_2^2 = \pr{(\cM_\lambda \norm{\cdot}_1)(\alpha)+\frac{d\lambda}{2}}^2
    \end{equation*}
    is convex. Since $\cM_\lambda \norm{\cdot}_1$ is convex \citep[\S 4.2]{Beck2012smoothing}, $\cM_\lambda \norm{\cdot}_1+\frac{d\lambda}{2}\geq 0$ and squaring is non-decreasing, this is the case.

    \paragraph{\ref{ass:suff-4}.} Finally, we have that for all $\alpha\in\RR^d$, $\alpha'\in \tau B_1^d$ and $\rho=2$, $\lambda\leq 2 \tau /d$ that
    \begin{align*}
        \varphi_K(\alpha)&= \norm{\alpha}_1 \leq (\cM_\lambda \norm{\cdot}_1)(\alpha)+\frac{d\lambda}{2}\leq \sqrt{\pr{(\cM_\lambda \norm{\cdot}_1)(\alpha)+\frac{d\lambda}{2}}^2+\frac{\rho}{2}\norm{\alpha}_2^2} =c_l\sqrt{\psi(\alpha)}, \\
        \sqrt{\psi(\alpha')}&=\sqrt{\pr{(\cM_\lambda \norm{\cdot}_1)(\alpha')+\frac{d\lambda}{2}}^2+\frac{\rho}{2}\norm{\alpha'}_2^2} \leq \sqrt{\pr{\norm{\alpha'}_1+\tau}^2+\norm{\alpha'}_2^2} \leq \sqrt{5}\tau = c_u\tau,
    \end{align*}
    with $c_l=1, c_u=\sqrt{5}$. Here we used that for any Moreau envelope, $\cM_\lambda f \leq f$ by definition.

\subsubsection{Proof of Lemma \ref{lem:squared-hypentropy}}

Recall that we defined the function
\begin{align*}
\psi(\alpha)&=\pr{\arcsinh(\gamma^{-1})^{-1}\sum_{i=1}^d \alpha_i \arcsinh(\alpha_i/\gamma)-\sqrt{\alpha_i^2+\gamma^2}+\gamma+1}^2\\
    &=\pr{\sum_{i=1}^d g_\gamma(\alpha_i)}^2 \\
    &=f^2(\alpha)
\end{align*}
where we now introduced the new notation
\begin{equation*}
    f=\sqrt{\psi} \quad \text{and} \quad g_\gamma(x):= \arcsinh\pr{\gamma^{-1}}^{-1}\pr{ x\arcsinh\pr{x/\gamma}-\sqrt{x^2+\gamma^2}+\gamma+1}.
\end{equation*}

We prove that $\psi$ satisfies \cref{ass:sufficient-conditions-psi}.

\paragraph{\ref{ass:suff-1}.} $\psi$ is twice continuously differentiable with gradient
\begin{equation*}
    \nabla \psi(\alpha)= 2\arcsinh(\gamma^{-1})^{-2}f(\alpha)
    \begin{pmatrix}
        \arcsinh(\alpha_1/\gamma)\\
        \vdots \\
        \arcsinh(\alpha_d/\gamma)
    \end{pmatrix} \implies \nabla \psi(0)=0
\end{equation*}
This follows by straight-forward calculations \citep{Ghai2020}.

\paragraph{\ref{ass:suff-2}.}
$\sqrt{\psi}=f$ is (strictly) convex, as a simple calculation \citep{Ghai2020} shows that the Hessian of $f$ is given by
\begin{equation*}
    \nabla^2 f(x) = \arcsinh(\gamma^{-1})^{-1} \diag\pr{\frac{1}{\sqrt{x_1^2+\gamma^2}} \dots \frac{1}{\sqrt{x_d^2+\gamma^2}}}
\end{equation*}
which is positive definite everywhere.

To proceed, we state two useful facts.
\begin{itemize}
    \item 
\emph{Fact 1: For all $\gamma>0$ and $\alpha\in \RR^d$, $f(\alpha)\geq \norm{\alpha}_1\vee d \arcsinh (\gamma^{-1})^{-1}$.}

We first show that $f(\alpha)\geq \norm{\alpha}_1$. It suffices to show that $h_\gamma(x)=g_\gamma(x)-\abs{x}\geq 0$ for all $x\in\RR$. First of all, $h_\gamma(0)=\arcsinh\pr{\gamma^{-1}}^{-1}>0$ for all $\gamma>0$, and due to the symmetry of $h_\gamma$, it therefore suffices to show that $ h_\gamma(x)\geq 0$ for all $x\in(0,\infty)$. On this interval, $x\mapsto h_\gamma(x)$ is twice differentiable with
\begin{equation*}
    \frac{dh_\gamma}{dx}(x)=\arcsinh\pr{\gamma^{-1}}^{-1}\arcsinh\pr{x/\gamma}-1 \quad \text{and}\quad \frac{d^2h_\gamma}{(dx)^2}(x)= \frac{\arcsinh\pr{\gamma^{-1}}^{-1}}{\sqrt{x^2+\gamma^2}}.
\end{equation*}
Since $\frac{d^2h_\gamma}{(dx)^2}>0$ for all $x\in(0,\infty)$, we know that $h_\gamma$ is strictly convex on $(0,\infty)$. Therefore, $h_\gamma$ has a unique global minimum on $(0,\infty)$, as $\lim_{x\to\infty}h_\gamma(x)=\infty$ and $\frac{dh_\gamma}{dx}(x)<0$ for small enough $x>0$. The first-order condition of optimality $\frac{dh_\gamma}{dx}(x^\star)=0$ yields $x^\star=1$, and thus we know that for all $x\in(0,\infty)$
\begin{align*}
    h_\gamma(x)&\geq h_\gamma(x^\star)\\
    &=\arcsinh\pr{\gamma^{-1}}^{-1}\pr{ \arcsinh\pr{1/\gamma}-\sqrt{1^2+\gamma^2}+\gamma+1}-1\\
    &=\frac{\gamma+1-\sqrt{1+\gamma^2}}{\arcsinh(1/\gamma)}>0.
\end{align*}
hence, $f(\alpha)\geq \norm{\alpha}_1$.
Now, we can also see that for all $x\in\RR$
\begin{equation*}
    \frac{dg_\gamma}{dx}(x)=\arcsinh\pr{\gamma^{-1}}^{-1}\arcsinh\pr{x/\gamma} \quad \text{and} \quad \frac{d^2h_\gamma}{(dx)^2}(x)= \frac{\arcsinh\pr{\gamma^{-1}}^{-1}}{\sqrt{x^2+\gamma^2}}.
\end{equation*}
Hence, $g_\gamma$ has a unique global minimum at $x^\star = 0$, and thus for all $x\in\RR$, $g_\gamma(x)\geq g_\gamma(0) = \arcsinh (\gamma^{-1})^{-1}$, and thus
$f(\alpha)=\sum_{i=1}^d g_\gamma(\alpha_i)\geq d \arcsinh (\gamma^{-1})^{-1}$.

\item \emph{Fact 2: $\gamma \arcsinh(1/\gamma)\leq 1$ for all $\gamma>0$.}

    This follows by observing that $\frac{d^2}{(dx)^2} x\arcsinh(1/x)=-\sqrt{1/x^2+1}/(x^2+1)^2 <0$, which means its concave, and $\frac{d}{dx} x\arcsinh(1/x) =\arcsinh(1/x)-1/(x\sqrt{1/x^2+1})>0$, which means it has no maximizer, and
    \begin{equation*}
        \lim_{x\to 0} x\arcsinh(1/x) = 0, \quad \lim_{x\to \infty} x\arcsinh(1/x) = 1.
    \end{equation*}
\end{itemize}

\paragraph{\ref{ass:suff-3}.} We follow similar arguments used in \cite{Ghai2020} and will apply \cite[Theorem 3]{Yu2015strongconvexity} to show that $\psi$ is $\rho_{\gamma}$-strongly convex on $\RR^d$ with respect to $\norm{\cdot}_1$ and $\norm{\cdot}_2$, where $\rho_{\gamma} = \arcsinh(\gamma^{-1})^{-2}$.

First note that the Hessian of $\psi$ is given by
\begin{equation*}
    \nabla^2 \psi(x)=2\arcsinh(\gamma^{-1})^{-2}\br{
    \begin{pmatrix}
        \arcsinh(x_1/\gamma)\\
        \vdots \\
        \arcsinh(x_d/\gamma)
    \end{pmatrix}
    \begin{pmatrix}
        \arcsinh(x_1/\gamma)\\
        \vdots \\
        \arcsinh(x_d/\gamma)
    \end{pmatrix}^\top
    +f(x)
    \begin{pmatrix}
        \frac{1}{\sqrt{x_1^2+\gamma^2}} & & \\
        & \ddots & \\
        & & \frac{1}{\sqrt{x_d^2+\gamma^2}}
    \end{pmatrix}
    } 
\end{equation*} 
and hence we have that for any $y\in\RR^d$
\begin{equation*}
    y^\top \nabla^2\psi(x)y \geq 2\arcsinh(\gamma^{-1})^{-2} f(x)\sum_{i=1}^d \frac{y_i^2}{\sqrt{x_i^2+\gamma^2}}.
\end{equation*}
Using Fact 1 that $f(x)\geq \norm{x}_1 \vee d\arcsinh(\gamma^{-1})^{-1}$, we can bound
\begin{align*}
    \inf_{x,\norm{y}_1=1}f(x)\sum_{i=1}^d \frac{y_i^2}{\sqrt{x_i^2+\gamma^2}}&=\inf_{x,\norm{y}_1=1}\frac{f(x)}{\sum_{i=1}^d\sqrt{x_i^2+\gamma^2}}\pr{\sum_{i=1}^d \frac{y_i^2}{\sqrt{x_i^2+\gamma^2}}}\pr{\sum_{i=1}^d\sqrt{x_i^2+\gamma^2}} \\
    &\geq \inf_{x,\norm{y}_1=1} \frac{f(x)}{\sum_{i=1}^d\sqrt{x_i^2+\gamma^2}}\pr{\sum_{i=1}^d \sqrt{y_i^2}}\\
    &\geq \inf_{x} \frac{\norm{x}_1\vee d\arcsinh(\gamma^{-1})^{-1}}{\sum_{i=1}^d\sqrt{x_i^2+\gamma^2}} \\
    &\geq \inf_x \frac{\norm{x}_1\vee d\arcsinh(\gamma^{-1})^{-1}}{\norm{x}_1+d\gamma}\\
    &=\frac{ d\arcsinh(\gamma^{-1})^{-1}}{d\arcsinh(\gamma^{-1})^{-1}+d\gamma} =\frac{1}{1+\gamma \arcsinh(\gamma^{-1})} \geq  \frac{1}{2}
\end{align*}
where we used Fact 2 that $\gamma \arcsinh(1/\gamma)\leq 1$ for all $\gamma>0$. Therefore, $\inf_{x,\norm{y}_1=1} y^\top \nabla^2\psi(x)y \geq \arcsinh(\gamma^{-1})^{-2}$
which implies $\rho_\gamma$-strong convexity with respect to the $\ell_1$-norm \citep[Theorem 3]{Yu2015strongconvexity}.
With a similar argument, we have that
\begin{align*}
    \inf_{x,\norm{y}_2=1}f(x)\sum_{i=1}^d \frac{y_i^2}{\sqrt{x_i^2+\gamma^2}}&\geq \inf_{x}\frac{f(x)}{\max_i\sqrt{x_i^2+\gamma^2}} \\
    &\geq \inf_{x}\frac{\norm{x}_1\vee d\arcsinh(\gamma^{-1})^{-1}}{\norm{x}_1+\gamma} \\
    &=\frac{1}{1+\gamma \arcsinh(\gamma^{-1})/d} \geq \frac{1}{2}
\end{align*}
and therefore $\inf_{x,\norm{y}_2=1} y^\top \nabla^2\psi(x)y \geq \arcsinh(\gamma^{-1})^{-2}$.

\paragraph{\ref{ass:suff-4}.}
From Fact 1 it immediately follows that for any $\alpha\in\RR^d$ it holds
\begin{equation*}
    \varphi_K(\alpha)=\norm{\alpha}_1 \leq f(\alpha)=\sqrt{\psi(\alpha)}
\end{equation*}
so that $c_l=1$.

It remains to show the upper bound. For $\alpha'\in \tau B_1^d$ we have that $\abs{\alpha'_i}\leq \tau$ for all $i$. Hence,
\begin{align*}
    \sqrt{\psi(\alpha')}=f(\alpha')&=\sum_{i=1}^d \arcsinh\pr{\gamma^{-1}}^{-1}\pr{ \alpha'_i\arcsinh\pr{\alpha'_i/\gamma}-\sqrt{(\alpha'_i)^2+\gamma^2}+\gamma+1} \\
    &\leq \sum_{i=1}^d \arcsinh\pr{\gamma^{-1}}^{-1}\pr{ \alpha'_i\arcsinh\pr{\alpha'_i/\gamma}+1}\\
    &=\sum_{i=1}^d \arcsinh\pr{\gamma^{-1}}^{-1}\pr{ \abs{\alpha'_i}\log\pr{\frac{1}{\gamma}\pr{\sqrt{(\alpha'_i)^2+\gamma^2}+\abs{\alpha'_i}}}+1} \\
    &\leq \sum_{i=1}^d \arcsinh\pr{\gamma^{-1}}^{-1}\pr{ \abs{\alpha'_i}\log\pr{\frac{1}{\gamma}\pr{\sqrt{\tau^2+\gamma^2}+\tau}}+1} \\
    &= \norm{\alpha'}_1 \arcsinh\pr{\gamma^{-1}}^{-1}\log\pr{\frac{1}{\gamma}\pr{\sqrt{\tau^2+\gamma^2}+\tau}}+d\arcsinh\pr{\gamma^{-1}}^{-1} \\
    &\leq \tau \log\pr{\gamma^{-1}}^{-1}\log\pr{\frac{2\tau}{\gamma}+1}+d\arcsinh\pr{\gamma^{-1}}^{-1}
\end{align*}

For the first term, we notice
\begin{align*}
    \gamma\leq \frac{1}{\sqrt{2}}\wedge\frac{1}{4\tau}\implies \frac{2\tau}{\gamma}+1\leq \frac{1}{\gamma^2} \implies \log\pr{\frac{2\tau}{\gamma}+1} \leq 2\log(\gamma^{-1})  
\end{align*}
so we can bound the first term as $2\tau$. The second term can be bounded as
\begin{equation*}
    \gamma \leq \sinh(d/\tau)^{-1}\implies d\arcsinh(\gamma^{-1})^{-1}\leq d\arcsinh(\sinh(d/\tau))^{-1}=\tau
\end{equation*}
So overall, we get that $\sqrt{\psi(\alpha')}=f(\alpha')\leq 3\tau$, so we may set $c_u=3$.
This concludes the proof.

\subsubsection{Proof of Lemma \ref{lem:sigmoidal-approximation}}

We show that \cref{ass:sufficient-conditions-psi} holds by going through each point. We define the function
\begin{equation*}
    g_\gamma(x)=\frac{1}{\gamma}\pr{\log(1+\exp(-\gamma x))+\log(1+\exp(\gamma x))}
\end{equation*}
so that $\psi(\alpha) = \pr{\sum_{i=1}^d g_\gamma(\alpha_i)}^2$.

\paragraph{\ref{ass:suff-1}.} The fact that the potential is twice differentiable is clear, and the gradient is given by \citep{Schmidt2007L1approximation}
\begin{equation*}
    \nabla \psi(\alpha) =2\pr{\sum_{i=1}^d g_\gamma(\alpha_i)}\pr{g_\gamma'(\alpha_1),\dots,g_\gamma'(\alpha_d)}^\top \quad \text{with} \quad g_\gamma'(x)= \frac{1}{1+\exp\pr{-\gamma x}}-\frac{1}{1+\exp\pr{\gamma x}}
\end{equation*}
so that $\nabla \psi(0)=0$.

\paragraph{\ref{ass:suff-2}.} We have by \cite{Schmidt2007L1approximation} that
\begin{equation*}
    \nabla^2 \sqrt{\psi(\alpha)} = \diag\pr{\frac{d^2}{(dx)^2}g_\gamma(\alpha_1),\dots, \frac{d^2}{(dx)^2}g_\gamma(\alpha_d)} \quad \text{with} \quad \frac{d^2}{(dx)^2}g_\gamma(x) = \frac{2\gamma \exp(\gamma x)}{\pr{1+\exp(\gamma x)}^2},
\end{equation*}
and so all its eigenvalues (strictly) greater than zero everywhere, implying that $\sqrt{\psi}$ is convex.

\paragraph{\ref{ass:suff-3}.} Strict convexity of $\psi$ follows analogously, as
\begin{equation*}
    \nabla^2 \psi(\alpha) = 2\pr{\nabla\sqrt{\psi(\alpha)}\nabla\sqrt{\psi(\alpha)}^\top + \sqrt{\psi(\alpha)}\nabla^2\sqrt{\psi(\alpha)}}
\end{equation*}
also has all its eigenvalues strictly greater than zero, implying that $\psi$ is strictly convex.

\paragraph{\ref{ass:suff-4}.}
We can easily show that this approximation satisfies the conditions. First, notice that it holds $g_\gamma(0)=\log(4)/\gamma$, and for $x>0$
\begin{equation*}
    \frac{d}{dx}\pr{g_\gamma(x)-\abs{x}}=\frac{1}{1+\exp\pr{-\gamma x}}-\frac{1}{1+\exp\pr{\gamma x}}-1<0
\end{equation*}
and by symmetry of $g_\gamma$ for $x<0$ we have $\frac{d}{dx}\pr{g_\gamma(x)-\abs{x}}>0$. Combining this with $\lim_{x\to\infty}g_\gamma(x)-\abs{x} = 0$ and $\lim_{x\to-\infty}g_\gamma(x)-\abs{x} = 0$ yields $g_\gamma(x)\in[\abs{x},\abs{x}+\log(4)/\gamma]$ for all $x\in\RR$.
Therefore, for all $\alpha\in \RR^d$ and all $\alpha'\in \tau B_1^d$, we have for $\gamma \geq d\log (4)/\tau$
\begin{align*}
    \varphi_K(\alpha)&=\norm{\alpha}_1 \leq \sum_{i=1}^dg_\gamma(\alpha_i)=c_l\sqrt{\psi} \\
    \sqrt{\psi(\alpha')}&= \sum_{i=1}^dg_\gamma(\alpha_i') \leq \sum_{i=1}^d \pr{\abs{\alpha_i'} + \frac{\log 4}{\gamma}} \leq \norm{\alpha'}_1+\frac{d\log 4}{\gamma} \leq c_u\tau
\end{align*}
with $c_l=1$ and $c_u=2$. This concludes the proof.

\end{document}